\documentclass{article}

\usepackage{preprint}
\usepackage[utf8]{inputenc} % allow utf-8 input
\usepackage[T1]{fontenc}    % use 8-bit T1 fonts
\usepackage{hyperref}       % hyperlinks
\usepackage{url}            % simple URL typesetting
\usepackage{booktabs}       % professional-quality tables
\usepackage{amsfonts}       % blackboard math symbols
\usepackage{nicefrac}       % compact symbols for 1/2, etc.
\usepackage{microtype}      % microtypography
\usepackage{xcolor}         % colors

\usepackage{wrapfig,booktabs,amsmath}

\usepackage{threeparttable}
\usepackage{multirow}
\usepackage{enumitem}
\newcommand{\bst}[1]{{\textbf{\textcolor{red}{#1}}}}
\newcommand{\subbst}[1]{\textcolor{blue}{\underline{{#1}}}}
\usepackage{graphicx}
\usepackage{tcolorbox}
\usepackage{colortbl}
\usepackage{color}
\usepackage[table]{xcolor}
\definecolor{tabhighlight}{HTML}{e5e5e5}
\usepackage{amsmath}
\newcommand{\scalea}[1]{\scalebox{0.78}{#1}}
\newcommand{\scaleb}[1]{\scalebox{0.8}{#1}}
\definecolor{fbApp}{HTML}{DDEBFF}
\newcommand{\rowc}{\rowcolor{fbApp}}

\usepackage{amssymb}
\usepackage{bbding}
\usepackage{subfigure}

\newtheorem{theorem}{Theorem}[section]
\newtheorem{proposition}[theorem]{Proposition}

\usepackage{pifont}

\newcommand{\best}[1]{{\textbf{\textcolor{red}{#1}}}}
\newcommand{\second}[1]{\textcolor{blue}{\underline{{#1}}}}

\PassOptionsToPackage{numbers, compress}{natbib}

\def \time{CvLoss}

\title{Multivariate Time Series Forecasting needs \\ Cross Variable Loss}

\author{%
  \textbf{Kuiye Ding}$^{1}$, \textbf{Yifan Hu}$^{2}$,
  \textbf{Hanchen Wang}$^{1}$, \textbf{Hao Xue}$^{3}$ \\[2pt]
  \normalfont
  $^{1}$ University of Technology Sydney, \ $^{2}$ Tsinghua University, \\
  $^{3}$ The Hong Kong University of Science and Technology (Guangzhou) \\[2pt]
  \texttt{kuiye.ding@student.uts.edu.au, \ hanchen.wang@uts.edu.au} \\
  \texttt{huyf25@mails.tsinghua.edu.cn, \ haoxue@hkust-gz.edu.cn}
}

\begin{document}

\maketitle

% \begin{abstract}
%   Existing multivariate forecasting methods focus on cross-variable dependencies in historical observations. However, dependencies among future values are much less studied, although they are also crucial for accurate multivariate forecasting. Code is available at \url{https://anonymous.4open.science/r/CvLoss}.
% \end{abstract}

% \begin{abstract}
%   Multivariate time series forecasting is challenged by dependencies in both historical observations and future values. Existing methods mainly focus on modeling cross-variable dependencies in the input sequence, while the widely used Direct Forecasting (DF) paradigm still optimizes future variables with point-wise objectives. This decoupled objective overlooks the co-evolution of future variables and leads to a structural gap when prediction residuals are correlated across variables and time.
%   To address this issue, we propose Cross-Variable Loss (CvLoss), a simple plug-in objective that regularizes the residual structure across variables. CvLoss constructs a cross-variable graph over forecast patches and penalizes inconsistent edge-wise residual differences, covering both synchronous and lagged interactions. Experiments on standard long- and short-term forecasting benchmarks show that CvLoss consistently improves competitive forecasting models and is compatible with various backbones and learning objectives.
%   Code is available at \url{https://anonymous.4open.science/r/CvLoss}.
% \end{abstract}

\begin{abstract}
  Multivariate time series forecasting presents unique challenges because future variables often co-evolve under shared system dynamics. While existing studies mainly focus on cross-variable dependencies in historical observations, dependencies among future values are much less explored. Specifically, modern forecasting models largely follow the Direct Forecasting (DF) paradigm, generating multi-step forecasts with point-wise objectives that do not explicitly constrain cross-variable structure. In this work, we show that the DF objective is mismatched in the presence of cross-variable and lagged dependencies, revealing an objective gap. To address this issue, we propose \textbf{C}ross-\textbf{V}ariable \textbf{Loss} (CvLoss), a plug-in structural regularizer that constrains forecast residuals on a cross-variable graph. CvLoss penalizes inconsistent edge-wise residual differences over forecast patches, encouraging consistency across both synchronous and asynchronous interactions. Our experiments show that CvLoss consistently improves competitive forecasting models, outperforms representative learning objectives, and is compatible with a variety of forecasting backbones.
  % Code is available at \url{https://anonymous.4open.science/r/CvLoss}.
\end{abstract}

\section{Introduction}

Multivariate time series forecasting (MTSF) is a fundamental task across diverse domains, where the core challenge lies in accurately capturing complex synergistic relationships among variables. To model these temporal dynamics, recent pioneering works~\citep{itransformer, LIFT, qiu2025duet, hu2025timefilter, ding2025dualsg} have predominantly focused their efforts on extracting representations from {historical} input features, a paradigm known as input-side channel dependence modeling. In stark contrast, the intrinsic correlations among future predictions at the output-side have been largely overlooked. Current mainstream Direct Forecasting (DF) paradigms~\citep{wang2025fredf} typically decouple the multivariate outputs forcibly, projecting the unified historical embeddings independently into future time steps, thereby degrading MTSF into a collection of isolated scalar prediction problems. This asymmetry heavily prioritizes the input while neglecting the output. Consequently, it not only severs the physical co-evolution among future variables but also induces severe structural errors.

On one hand, blindly decoupling prediction targets directly violates the intrinsic physical and operational laws of multivariate systems. In the real world, time series are rarely generated independently; rather, they are governed by shared system dynamics and exhibit strong co-evolutionary characteristics. For instance, in quantitative financial markets~\citep{hu2025fintsb}, the returns of correlated assets often display highly synchronous surges, plunges, or hedging effects when subjected to macroeconomic shocks~\citep{hu2025finmamba}. Similarly, in traffic flow forecasting, the traffic volumes recorded by hundreds of sensors within an urban network are driven by shared meteorological conditions (e.g., sudden rainstorms causing widespread congestion), and synchronized daily commuting patterns. This results in a profound co-evolution of future traffic dynamics, as illustrated in Figure~\ref{fig:idea_pems03}. If a model isolates these variables at the output stage, the predicted curves for individual variables might appear numerically well-fitted. However, when combined into a joint multivariate forecast, they frequently produce contradictory results that violate true physical constraints and market rules.

% Similarly, in individual user load forecasting, the electricity consumption behaviors of hundreds of clients within a specific region are driven by shared meteorological conditions (e.g., air conditioners activating simultaneously due to extreme heat), similar daily routines, and time-of-use pricing policies. This results in a profound co-evolution of future electricity fluctuations, as illustrated in Figure~\ref{fig:idea}. 

Furthermore, overlooking these synergistic relationships induces critical structural errors at the optimization level. Modern MTSF models rely heavily on point-wise loss functions, such as Mean Squared Error (MSE)~\citep{wang2024tssurvey, qiu2026survey}, for optimization. From the perspective of probabilistic inference, this point-wise optimization is fundamentally predicated on a highly restrictive assumption: the prediction residuals across all variables and time steps follow completely independent and isotropic Gaussian distributions (i.e., possessing a diagonal covariance matrix). However, given the aforementioned co-evolutionary phenomena, the true spatio-temporal covariance matrix of the residuals inevitably contains numerous non-zero off-diagonal elements. This creates an unbridgeable theoretical divide, termed an Objective Gap, between existing standard loss functions and the true joint spatio-temporal distribution of the data. Consequently, even if two models achieve remarkably similar point-wise MSE, their cross-variable covariance structures might differ drastically, completely failing to capture the joint geometric topology of the original data.

To overcome these dual physical and mathematical deficiencies, the optimization objective must transcend the singular pursuit of absolute point-wise accuracy and explicitly constrain the {relative structural consistency} among variables. This implies that we must capture not only the concurrent interactions (synchronous effects) between variables at the same time step but also the lagged propagation of system perturbations as time evolves (asynchronous effects). To elegantly unify these two phenomena, we introduce a graph-structured perspective, as graphs are inherently suited for depicting complex, non-uniform pair-wise interactions among multivariate data. Specifically, we propose a novel graph-based structural regularization scheme termed Cross-Variable Loss (CvLoss). By constructing a cross-variable graph, CvLoss connects different variables at identical time steps to capture synchronous concurrent interactions, while also linking variables across different steps to model asynchronous lagged effects. Essentially, CvLoss imposes an $\ell_1$-norm Graph Total Variation (Graph TV) regularization on the prediction residual field, compelling the model to strictly adhere to the underlying relative structural consistency while minimizing absolute errors. Furthermore, as a plug-and-play regularizer, CvLoss seamlessly integrates into various mainstream forecasting architectures. Our introduced edge-sampling mechanism significantly reduces computational complexity during training, and crucially, the regularizer imposes zero overhead during inference, ensuring testing speed remains completely unaffected while maintaining extremely high predictive accuracy.

Our main contributions are summarized as follows:
\begin{itemize}[leftmargin=*,noitemsep,nosep]
    \item We reveal a critical limitation in current MTSF paradigms, demonstrating that standard point-wise learning objectives neglect inherent inter-channel error dependencies among future predictions.
    \item We propose a novel Cross-Variable Loss acting as a structural regularizer, explicitly constraining the inter-channel relationships of predicted values via a residual consistency formulation.
    \item Extensive experiments demonstrate our proposed objective consistently improves various state-of-the-art models, achieving superior predictive performance with affordable computational overhead.
\end{itemize}

\section{Preliminaries and Related Work}

\subsection{Problem definition}
In this paper, we focus on the MTSF problem. Throughout, italic capitals denote matrices (e.g., $X$, $Y$, $Z$, $E$), bold lowercase letters denote vectors (e.g., $\mathbf{e}$), calligraphic capitals denote sets (e.g., $\mathcal{E}$), and lowercase letters denote scalars and indices (e.g., $t$, $d$, $\alpha$); the dimension constants $H$, $T$, $D$, $P$, $L$ and $N$ are written as capitals by convention. Given a multivariate time-series dataset with $D$ variables, the input historical sequence is defined as $X \in \mathbb{R}^{H \times D}$, and the ground-truth label sequence is defined as $Y \in \mathbb{R}^{T \times D}$, where $H$ is the lookback window and $T$ is the forecast horizon. Modern time series forecasting models are primarily trained under a multitask learning manner, known as the direct forecasting (DF) paradigm. Therefore, the target is to learn a multi-output predictor $f_{\theta}:\mathbb{R}^{H\times D}\rightarrow\mathbb{R}^{T\times D}$, parameterised by the trainable weights $\theta$ of the forecasting backbone, that directly produces $T$-step forecasts for all $D$ variables simultaneously, yielding the prediction $\hat{Y}=f_{\theta}(X)$. Unlike current paradigms that decouple multivariate outputs into isolated scalar predictions, our formulation explicitly maintains the joint structure to capture cross-channel dependencies and shared temporal dynamics.

% \subsection{Learning objectives in time-series forecasting}

% Recent objectives extend point-wise MSE to capture temporal dynamics through shape alignment~\citep{GDTW, Dilate, soft-dtw}, likelihood maximization~\citep{wang2026iclrqdf, wang2025nipstimeo1, koopman, wang2025fredf}, distribution balancing~\citep{kmbdf, wang2026iclrdistdf}, and intra-sequence modeling like decomposition~\citep{qiu2025DBLoss} or local step alignment~\citep{TDAlign, patchloss}. However, these approaches predominantly treat multivariate forecasting as isolated scalar predictions. By focusing on intra-channel temporal structures, they fail to explicitly constrain cross-variable dependencies and the joint geometric structure of multivariate outputs, limiting their capacity to model co-evolving variables.

\subsection{Learning objectives in time-series forecasting}
Recent learning objectives extend point-wise MSE to capture temporal dynamics through shape alignment~\citep{GDTW, Dilate, soft-dtw}, likelihood maximization~\citep{wang2026iclrqdf, wang2025nipstimeo1, koopman, wang2025fredf}, distribution balancing~\citep{kmbdf, wang2026iclrdistdf}, and intra-sequence modeling via decomposition~\citep{qiu2025DBLoss} or local step alignment~\citep{TDAlign, patchloss}. Despite these advancements, existing approaches fundamentally treat multivariate forecasting as a collection of isolated scalar predictions. By optimizing exclusively for intra-channel temporal structures, they completely neglect explicit constraints on cross-variable dependencies. Consequently, these methods fail to capture the joint geometric structure and correlated disturbances among multivariate outputs.

% \subsection{Cross-variable Dependencies in time-series forecasting}
% Modeling channel correlations is fundamental for MTSF. Current strategies range from Channel Independence to Channel Dependence (CD) models like LIFT~\citep{LIFT}, iTransformer~\citep{itransformer} and TSMixer~\citep{chen2023tsmixer}. Recent Channel Partiality (CP) methods, including DUET~\citep{qiu2025duet} and TimeFilter~\citep{hu2025timefilter}, selectively filter dependencies to mitigate noise. However, these advancements focus on representations from historical observations , while the structural geometry among future predictions remains fundamentally underexplored.

\subsection{Cross-variable Dependencies in time-series forecasting}

Effectively modeling cross-variable dependencies is fundamental for MTSF~\citep{wang2024tssurvey, liangsurvey}. Existing paradigms typically employ Channel Dependence (CD) strategies to explicitly capture inter-channel correlations, as seen in representative architectures like iTransformer~\citep{itransformer}, TSMixer~\citep{chen2023tsmixer}, Crossformer~\citep{crossformer}, and TimesNet~\citep{Timesnet}. However, CD models often overfit to spurious correlations and struggle with noise interference from irrelevant variables. To address this, recent advancements introduce Channel Partiality (CP) to selectively filter dependencies. For instance, DUET~\citep{qiu2025duet} proposes a Channel Clustering Module to filter noisy channels via frequency-domain soft clustering. Similarly, TimeFilter~\citep{hu2025timefilter} constructs a patch-specific spatial-temporal graph with dynamic routing to retain essential correlations while discarding irrelevant noise. Other approaches like MTGNN~\citep{wu2020connecting} and MCformer~\citep{mcformer} also explore sparse interactions to balance accuracy and efficiency. Despite these significant improvements, current models predominantly focus on extracting representations from historical observations~\citep{zhao2024rethinking, wei2026tarfvae}. The complex structural geometry and co-evolving cross-variable dependencies among future predictions remain fundamentally underexplored.

\begin{figure}
\subfigure[Ground-truth structure.]{\includegraphics[width=0.23\linewidth]{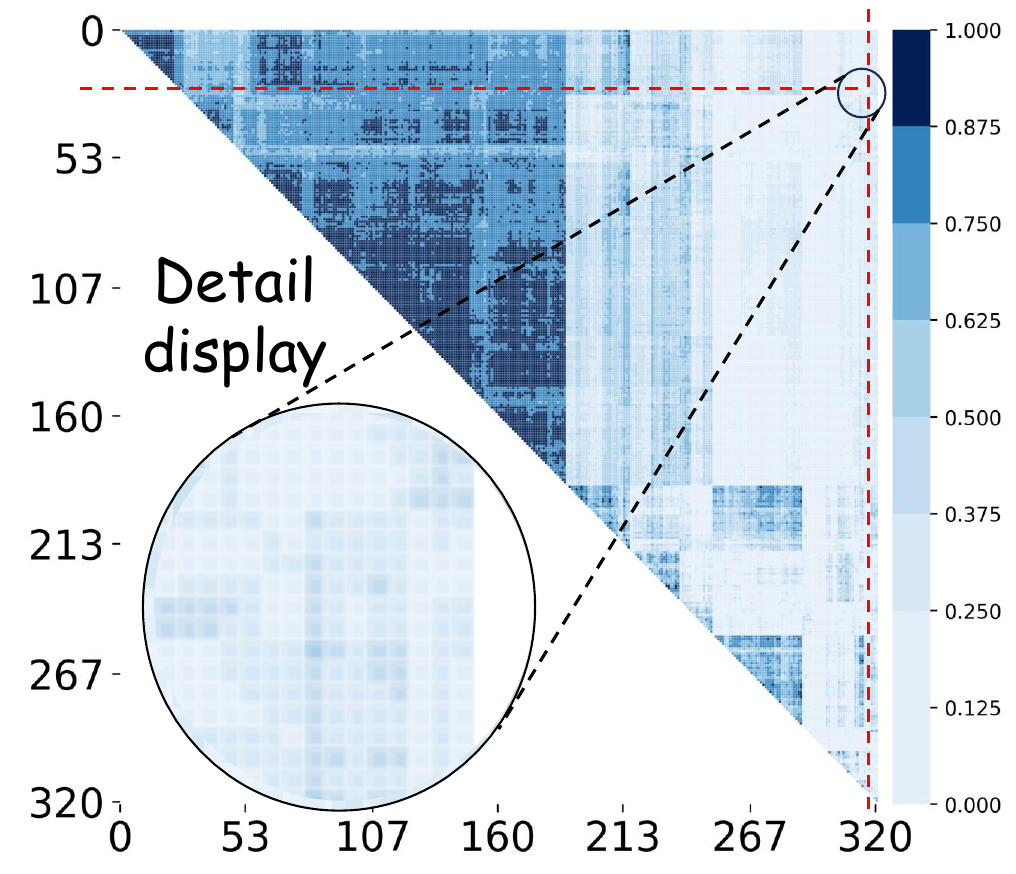}}
\hfill
    \raisebox{-0.0\height}{\rule{0.8pt}{2.6cm}} % Vertical line of 4cm height and 0.5pt width
\hfill
\subfigure[Baseline structure.]{\includegraphics[width=0.23\linewidth]{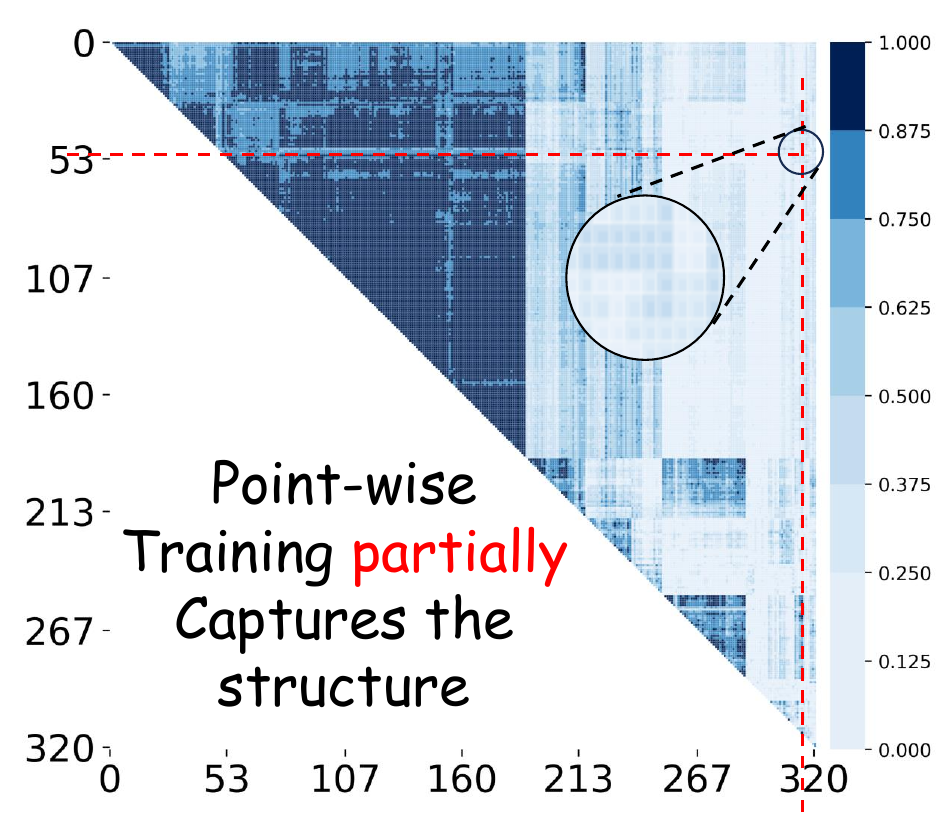}}
\hfill
    \raisebox{-0.0\height}{\rule{0.8pt}{2.6cm}} % Vertical line of 4cm height and 0.5pt width
\hfill
\subfigure[Improvement by CvLoss.]{\includegraphics[width=0.23\linewidth]{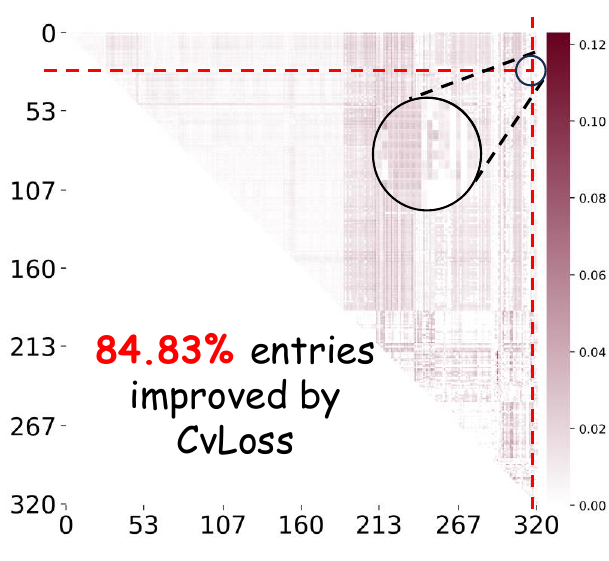}}
\hfill
    \raisebox{-0.0\height}{\rule{0.8pt}{2.6cm}} % Vertical line of 4cm height and 0.5pt width
\hfill
\subfigure[ECL Snapshot.]{\includegraphics[width=0.2\linewidth]{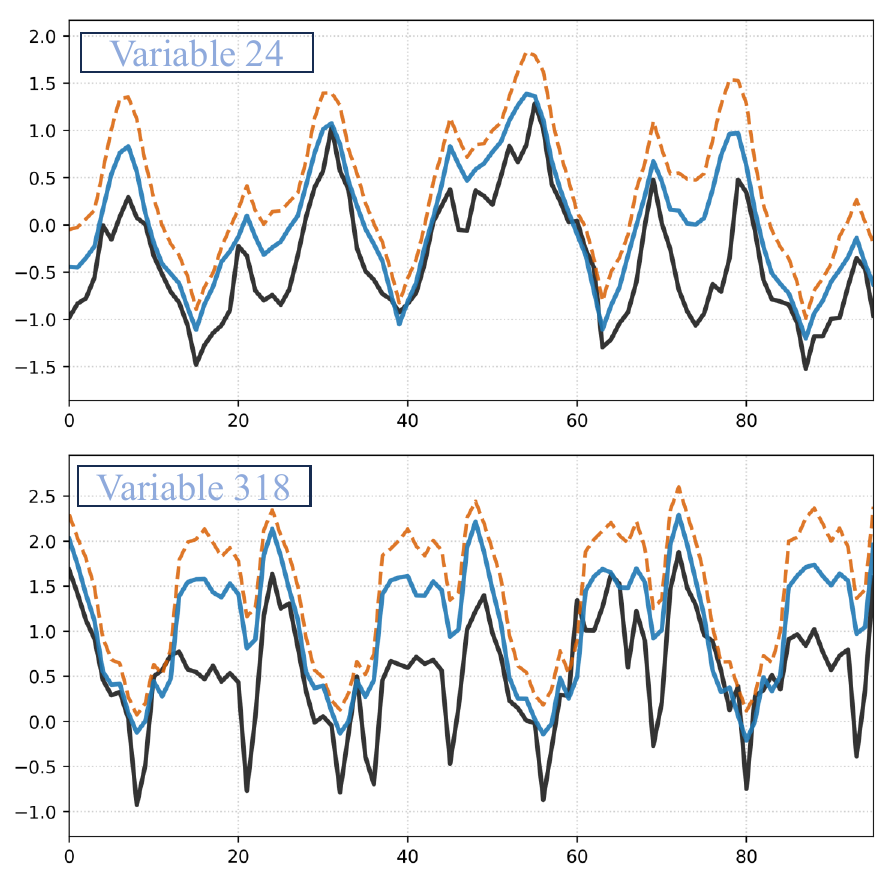}}
\caption{Motivating example on ECL with input length $H=96$ and forecast horizon $T=96$. (a,b) Cross-variable correlation structures of the ground-truth future series and of the series predicted by a plain iTransformer~\citep{itransformer}. (c) Entries on which the same backbone trained with \time\ attains a smaller structural error than the plain baseline, covering $84.83\%$ of all cross-variable entries. (d) Forecast snapshot on a representative variable. The protocol used to compute (a)--(c) is defined in Appendix~\ref{app:heatmap_vis}.}
\label{fig:idea}
\end{figure}

\section{Proposed Method}

\subsection{Motivation}

Modern time series forecasting models are primarily trained under the multitask learning manner, known as the direct forecasting (DF) paradigm~\citep{wang2025fredf}. Given an input history \(X\in\mathbb{R}^{H\times D}\), a multi-output predictor \(f_\theta:\mathbb{R}^{H\times D}\rightarrow\mathbb{R}^{T\times D}\) directly produces \(T\)-step forecasts for \(D\) variables:
$
\hat Y = f_\theta(X).
$
The standard training objective is the point-wise MSE:
\begin{equation}\label{eq:df_loss}
\mathcal{L}_{\mathrm{df}} = \frac{1}{T} \sum_{t=1}^{T} \frac{1}{D} \| Y_{t,:} - \hat{Y}_{t,:} \|_2^2
\end{equation}

% This objective is simple and effective, but it only penalizes prediction errors at individual time-variable entries. Although modern forecasting architectures may implicitly capture cross-variable interactions through shared parameters, such dependencies are not explicitly reflected in the objective in Eq.~\eqref{eq:df_loss}, which does not constrain the \emph{relative error structure} across variables. As a result, two models with similar point-wise MSE may still exhibit substantially different cross-variable residual patterns.

This objective is simple and effective, but it only penalizes prediction errors at individual time-variable entries. Although modern forecasting architectures may implicitly capture cross-variable interactions through shared parameters, such dependencies are not explicitly reflected in the objective in Eq.~\eqref{eq:df_loss}, which does not constrain the \emph{relative error structure} across variables. As a result, two models with similar point-wise MSE may still exhibit substantially different cross-variable correlated patterns.

Figure~\ref{fig:idea} makes this concrete on ECL. Panels (a) and (b) show the cross-variable correlation matrix of the ground-truth future window and of the window predicted by a plain iTransformer trained with MSE: the predicted matrix is visibly flatter and misplaces much of the off-diagonal structure, even though the point-wise error of that model is competitive. Panel (c) marks the entries on which the identical backbone trained with our objective recovers the correlation more accurately, covering $84.83\%$ of all cross-variable entries, and panel (d) shows the corresponding forecast snapshot. The correlation structure of the future window is therefore something a point-wise objective leaves largely unconstrained, and something that can be improved without changing the architecture. This observation motivates the analysis below.

% From a broader perspective, multivariate forecasting is not merely a collection of isolated scalar prediction problems, as different variables often evolve with shared dynamics and correlated disturbances. This intuition is strongly supported by both multi-task learning~\citep{evgeniou2005learning} and classical statistical decision theory~\citep{james1961estimation}. Just as related tasks generalize better when learned jointly with an explicit coupling regularizer, the classical James--Stein result demonstrates that coupling correlated coordinates in high-dimensional estimation is fundamentally superior to treating them as completely separate targets. Consequently, relying purely on point-wise estimation fails to fully exploit the intrinsic shared structure across variables.

From a broader perspective, multivariate forecasting is not merely a collection of isolated scalar prediction problems, since different variables often evolve with shared dynamics and correlated disturbances. This view is also consistent with multi-task learning, where explicit coupling across related targets can improve generalization~\citep{evgeniou2005learning}. Consequently, relying purely on point-wise estimation may fail to fully exploit the shared structure across variables.

Motivated by these perspectives, we seek to augment the vanilla DF objective with an explicit structural regularizer that captures dependencies among variables. Rather than replacing point-wise supervision, this additional term is intended to complement it by encouraging consistency in the residual structure across related variables. To clarify why such a structural term may be beneficial, we first examine the standard DF objective from a probabilistic perspective.

Let $\mathbf{e}=\mathrm{vec}(\hat{Y}-Y)\in\mathbb{R}^{TD}$ denote the flattened prediction error vector over the entire forecast horizon and all variables. Suppose the residuals follow a zero-mean multivariate Gaussian distribution
$\mathbf{e}\sim\mathcal{N}(\mathbf{0},\mathbf{\Sigma}_{ST})$,
where $\mathbf{\Sigma}_{ST}\in\mathbb{R}^{TD\times TD}$ is the spatiotemporal covariance matrix that captures both cross-variable and cross-temporal dependencies.

% \begin{theorem}[Objective Gap with Spatiotemporal Dependencies]\label{thm:spatiotemporal_nll}
% Ignoring additive constants independent of the predictions, the scaled negative log-likelihood (NLL) under the true covariance $\mathbf{\Sigma}_{ST}$ is
% $
% \mathcal{L}_{\mathrm{nll}}
% =
% \frac{1}{2TD}\mathbf{e}^{\top}\mathbf{\Sigma}_{ST}^{-1}\mathbf{e}.
% $
% On the other hand, the standard DF objective in Eq.~\eqref{eq:df_loss} can be written as
% $
% \mathcal{L}_{\mathrm{df}}
% =
% \frac{1}{TD}\mathbf{e}^{\top}\mathbf{e}.
% $
% Therefore, compared with the isotropic Gaussian NLL under variance $\sigma^2$, the discrepancy between the two objectives is 
% \begin{equation}
% \Delta
% =
% \mathcal{L}_{\mathrm{nll}}
% -
% \frac{1}{2\sigma^2}\mathcal{L}_{\mathrm{df}}
% =
% \frac{1}{2TD}\mathbf{e}^{\top}
% \left(
% \mathbf{\Sigma}_{ST}^{-1}
% -
% \frac{1}{\sigma^2}\mathbf{I}
% \right)
% \mathbf{e},
% \end{equation}
% where $\mathbf{I}\in\mathbb{R}^{TD\times TD}$ is the identity matrix.
% \end{theorem}

\begin{theorem}[Objective Gap with Spatiotemporal Dependencies]\label{thm:spatiotemporal_nll}
Ignoring additive constants independent of the predictions, the scaled negative log-likelihood (NLL) under the true covariance $\mathbf{\Sigma}_{ST}$ is
$
\mathcal{L}_{\mathrm{nll}}
=
\frac{1}{2TD}\mathbf{e}^{\top}\mathbf{\Sigma}_{ST}^{-1}\mathbf{e}.
$
On the other hand, the standard DF objective in Eq.~\eqref{eq:df_loss} can be written as
$
\mathcal{L}_{\mathrm{df}}
=
\frac{1}{TD}\mathbf{e}^{\top}\mathbf{e}.
$
Therefore, compared with the isotropic Gaussian NLL under variance $\sigma^2$, the discrepancy between the two objectives is 
\begin{equation}
\Delta
=
\mathcal{L}_{\mathrm{nll}}
-
\frac{1}{2\sigma^2}\mathcal{L}_{\mathrm{df}}
=
\frac{1}{2TD}\mathbf{e}^{\top}
\left(
\mathbf{\Sigma}_{ST}^{-1}
-
\frac{1}{\sigma^2}\mathbf{I}
\right)
\mathbf{e},
\end{equation}
where $\mathbf{I}\in\mathbb{R}^{TD\times TD}$ is the identity matrix. In real-world multivariate time series, the presence of cross-variable and lagged dependencies ensures that the true precision matrix $\mathbf{\Sigma}_{ST}^{-1}$ contains non-zero off-diagonal elements. Consequently, $\mathbf{\Sigma}_{ST}^{-1} \neq \frac{1}{\sigma^2}\mathbf{I}$, meaning a strictly non-zero objective gap ($\Delta \neq 0$) inherently exists because the standard point-wise DF objective neglects these underlying structural correlations.
\end{theorem}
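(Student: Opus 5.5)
The proof is a direct computation with the multivariate Gaussian density, followed by a rewriting of Eq.~\eqref{eq:df_loss} in vectorized form. Two claims need care afterwards: the nonvanishing of $\Delta$, and where exactly it fails to hold.

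First, for $\mathbf{e}\sim\mathcal{N}(\mathbf{0},\mathbf{\Sigma}_{ST})$ I write the negative log-density:
\[
-\log p(\mathbf{e})=\tfrac12\mathbf{e}^{\top}\mathbf{\Sigma}_{ST}^{-1}\mathbf{e}+\tfrac12\log\det\mathbf{\Sigma}_{ST}+\tfrac{TD}{2}\log(2\pi).
\]
Since $\mathbf{\Sigma}_{ST}$ is the fixed true covariance, the last two terms do not depend on $\hat Y$. Dropping them and scaling by $1/(TD)$ gives $\mathcal{L}_{\mathrm{nll}}$.

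Second, I use the identity $\sum_{t}\|Y_{t,:}-\hat Y_{t,:}\|_2^2=\|\hat Y-Y\|_F^2=\|\mathrm{vec}(\hat Y-Y)\|_2^2$. This gives $\mathcal{L}_{\mathrm{df}}=\frac{1}{TD}\mathbf{e}^{\top}\mathbf{e}$, and the sign flip inside the norm is irrelevant. Specializing the first step to $\mathbf{\Sigma}=\sigma^2\mathbf{I}$ gives the isotropic NLL $\frac{1}{2TD\sigma^2}\mathbf{e}^{\top}\mathbf{e}=\frac{1}{2\sigma^2}\mathcal{L}_{\mathrm{df}}$. This shows that DF is exactly the isotropic Gaussian NLL up to scale. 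Subtracting the two quadratic forms and factoring out $\mathbf{e}^{\top}(\cdot)\mathbf{e}$ yields the displayed formula for $\Delta$.

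Third, the qualitative conclusion. If some off-diagonal entry of $\mathbf{\Sigma}_{ST}^{-1}$ is nonzero, then $A:=\mathbf{\Sigma}_{ST}^{-1}-\sigma^{-2}\mathbf{I}$ has the same nonzero off-diagonal entry, so $A\neq 0$ for every $\sigma^2>0$. Being symmetric and nonzero, $A$ has a nonzero eigenvalue $\lambda$ with eigenvector $\mathbf{v}$, and $\mathbf{v}^{\top}A\mathbf{v}=\lambda\|\mathbf{v}\|^2\neq0$. So the quadratic form $\Delta(\mathbf{e})$ is not identically zero.

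Its zero set $\{\mathbf{e}:\mathbf{e}^{\top}A\mathbf{e}=0\}$ is a proper algebraic subset of $\mathbb{R}^{TD}$ and hence has Lebesgue measure zero. Because the Gaussian has a density, $\Delta\neq0$ almost surely. A proper statement therefore reads "$\Delta\neq0$ as a function of $\mathbf{e}$, and almost surely" rather than "for every $\mathbf{e}$". In particular $\Delta=0$ at $\mathbf{e}=\mathbf{0}$, and also on the cone where $\mathbf{e}^{\top}A\mathbf{e}=0$ whenever $A$ is indefinite.

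The main obstacle is the modelling premise, not the algebra. The claim that cross-variable or lagged dependencies force nonzero off-diagonal entries in the \emph{precision} matrix needs justification, since correlation lives in $\mathbf{\Sigma}_{ST}$ itself. I would argue it by contraposition. If $\mathbf{\Sigma}_{ST}^{-1}$ were diagonal, its inverse $\mathbf{\Sigma}_{ST}$ would also be diagonal, so all residual coordinates would be uncorrelated. This contradicts the assumed existence of a nonzero cross-covariance. Hence any genuine dependency implies a non-diagonal precision matrix, which closes the argument.
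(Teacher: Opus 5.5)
Your proposal is correct and follows the paper's proof essentially step for step. Both arguments take the Gaussian negative log-density, drop the $\theta$-independent terms, rescale by $1/(TD)$, rewrite $\mathcal{L}_{\mathrm{df}}$ as $\frac{1}{TD}\mathbf{e}^{\top}\mathbf{e}$, specialize to $\sigma^{2}\mathbf{I}$, and subtract the two quadratic forms. The paper merely asserts the final step, namely that $\mathbf{\Sigma}_{ST}$ is neither diagonal nor uniform and hence $\Delta\neq 0$. Your contrapositive (a diagonal precision matrix would force a diagonal covariance) and your observation that $\Delta$ is nonzero only almost surely as a function of $\mathbf{e}$, not pointwise since it vanishes at $\mathbf{e}=\mathbf{0}$, supply the precision that the paper's phrase ``strictly non-zero objective gap'' lacks.
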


Theorem~\ref{thm:spatiotemporal_nll} shows that the standard point-wise DF objective coincides with the isotropic Gaussian NLL only when the residual covariance is spherical, i.e., $\mathbf{\Sigma}_{ST}=\sigma^2\mathbf{I}$. When substantial cross-variable dependencies are present, potentially together with lagged temporal interactions, minimizing point-wise MSE alone does not explicitly account for the underlying residual dependency structure. However, in real-world systems, dynamic physical processes often induce lagged effects—such as a disturbance in one variable propagating to another variable after several time steps. Consequently, off-diagonal elements in $\mathbf{\Sigma}_{ST}$, which represent cross-variable and cross-temporal correlations, are significantly non-zero. As a result, point-wise MSE may fail to reflect the true spatiotemporal error geometry and may not fully exploit correlated residual structure during training.

\paragraph{Scope of the theorem.} We state precisely what Theorem~\ref{thm:spatiotemporal_nll} does and does not establish, since the distinction governs the rest of the paper. It establishes that \emph{some} non-diagonal term is necessary: whenever the residual precision matrix has non-zero off-diagonal entries, the point-wise objective leaves a strictly non-zero gap $\Delta$. It does \emph{not} identify which non-diagonal term should be used, and in particular it does not derive the specific objective we adopt in Section~\ref{cvl}. Estimating $\mathbf{\Sigma}_{ST}^{-1}$ itself is not a viable route at this scale. On ECL with $T=720$ and $D=321$ that matrix has more than $5\times10^{10}$ entries, so the estimation problem is ill-posed, and we do not attempt it. What the theorem licenses is the weaker and sufficient claim that the objective should be \emph{relation-aware}: it must couple residuals that belong to different variables instead of scoring them independently. Proposition~\ref{prop:precision} below makes the resulting family explicit, and the empirical choice of norm within that family is examined in Appendix~\ref{app:norm_choice}.

\begin{proposition}[MSE with a squared graph penalty is a structured Gaussian NLL]\label{prop:precision}
Let $\mathcal{E}$ be a cross-variable edge set over the $N$ patch nodes, let $\tilde A=A\otimes I_L$ be the lifted incidence matrix of Section~\ref{cvl}, and let $\mathrm{Lap}(\mathcal{E})=\tilde A^{\top}\tilde A$ be the corresponding graph Laplacian. Define the structured precision family
\begin{equation}\label{eq:precision_family}
\mathbf{\Sigma}^{-1}(\lambda)
=
\frac{1}{\sigma^{2}}\bigl(\mathbf{I}+\lambda\,\mathrm{Lap}(\mathcal{E})\bigr),
\qquad \lambda\ge 0 .
\end{equation}
Then, writing $\mathbf{r}=\mathrm{vec}(E)$ for the flattened residual field, the following identity holds for every $\mathbf{r}$:
\begin{equation}\label{eq:precision_identity}
\mathbf{r}^{\top}\mathbf{\Sigma}^{-1}(\lambda)\,\mathbf{r}
=
\frac{1}{\sigma^{2}}
\Bigl(
\|\mathbf{r}\|_2^{2}
+
\lambda\!\!\sum_{(i,j)\in\mathcal{E}}\!\!\|\mathbf{e}_i-\mathbf{e}_j\|_2^{2}
\Bigr).
\end{equation}
\end{proposition}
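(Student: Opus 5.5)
The identity is bilinear, so the plan is simply to expand the quadratic form and identify each piece. By linearity of the quadratic form, $\mathbf{r}^{\top}\mathbf{\Sigma}^{-1}(\lambda)\mathbf{r}=\frac{1}{\sigma^2}\bigl(\mathbf{r}^{\top}\mathbf{r}+\lambda\,\mathbf{r}^{\top}\mathrm{Lap}(\mathcal{E})\mathbf{r}\bigr)$. The first term is $\|\mathbf{r}\|_2^2$ by definition, so the whole task reduces to showing
\[
\mathbf{r}^{\top}\tilde A^{\top}\tilde A\,\mathbf{r}=\|\tilde A\mathbf{r}\|_2^2=\sum_{(i,j)\in\mathcal{E}}\|\mathbf{e}_i-\mathbf{e}_j\|_2^2 .
\]

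For this step I will use the conventions of Section~\ref{cvl}. The residual field $E$ is partitioned into $N$ patch nodes, each carrying a length-$L$ residual vector $\mathbf{e}_k\in\mathbb{R}^L$. The vectorization $\mathbf{r}=\mathrm{vec}(E)$ is ordered node by node, so that $\mathbf{r}=(\mathbf{e}_1^{\top},\dots,\mathbf{e}_N^{\top})^{\top}\in\mathbb{R}^{NL}$. The incidence matrix $A\in\mathbb{R}^{|\mathcal{E}|\times N}$ has, for the row of edge $(i,j)$, entry $+1$ in column $i$, entry $-1$ in column $j$, and zeros elsewhere.

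The block structure of the Kronecker product then gives the result directly. The row-block of $\tilde A=A\otimes I_L$ indexed by edge $(i,j)$ is $[\,\cdots\, I_L \,\cdots\, -I_L\,\cdots\,]$, so that block of $\tilde A\mathbf{r}$ equals $\mathbf{e}_i-\mathbf{e}_j$. Summing the squared norms of these blocks gives $\|\tilde A\mathbf{r}\|_2^2=\sum_{(i,j)\in\mathcal{E}}\|\mathbf{e}_i-\mathbf{e}_j\|_2^2$. Substituting back yields Eq.~\eqref{eq:precision_identity}.

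For completeness I would also note that $\mathrm{Lap}(\mathcal{E})=\tilde A^{\top}\tilde A=(A^{\top}A)\otimes I_L$ is positive semidefinite. Hence $\mathbf{I}+\lambda\mathrm{Lap}(\mathcal{E})$ is positive definite for every $\lambda\ge0$, so $\mathbf{\Sigma}^{-1}(\lambda)$ is a valid precision matrix. This is what justifies reading the left-hand side as a Gaussian NLL, as in Theorem~\ref{thm:spatiotemporal_nll}.

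The only real obstacle is notational bookkeeping: the vectorization order of $\mathrm{vec}(E)$ must agree with the patch-node indexing used to build $\tilde A$. If Section~\ref{cvl} vectorizes $E$ differently, for example column-major over $T\times D$, I would insert a fixed permutation matrix $\Pi$ with $\mathbf{r}'=\Pi\mathbf{r}$ node-ordered. I would then interpret $\mathrm{Lap}(\mathcal{E})$ as $\Pi^{\top}\tilde A^{\top}\tilde A\,\Pi$, which leaves the identity unchanged because permutations preserve norms. Lagged (asynchronous) edges need no special treatment, since they are simply further pairs $(i,j)$ of patch nodes in $\mathcal{E}$.
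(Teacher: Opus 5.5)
Your proposal is correct and follows the paper's proof essentially step for step. Like the paper, you expand the quadratic form by linearity, identify $\mathbf{r}^{\top}\mathrm{Lap}(\mathcal{E})\,\mathbf{r}$ with $\|\tilde A\mathbf{r}\|_2^{2}$, read off the edge block $\mathbf{e}_i-\mathbf{e}_j$ from the Kronecker structure of $A\otimes I_L$, and record positive-semidefiniteness as a side remark. Your explicit node-major ordering of $\mathrm{vec}(E)$, with the permutation fallback, is a worthwhile addition: the paper's ``by construction'' tacitly relies on that ordering, and under column-major vectorization the correct lift would be $I_L\otimes A$ instead.
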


Proposition~\ref{prop:precision} is proved in Appendix~\ref{app:proposition}. It identifies exactly the object that Theorem~\ref{thm:spatiotemporal_nll} shows MSE to be missing: point-wise MSE plus a \emph{squared} graph total-variation penalty over $\mathcal{E}$ is the negative log-likelihood of a Gaussian whose precision matrix has support $\mathcal{E}$ off the diagonal, so $\mathcal{E}$ is not an arbitrary analogy but the conditional-independence structure of the residual field. Setting $\lambda=0$ recovers the spherical case of Theorem~\ref{thm:spatiotemporal_nll}, in which $\Delta=0$. This also settles the status of the default topology: because $\mathcal{E}$ is the support of the off-diagonal entries, a complete cross-variable graph imposes \emph{no} zero constraint on any of them and is therefore the weakest assumption in the family, whereas the empty graph, i.e. plain MSE, is the member that Theorem~\ref{thm:spatiotemporal_nll} rules out.

\subsection{Predicting Multivariate Time Series with Cross Variable Loss Constraints} \label{cvl}

\subsubsection{Definition}

\paragraph{Patch nodes.}
Let \(Y\in\mathbb{R}^{T\times D}\) be the target and \(\hat Y=f_\theta(X)\in\mathbb{R}^{T\times D}\) be the prediction. We split the time axis into \(P\) non-overlapping patches of length \(L\)~\citep{PatchTST, 2026timemosaic}, so that \(T=P\times L\). Each node is defined as a pair \(v=(p,d)\), where \(p\in\{1,\dots,P\}\) indexes the patch and \(d\in\{1,\dots,D\}\) indexes the variable. Let \(\mathbf{z}_v\in\mathbb{R}^{L}\) and \(\hat{\mathbf{z}}_v\in\mathbb{R}^{L}\) denote the ground-truth and predicted patch vectors at node \(v\), respectively. Stacking all nodes gives
\begin{equation}
Z\in\mathbb{R}^{N\times L},
\qquad
\hat Z\in\mathbb{R}^{N\times L},
\qquad
N=P\times D.
\end{equation}
We define the node-wise residuals as
\begin{equation}
\mathbf{e}_v=\hat{\mathbf{z}}_v-\mathbf{z}_v\in\mathbb{R}^{L},
\end{equation}
and stack them into the residual matrix \(E\in\mathbb{R}^{N\times L}\), whose vectorisation \(\mathrm{vec}(E)\in\mathbb{R}^{NL}\) is a permutation of the flattened error vector \(\mathbf{e}\) of Theorem~\ref{thm:spatiotemporal_nll}, since \(NL=TD\). For $L=1$, patch nodes recover point nodes.

\paragraph{Cross-variable edges.}
To capture the spatiotemporal dependencies across different series, we introduce an edge set $\mathcal{E}$ to define the cross-variable graph topology. Guided by Theorem~\ref{thm:spatiotemporal_nll}, rather than explicitly estimating the intractable precision matrix, we utilize $\mathcal{E}$ as a practical surrogate structure to regularize residual discrepancies. This formulation is highly flexible and can easily accommodate different structural priors. Specifically, we design three variants for constructing $\mathcal{E}$: (1) \emph{Synchronous cross-variable edges}, which strictly capture concurrent interactions at the same time step (i.e., connecting identical patch indices); (2) \emph{Asynchronous cross-variable edges}, which exclusively model lagged dependencies across different time steps; and (3) \emph{Fully connected cross-variable edges}, which include both synchronous and lagged interactions. We adopt the fully connected graph as our default setting to maximize the coverage of potential interactions without relying on domain-specific prior knowledge. We evaluate these topological designs in the ablation studies in Section~\ref{sec:ablation}.

\paragraph{Difference operator.}
Fix an arbitrary orientation for each undirected edge and let \(A\in\mathbb{R}^{|\mathcal{E}|\times N}\) be the incidence matrix: for an edge \(k=(i,j)\), the \(k\)-th row contains \(+1\) at position \(i\), \(-1\) at position \(j\), and \(0\) elsewhere. Since each node is an \(L\)-dimensional patch, we lift the incidence operator as
\begin{equation}
\tilde A := A\otimes I_L.
\end{equation}
Let \(\mathrm{vec}(E)\in\mathbb{R}^{NL}\) denote the vectorization of \(E\). Then \(\tilde A\,\mathrm{vec}(E)\in\mathbb{R}^{|\mathcal{E}|L}\) stacks the residual differences \(\mathbf{e}_i-\mathbf{e}_j\in\mathbb{R}^{L}\) over all edges.

\subsubsection{Cross-Variable Loss}

% By reshaping the predictions into patch matrices, the standard DF loss can be rewritten as
% \begin{equation}
% \mathcal{L}_{\mathrm{df}}
% =
% \frac{1}{N\times L}\|\hat z-z\|_F^2
% =
% \frac{1}{T\times D}\|\hat Y-Y\|_F^2.
% \end{equation}

Building upon the defined graph topology, we now formalize our objective. First, by reshaping the predictions into patch matrices, the standard DF loss can be equivalently rewritten as
\begin{equation}
\mathcal{L}_{\mathrm{df}}
=
\frac{1}{N\times L}\|\hat Z-Z\|_F^2
=
\frac{1}{T\times D}\|\hat Y-Y\|_F^2.
\end{equation}
To capture the relative structural geometry between variables, for each edge $(i,j)\in\mathcal{E}$, we define the predicted and true patch differences as
\begin{equation}
\Delta^{\mathrm{pred}}_{ij}=\hat{\mathbf{z}}_i-\hat{\mathbf{z}}_j,
\qquad
\Delta^{\mathrm{true}}_{ij}=\mathbf{z}_i-\mathbf{z}_j.
\end{equation}
We then define CvLoss as the mean absolute discrepancy of these structural differences:
\begin{equation}
\mathcal{L}_{\mathrm{cv}}
:=
\frac{1}{|\mathcal{E}|\times L}
\sum_{(i,j)\in\mathcal{E}}
\bigl\|
\Delta^{\mathrm{pred}}_{ij}
-
\Delta^{\mathrm{true}}_{ij}
\bigr\|_1.
\end{equation}

% For each edge \((i,j)\in\mathcal{E}\), define the predicted and true patch differences as
% \begin{equation}
% \Delta^{\mathrm{pred}}_{ij}=\hat z_i-\hat z_j,
% \qquad
% \Delta^{\mathrm{true}}_{ij}=z_i-z_j.
% \end{equation}
% We then define the Cross-Variable Loss (CvLoss) as
% \begin{equation}
% \mathcal{L}_{\mathrm{cv}}
% :=
% \frac{1}{|\mathcal{E}|\times L}
% \sum_{(i,j)\in\mathcal{E}}
% \bigl\|
% \Delta^{\mathrm{pred}}_{ij}
% -
% \Delta^{\mathrm{true}}_{ij}
% \bigr\|_1.
% \end{equation}

Using the node-wise residual $\mathbf{e}_i=\hat{\mathbf{z}}_i-\mathbf{z}_i$, the structural discrepancy on each edge simplifies neatly:
\begin{equation}
\Delta^{\mathrm{pred}}_{ij}-\Delta^{\mathrm{true}}_{ij}
=
(\hat{\mathbf{z}}_i-\hat{\mathbf{z}}_j)-(\mathbf{z}_i-\mathbf{z}_j)
=
\mathbf{e}_i-\mathbf{e}_j.
\end{equation}
Therefore, CvLoss admits a compact graph-theoretic form via the incidence matrix $\tilde{A}$:
\begin{equation}\label{eq:cvl}
\mathcal{L}_{\mathrm{cv}}
=
\frac{1}{|\mathcal{E}|\times L}
\sum_{(i,j)\in\mathcal{E}}
\|\mathbf{e}_i-\mathbf{e}_j\|_1
=
\frac{1}{|\mathcal{E}|\times L}
\|\tilde A\,\mathrm{vec}(E)\|_1.
\end{equation}
This computes the total variation of prediction errors across the graph.

% Using \(e_i=\hat z_i-z_i\), we have
% \begin{equation}
% \Delta^{\mathrm{pred}}_{ij}-\Delta^{\mathrm{true}}_{ij}
% =
% (\hat z_i-\hat z_j)-(z_i-z_j)
% =
% (\hat z_i-z_i)-(\hat z_j-z_j)
% =
% e_i-e_j.
% \end{equation}
% Therefore, CvLoss admits the compact form
% \begin{equation}\label{eq:cvl}
% \mathcal{L}_{\mathrm{cv}}
% =
% \frac{1}{|\mathcal{E}|\times L}
% \sum_{(i,j)\in\mathcal{E}}
% \|e_i-e_j\|_1
% =
% \frac{1}{|\mathcal{E}|\times L}
% \|\tilde A\,\mathrm{vec}(e)\|_1.
% \end{equation}

\begin{figure*}[t]
    \centering
    \includegraphics[width=1\linewidth]{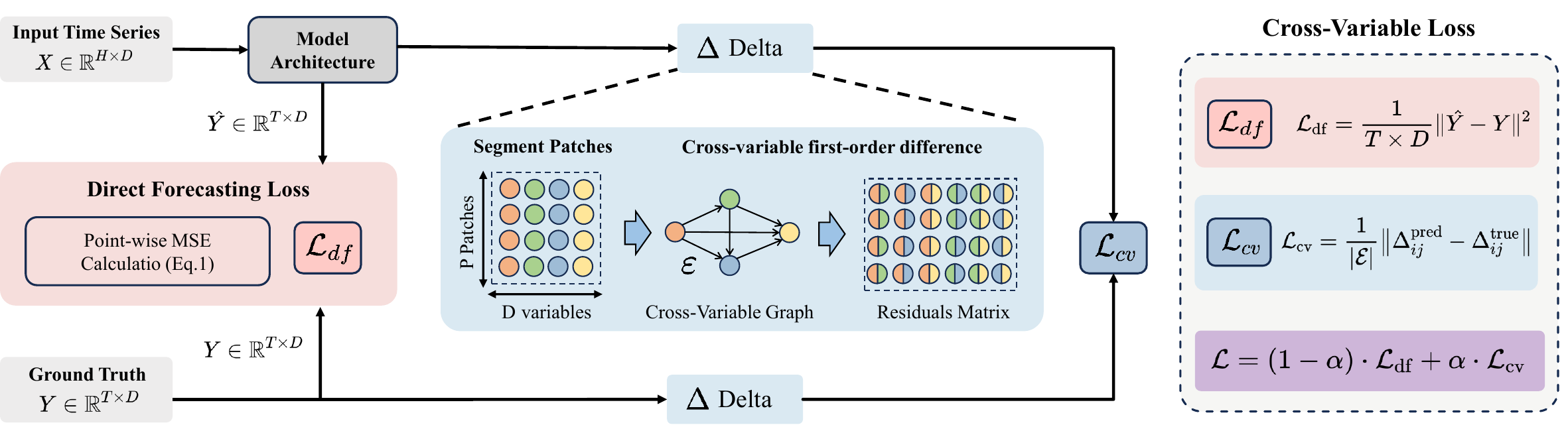}
    \caption{Overview of the workflow CvLoss.}
    \label{fig:framework}
\end{figure*}

\subsubsection{Interpretation}

The role of CvLoss is to explicitly regularize the cross-variable residual structure that is not directly controlled by the point-wise DF loss. In particular, \(\mathcal{L}_{\mathrm{df}}\) encourages each residual \(\mathbf{e}_v\) to be small in absolute magnitude, while \(\mathcal{L}_{\mathrm{cv}}\) encourages connected nodes to have compatible residual patterns. This induces an explicit coupling across variables at the objective level.

From the graph perspective, \(\mathcal{L}_{\mathrm{cv}}\) is an \(\ell_1\)-type graph total variation regularizer on the residual field. Unlike point-wise MSE, which evaluates each entry independently, CvLoss constrains the edge-wise discrepancies of prediction errors across variables. This encourages the model to exploit shared temporal structure and improves coordination among related variables.

% We adopt the \(\ell_1\) norm instead of a squared \(\ell_2\) penalty for the residual differences because it is less sensitive to a small number of large edge-wise mismatches. In multivariate forecasting, localized anomalies or variable-specific shocks may break the alignment between a few connected nodes. In such cases, the \(\ell_1\) form provides a more robust structural constraint while still promoting residual consistency over the graph.

We use the \(\ell_1\) norm because it is less sensitive to a few large edge-wise mismatches caused by localized anomalies or variable-specific shocks, while still promoting graph-level residual consistency. Replacing the squared penalty of Proposition~\ref{prop:precision} by \(\ell_1\) yields a pairwise Gibbs field over the same graph rather than a Gaussian one, so the \(\ell_1\) form is a task-specific empirical choice rather than a consequence of the proposition; Appendix~\ref{app:norm_choice} compares the two norms directly under an identical protocol.

\subsection{Model Implementation}

% \begin{wrapfigure}{r}{6.5cm}
% \centering
% \vspace{-7mm}
% \begin{center}
% \centerline{\includegraphics[width=\linewidth]{fig/mini_framework.pdf}}
% \caption{The workflow of CvLoss.}
% \label{fig:framework}
% \end{center}
% \vspace{-0mm}
% \end{wrapfigure}

% This section introduces our proposed framework, which augments the vanilla Direct Forecasting (DF) training paradigm with a Cross-Variable Loss (CvLoss) constraint. By integrating CvLoss, our method explicitly captures structural dependencies among variables and complements the point-wise DF objective with cross-variable regularization.
Our framework augments vanilla DF with the CvLoss constraint, complementing point-wise supervision with cross-variable regularization.

% As illustrated in \autoref{fig:framework}, the input historical sequence $X$ (which is Z-score normalized to align the variance scales across all $D$ variables) is fed into the model to generate $T$-step forecasts for $D$ variables, expressed as $\hat{Y} = f_\theta(X)$. The standard empirical point-wise error $\mathcal{L}_{\mathrm{df}}$ is computed according to Eq.~\eqref{eq:df_loss}. Subsequently, both the forecast and ground-truth sequences are mapped into patch nodes. Based on the fully connected cross-variable edge set $\mathcal{E}$ defined in Section~\ref{cvl}, the structural forecast error is calculated as Eq.~\eqref{eq:cvl}. Since the incidence matrix operations are linear, $\mathcal{L}_{\mathrm{cv}}$ can be optimized end-to-end using standard gradient-based methods.

As shown in \autoref{fig:framework}, the normalized history $X$ is fed into the model to produce $\hat{Y}=f_\theta(X)$. We compute the point-wise error $\mathcal{L}_{\mathrm{df}}$, map forecasts and labels into patch nodes, and compute the structural error $\mathcal{L}_{\mathrm{cv}}$ over $\mathcal{E}$. Since the incidence operations are linear, $\mathcal{L}_{\mathrm{cv}}$ is optimized end-to-end.

Finally, the independent DF error and the structural cross-variable error are fused. A weighting parameter $\alpha \in (0,1)$ controls the relative contribution of each objective:
\begin{equation}\label{eq:cvl_alpha}
\mathcal{L}_{\alpha} := (1-\alpha)\cdot \mathcal{L}_{\mathrm{df}} + \alpha \cdot \mathcal{L}_{\mathrm{cv}}.
\end{equation}
Throughout the paper $\alpha$ is a \emph{fixed} scalar selected on the validation split, together with the patch length $L$ the only quantity we tune; it is never learned during training and never varies across batches, variables or edges. Appendix~\ref{app:tuning} records the selection protocol, and Appendix~\ref{app:diagnostic_weights} reports a separate diagnostic study whose adaptive coefficients are not values of $\alpha$.

By sharing statistical strength across the multivariate series, our framework complements the DF paradigm with explicit cross-variable regularization while preserving its practical advantages, including efficient inference and multi-task learning capabilities. It is worth noting that the standard $\mathcal{L}_{\mathrm{df}}$ term serves as a mathematically necessary absolute anchor. It prevents uniform drift and trivial translation-invariant solutions (where $\mathbf{e}_i = \mathbf{e}_j = \mathbf{c} \neq \mathbf{0}$) even when the structural penalty dominates (i.e., $\alpha \to 1$). Furthermore, our proposed regularization scheme is entirely model-agnostic, seamlessly compatible with various forecasting architectures $f_\theta$ (e.g., Transformers, CNNs, and MLPs).

\section{Experiments}

To demonstrate the efficacy of \time, the following aspects deserve empirical investigation:
% \begin{enumerate}[leftmargin=*]
\begin{itemize}[leftmargin=*]
    \item \textbf{Performance:} \textit{Does \time\ perform well?}  In Section~\ref{sec:overall}, we benchmark \time\ against state-of-the-art baselines, and in Section~\ref{sec:compete}, we compare it with alternative learning objectives.
    \item \textbf{Gain:} \textit{Why does it work?} In Section~\ref{sec:ablation}, we perform an ablative study, dissecting the individual components of \time\ and clarifying their contributions to forecast accuracy.
    \item \textbf{Generality:} \textit{Does it support other models?} In Section~\ref{sec:generalize}, we examine its compatibility with various architectures and datasets, with further results in Appendix~\ref{sec:generalize_app}.
    \item \textbf{Sensitivity:} \textit{Is it sensitive to hyperparameters?} In Section~\ref{sec:hyper}, we analyze the sensitivity of \time\ to the hyperparameter $\alpha$, showing stable performance across a broad parameter range.
    \item \textbf{Efficiency:} \textit{What is the computational cost of it?} In Appendix~\ref{sec:comp_app}, we evaluate the running cost of \time\ across different scenarios.
\end{itemize}
% \end{enumerate}

\subsection{Setup}
% \paragraph{Datasets.} 
% We evaluate our methods using several standard public benchmarks for long-term time-series forecasting, following~\citep{Timesnet}. Specifically, we use the ETT dataset (four subsets), ECL, Traffic, Weather, and PEMS~\citep{itransformer}. All datasets are split chronologically into training, validation, and test sets. Comprehensive dataset statistics are presented in Appendix~\ref{sec:dataset}.
\paragraph{Datasets.} 
Following~\citep{Timesnet}, we evaluate on standard benchmarks: four ETT subsets, ECL, Traffic, Weather, and PEMS~\citep{itransformer}. All datasets are chronologically split into training, validation, and test sets. Statistics are in Appendix~\ref{sec:dataset}.

\paragraph{Baselines.} We compare \time\ to a range of competitive baselines, categorized as: (1) Transformer-based models: PatchTST~\citep{PatchTST}, iTransformer~\citep{itransformer},  TQNet~\citep{lin2025TQNet}, Crossformer~\citep{crossformer}, FEDformer~\citep{fedformer} and Leddam~\citep{leddam}; ((2) Non-transformer models: DLinear~\citep{DLinear}, SOFTS~\citep{softs}, TimesNet~\citep{Timesnet} and TimeFilter~\citep{hu2025timefilter}.

\paragraph{Implementation.} Baseline implementations closely follow the official codebase from TimeFilter~\citep{hu2025timefilter} and TQNet~\citep{lin2025TQNet}. To ensure fair comparison, the drop-last trick is disabled for all models, as recommended in ~\citep{qiutfb}. All models are trained with the Adam optimizer~\citep{Adam}. When integrating \time\ into a baseline forecast model, we retain all hyperparameters from the public benchmarks~\citep{hu2025timefilter,lin2025TQNet}, only tuning $\alpha$ and the patch size. Experiments are run on Intel(R) Xeon(R) Platinum 8470Q with 8 NVIDIA RTX 5090 GPUs. Further implementation details are provided in Appendix~\ref{sec:reproduce}.

\subsection{Overall performance}\label{sec:overall}

\begin{table*}[t]
  \caption{Overall forecasting performance.}\label{tab:overall}
  \renewcommand{\arraystretch}{1} 
  \setlength{\tabcolsep}{2.3pt}
  \centering
  \scriptsize
  \renewcommand{\multirowsetup}{\centering}
  \begin{threeparttable}
  \begin{tabular}{c|c|cc|cc|cc|cc|cc|cc|cc|cc|cc|cc|cc}
    \toprule
    \multicolumn{2}{l}{\multirow{2}{*}{\rotatebox{0}{\scaleb{Models}}}} & 
    \multicolumn{2}{c}{\rotatebox{0}{\scaleb{\textbf{\time}}}} &
    \multicolumn{2}{c}{\rotatebox{0}{\scaleb{TimeFilter}}} &
    \multicolumn{2}{c}{\rotatebox{0}{\scaleb{TQNet}}} &
    \multicolumn{2}{c}{\rotatebox{0}{\scaleb{iTransformer}}} &
    \multicolumn{2}{c}{\rotatebox{0}{\scaleb{Leddam}}} &
    \multicolumn{2}{c}{\rotatebox{0}{\scaleb{SOFTS}}} &
    \multicolumn{2}{c}{\rotatebox{0}{\scaleb{PatchTST}}} &
    \multicolumn{2}{c}{\rotatebox{0}{\scaleb{Crossformer}}} &
    \multicolumn{2}{c}{\rotatebox{0}{\scaleb{TimesNet}}} &
    \multicolumn{2}{c}{\rotatebox{0}{\scaleb{DLinear}}} &
    \multicolumn{2}{c}{\rotatebox{0}{\scaleb{FEDformer}}} \\
    \multicolumn{2}{c}{} &
    \multicolumn{2}{c}{\scaleb{\textbf{(Ours)}}} & 
    \multicolumn{2}{c}{\scaleb{(2025)}} & 
    \multicolumn{2}{c}{\scaleb{(2025)}} & 
    \multicolumn{2}{c}{\scaleb{(2024)}} & 
    \multicolumn{2}{c}{\scaleb{(2024)}} & 
    \multicolumn{2}{c}{\scaleb{(2023)}} & 
    \multicolumn{2}{c}{\scaleb{(2023)}} & 
    \multicolumn{2}{c}{\scaleb{(2023)}} & 
    \multicolumn{2}{c}{\scaleb{(2023)}} & 
    \multicolumn{2}{c}{\scaleb{(2023)}} & 
    \multicolumn{2}{c}{\scaleb{(2022)}} \\
    \cmidrule(lr){3-4} \cmidrule(lr){5-6}\cmidrule(lr){7-8} \cmidrule(lr){9-10}\cmidrule(lr){11-12} \cmidrule(lr){13-14} \cmidrule(lr){15-16} \cmidrule(lr){17-18} \cmidrule(lr){19-20} \cmidrule(lr){21-22} \cmidrule(lr){23-24}
    \multicolumn{2}{l}{\rotatebox{0}{\scaleb{Metrics}}}  & \scalea{MSE} & \scalea{MAE}  & \scalea{MSE} & \scalea{MAE}  & \scalea{MSE} & \scalea{MAE}  & \scalea{MSE} & \scalea{MAE}  & \scalea{MSE} & \scalea{MAE}  & \scalea{MSE} & \scalea{MAE} & \scalea{MSE} & \scalea{MAE} & \scalea{MSE} & \scalea{MAE} & \scalea{MSE} & \scalea{MAE} & \scalea{MSE} & \scalea{MAE} & \scalea{MSE} & \scalea{MAE} \\
    \midrule
    \multicolumn{2}{l}{\scalea{ETTm1}} & \bst{\scalea{0.372}} & \bst{\scalea{0.381}} & \subbst{\scalea{0.377}} & \subbst{\scalea{0.393}} & \subbst{\scalea{0.377}} & \subbst{\scalea{0.393}} & \scalea{0.407} & \scalea{0.410} & \scalea{0.386} & \scalea{0.397} & \scalea{0.393} & \scalea{0.403} & \scalea{0.387} & \scalea{0.400} & \scalea{0.513} & \scalea{0.496} & \scalea{0.400} & \scalea{0.406} & \scalea{0.404} & \scalea{0.408} & \scalea{0.448} & \scalea{0.452} \\
    \midrule
    \multicolumn{2}{l}{\scalea{ETTm2}} & \bst{\scalea{0.270}} & \bst{\scalea{0.317}} & \subbst{\scalea{0.272}} & \subbst{\scalea{0.321}} & \scalea{0.278} & \scalea{0.322} & \scalea{0.288} & \scalea{0.332} & \scalea{0.281} & \scalea{0.325} & \scalea{0.287} & \scalea{0.330} & \scalea{0.281} & \scalea{0.326} & \scalea{0.757} & \scalea{0.610} & \scalea{0.291} & \scalea{0.333} & \scalea{0.354} & \scalea{0.402} & \scalea{0.305} & \scalea{0.349} \\
    \midrule
    \multicolumn{2}{l}{\scalea{ETTh1}} & \bst{\scalea{0.419}} & \bst{\scalea{0.427}} & \subbst{\scalea{0.420}} & \subbst{\scalea{0.428}} & \scalea{0.441} & \scalea{0.434} & \scalea{0.454} & \scalea{0.447} & \scalea{0.431} & \scalea{0.429} & \scalea{0.449} & \scalea{0.442} & \scalea{0.469} & \scalea{0.454} & \scalea{0.529} & \scalea{0.522} & \scalea{0.458} & \scalea{0.450} & \scalea{0.461} & \scalea{0.457} & \scalea{0.440} & \scalea{0.460} \\
    \midrule
    \multicolumn{2}{l}{\scalea{ETTh2}} & \bst{\scalea{0.359}} & \bst{\scalea{0.391}} & \subbst{\scalea{0.365}} & \subbst{\scalea{0.398}} & \scalea{0.378} & \scalea{0.403} & \scalea{0.383} & \scalea{0.407} & \scalea{0.373} & \scalea{0.399} & \scalea{0.385} & \scalea{0.408} & \scalea{0.387} & \scalea{0.407} & \scalea{0.942} & \scalea{0.684} & \scalea{0.414} & \scalea{0.427} & \scalea{0.563} & \scalea{0.519} & \scalea{0.437} & \scalea{0.449} \\
    \midrule
    \multicolumn{2}{l}{\scalea{Weather}} & \bst{\scalea{0.236}} & \bst{\scalea{0.260}} & \subbst{\scalea{0.240}} & \scalea{0.270} & \scalea{0.242} & \subbst{\scalea{0.268}} & \scalea{0.258} & \scalea{0.279} & \scalea{0.242} & \scalea{0.272} & \scalea{0.255} & \scalea{0.278} & \scalea{0.259} & \scalea{0.281} & \scalea{0.259} & \scalea{0.315} & \scalea{0.259} & \scalea{0.287} & \scalea{0.265} & \scalea{0.315} & \scalea{0.309} & \scalea{0.360} \\
    \midrule
    \multicolumn{2}{l}{\scalea{ECL}} & \bst{\scalea{0.157}} & \bst{\scalea{0.252}} & \subbst{\scalea{0.159}} & \subbst{\scalea{0.256}} & \scalea{0.164} & \scalea{0.258} & \scalea{0.178} & \scalea{0.270} & \scalea{0.169} & \scalea{0.263} & \scalea{0.174} & \scalea{0.264} & \scalea{0.216} & \scalea{0.304} & \scalea{0.244} & \scalea{0.334} & \scalea{0.193} & \scalea{0.295} & \scalea{0.225} & \scalea{0.319} & \scalea{0.214} & \scalea{0.327} \\
    \midrule
    \multicolumn{2}{l}{\scalea{Traffic}} & \bst{\scalea{0.407}} & \bst{\scalea{0.254}} & \subbst{\scalea{0.408}} & \scalea{0.269} & \scalea{0.445} & \scalea{0.276} & \scalea{0.428} & \scalea{0.282} & \scalea{0.467} & \scalea{0.294} & \subbst{\scalea{0.409}} & \subbst{\scalea{0.267}} & \scalea{0.555} & \scalea{0.362} & \scalea{0.550} & \scalea{0.304} & \scalea{0.620} & \scalea{0.336} & \scalea{0.625} & \scalea{0.383} & \scalea{0.610} & \scalea{0.376} \\
    \midrule
    \multicolumn{2}{l}{\scalea{Solar}} & \subbst{\scalea{0.218}} & \bst{\scalea{0.254}} & \scalea{0.228} & \subbst{\scalea{0.262}} & \bst{\scalea{0.197}} & \scalea{0.255} & \scalea{0.233} & \scalea{0.262} & \scalea{0.230} & \scalea{0.264} & \scalea{0.229} & \scalea{0.256} & \scalea{0.270} & \scalea{0.307} & \scalea{0.641} & \scalea{0.639} & \scalea{0.301} & \scalea{0.319} & \scalea{0.330} & \scalea{0.401} & \scalea{0.292} & \scalea{0.381} \\
    \midrule
    \multicolumn{2}{l}{\scalea{PEMS03}} & \bst{\scalea{0.079}} & \bst{\scalea{0.183}} & \scalea{0.084} & \scalea{0.191} & \subbst{\scalea{0.080}} & \subbst{\scalea{0.186}} & \scalea{0.096} & \scalea{0.204} & \scalea{0.101} & \scalea{0.210} & \scalea{0.087} & \scalea{0.192} & \scalea{0.151} & \scalea{0.265} & \scalea{0.138} & \scalea{0.253} & \scalea{0.119} & \scalea{0.271} & \scalea{0.219} & \scalea{0.295} & \scalea{0.167} & \scalea{0.291} \\
    \midrule
    \multicolumn{2}{l}{\scalea{PEMS04}} & \bst{\scalea{0.078}} & \bst{\scalea{0.181}} & \scalea{0.083} & \scalea{0.186} & \subbst{\scalea{0.081}} & \subbst{\scalea{0.185}} & \scalea{0.098} & \scalea{0.207} & \scalea{0.102} & \scalea{0.213} & \scalea{0.091} & \scalea{0.196} & \scalea{0.162} & \scalea{0.273} & \scalea{0.145} & \scalea{0.267} & \scalea{0.109} & \scalea{0.220} & \scalea{0.236} & \scalea{0.350} & \scalea{0.195} & \scalea{0.308} \\
    \midrule
    \multicolumn{2}{l}{\scalea{PEMS07}} & \bst{\scalea{0.063}} & \bst{\scalea{0.156}} & \scalea{0.071} & \scalea{0.170} & \subbst{\scalea{0.065}} & \subbst{\scalea{0.161}} & \scalea{0.088} & \scalea{0.190} & \scalea{0.087} & \scalea{0.192} & \scalea{0.075} & \scalea{0.173} & \scalea{0.166} & \scalea{0.270} & \scalea{0.181} & \scalea{0.272} & \scalea{0.106} & \scalea{0.208} & \scalea{0.241} & \scalea{0.343} & \scalea{0.133} & \scalea{0.282} \\
    \midrule
    \multicolumn{2}{l}{\scalea{PEMS08}} & \bst{\scalea{0.080}} & \bst{\scalea{0.182}} & \subbst{\scalea{0.083}} & \subbst{\scalea{0.186}} & \scalea{0.105} & \scalea{0.203} & \scalea{0.127} & \scalea{0.212} & \scalea{0.102} & \scalea{0.211} & \scalea{0.114} & \scalea{0.208} & \scalea{0.238} & \scalea{0.289} & \scalea{0.232} & \scalea{0.270} & \scalea{0.150} & \scalea{0.244} & \scalea{0.347} & \scalea{0.421} & \scalea{0.234} & \scalea{0.326} \\
    \bottomrule
  \end{tabular}
  \begin{tablenotes}
    \item  \scriptsize \textit{Note}: We fix the input length as 96 for all baselines following~\citep{itransformer}. \bst{Bold} and \subbst{underlined} denote best and second-best results, respectively. \emph{Avg} indicates average results over multiple forecast horizons. CvLoss using TimeFilter~\citep{hu2025timefilter} as backbone.
    
    % \emph{Avg} indicates average results over forecast horizons: T=96, 192, 336 and 720 for standard datasets, and T=12, 24, 48 for PEMS datasets.
\end{tablenotes}
\end{threeparttable}
\end{table*}
\begin{table}[t]
  \caption{\textbf{Backbone-controlled summary of \time.} Every comparison contrasts a forecasting backbone trained with its standard objective against the \emph{identical} backbone, data split, and training protocol trained with \time, so the loss term is the only variable. A \emph{cell} is one (backbone, dataset, metric) triple evaluated at the dataset-level average over forecast horizons; W/T/L counts cells on which \time\ is better, equal, or worse, and the last two columns give the mean relative error reduction over the cells of that block.}
  \label{tab:controlled}
  \centering
  \setlength{\tabcolsep}{4.5pt}
  \renewcommand{\arraystretch}{1.05}
  \small
  \begin{threeparttable}
  \begin{tabular}{llccccc}
    \toprule
    \multirow{2}{*}{Evidence} & \multirow{2}{*}{Backbone(s) held fixed} & \multirow{2}{*}{Datasets} & \multirow{2}{*}{Cells} & \multirow{2}{*}{W\,/\,T\,/\,L} & \multicolumn{2}{c}{Mean reduction (\%)} \\
    \cmidrule(lr){6-7}
    & & & & & MSE & MAE \\
    \midrule
    Table~\ref{tab:overall}                    & TimeFilter                            & 12 & 24 & \best{24}\,/\,0\,/\,0 & 3.20 & 3.12 \\
    Table~\ref{tab:loss_avg}                   & TQNet, PDF                            & 4  & 16 & \best{15}\,/\,0\,/\,1 & 2.50 & 2.06 \\
    Table~\ref{tab:system_ablation_app}        & TQNet                                 & 4  & 8  & \best{8}\,/\,0\,/\,0  & 1.16 & 1.66 \\
    Table~\ref{tab2}                           & TQNet, TimeFilter, CFPT, iTransformer & 3  & 24 & \best{24}\,/\,0\,/\,0 & 1.61 & 2.17 \\
    Table~\ref{tab:comprehensive_all_variants} & iTransformer, DLinear, PatchTST       & 7  & 42 & \best{40}\,/\,2\,/\,0 & 5.29 & 5.66 \\
    \midrule
    \rowc \textbf{Total} & \textbf{7 distinct backbones} & \textbf{12} & \textbf{114} & \textbf{111\,/\,2\,/\,1} & \textbf{3.39} & \textbf{3.61} \\
    \bottomrule
  \end{tabular}
  \begin{tablenotes}
    \item \scriptsize \textit{Note}: The seven distinct backbones are TimeFilter, TQNet, PDF, CFPT, iTransformer, DLinear and PatchTST; the twelve distinct datasets are the four ETT subsets, Weather, ECL, Traffic, Solar and PEMS03/04/07/08. Blocks follow the evaluation protocol of the work each one compares against, so they are overlapping rather than disjoint samples and the total is a summary rather than a pooled statistical test. \emph{The single loss} is TQNet/ETTh1 MAE in Table~\ref{tab:loss_avg} ($0.434\!\rightarrow\!0.438$). \emph{The two ties} are DLinear/Weather MSE and PatchTST/ECL MSE in Table~\ref{tab:comprehensive_all_variants}, which are equal at the reported three-decimal precision. Counting the same evidence at the finer horizon level rather than at the dataset average gives 96/0/0 over the 96 comparisons of Table~\ref{tab2} and 31/1/0 over the 32 comparisons of Table~\ref{tab:system_ablation_app}.
  \end{tablenotes}
  \end{threeparttable}
\end{table}

% In this section, we compare the long-term and short-term forecasting results. As shown in Table~\ref{tab:overall}, integrating \time\ yields consistent improvements in forecast accuracy across all evaluated datasets. For instance, on the Traffic dataset, \time\ achieves a notable reduction in both MAE by 0.014. We attribute the enhanced performance to the explicit structural regularization of the \time\, which effectively captures two critical spatiotemporal patterns ignored by point-wise objectives: synchronous concurrent interactions and asynchronous lagged dependencies across variables.

Table~\ref{tab:overall} compares long- and short-term forecasting results. Adding \time\ consistently improves accuracy across datasets; on Traffic, for example, MAE decreases by 0.014. The gains come from explicit structural regularization over synchronous and asynchronous cross-variable dependencies.

\paragraph{Backbone-controlled attribution.} Table~\ref{tab:overall} answers two questions at once, and they should not be conflated. The pair of columns \time\ (Ours) and TimeFilter is backbone-controlled: \time\ \emph{is} TimeFilter, trained on the same data with the same protocol and differing only in the loss term, and it improves all $24$ dataset-level entries ($12$ datasets $\times\,2$ metrics) with no tie and no loss. The remaining columns answer a different question, namely whether the backbone we build on is itself competitive, and contribute nothing to the attribution.

Because a single backbone cannot rule out an interaction between \time\ and TimeFilter, Table~\ref{tab:controlled} consolidates every fixed-backbone comparison in this paper into one view. Across $114$ such comparisons, spanning seven backbones from three architecture families (Transformer, linear, and hybrid decomposition designs), twelve datasets and both metrics, \time\ improves $111$, ties $2$ and loses $1$, with a mean relative error reduction of $3.39\%$ in MSE and $3.61\%$ in MAE. The effect is therefore a property of the objective rather than of any one architecture. Two qualifications belong with that number. First, the blocks follow the protocols of the works they compare against and overlap in backbones and datasets, so the total summarises the evidence rather than constituting a pooled significance test; per-block paired effect sizes with confidence intervals are reported in Appendix~\ref{app:paired}. Second, the reductions are unevenly distributed. They are large where the data carry strong cross-variable structure that the backbone leaves unmodelled (Traffic on iTransformer, $-17.7\%$ MSE; PEMS07 on TimeFilter, $-11.3\%$), and small where either condition fails (ETTh1, $-0.24\%$ MSE on TimeFilter). Section~\ref{sec:hyper} and Appendix~\ref{app:heatmap_vis} examine this dependence directly.

% \paragraph{Examples.} A qualitative comparison between forecasts generated by DF versus XXXX is presented in Figure~\ref{fig:case}. The model trained with \time\ captures general periodic patterns, but it often fails to model accurate amplitudes. For example, on ETTm2, it suffers from severe amplitude attenuation, and on ECL, it misses the high periodic peaks around the 250th step. In contrast, DF accurately captures these large peak magnitudes, which showcases its practical utility to improve real-world forecast performance.

\paragraph{Examples.} A qualitative comparison between forecasts generated by DF versus \time\ is presented in Figure~\ref{fig:case}. While the model trained with DF captures general periodic trends, it often fails to model accurate amplitudes and intricate temporal dynamics. For example, on the ETTm2 dataset, it suffers from severe amplitude attenuation, completely missing the prominent ground-truth peaks. Similarly, on ECL, it overestimates the high periodic peaks, such as those observed around the 1750th and 250th steps. In contrast, \time\ accurately captures these large peak magnitudes and complex fluctuations.

\begin{figure}
\begin{center}
\subfigure[ETTm2 snapshot.]{\includegraphics[width=0.24\linewidth]{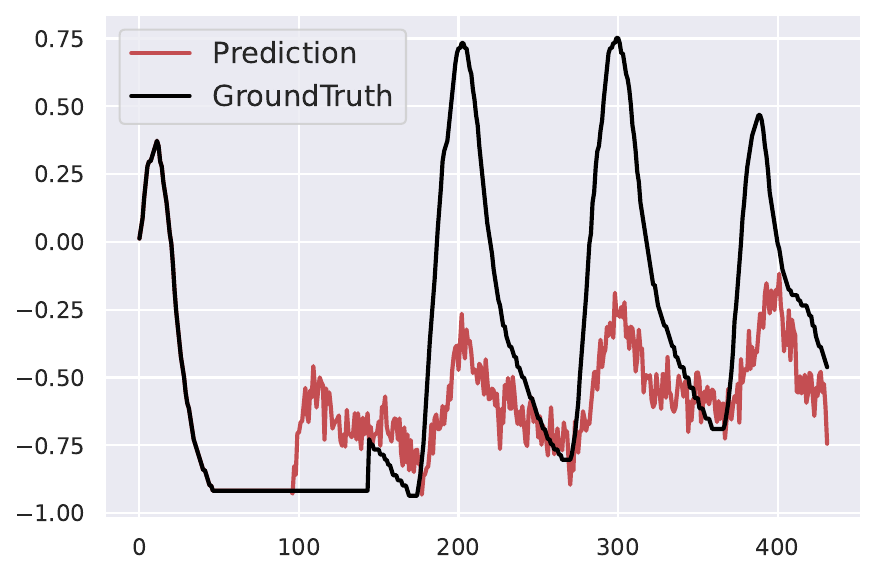}
\includegraphics[width=0.24\linewidth]{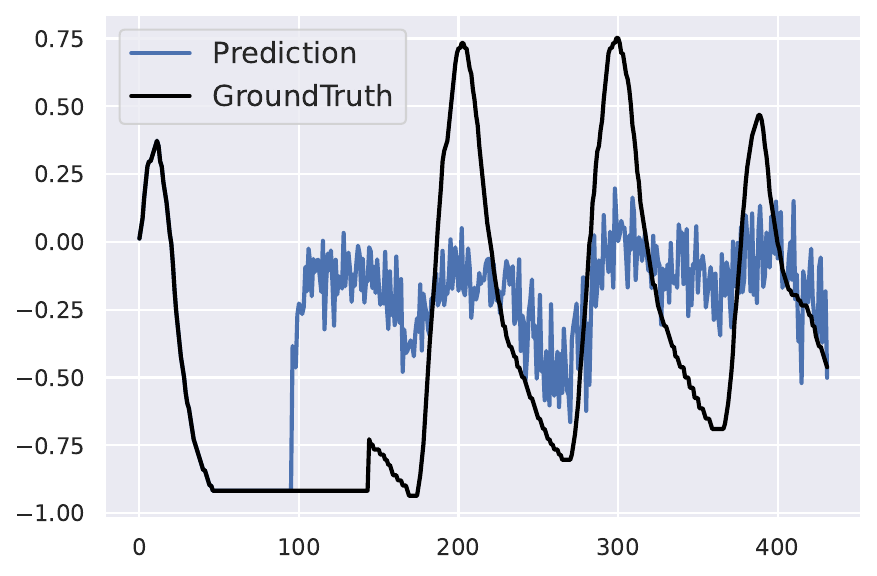}
}
\subfigure[ECL snapshot.]{\includegraphics[width=0.24\linewidth]{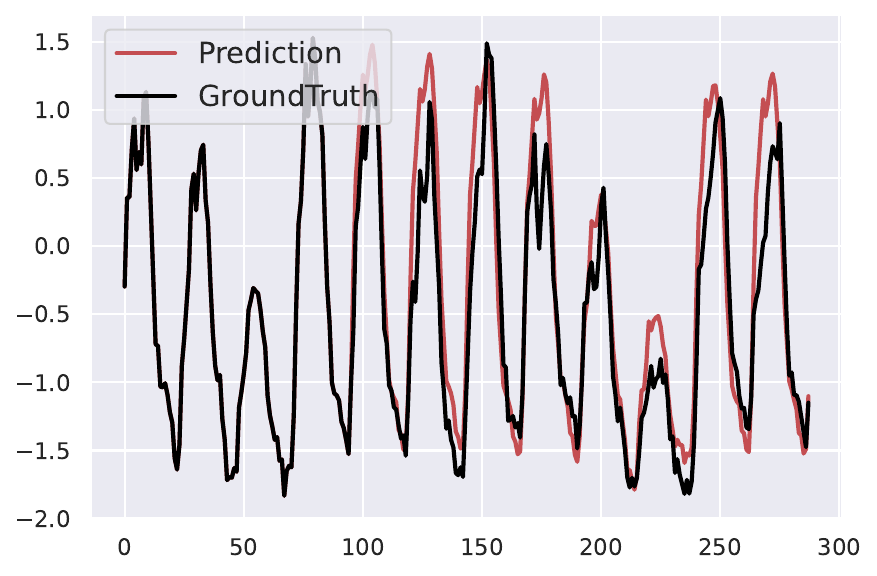}
\includegraphics[width=0.24\linewidth]{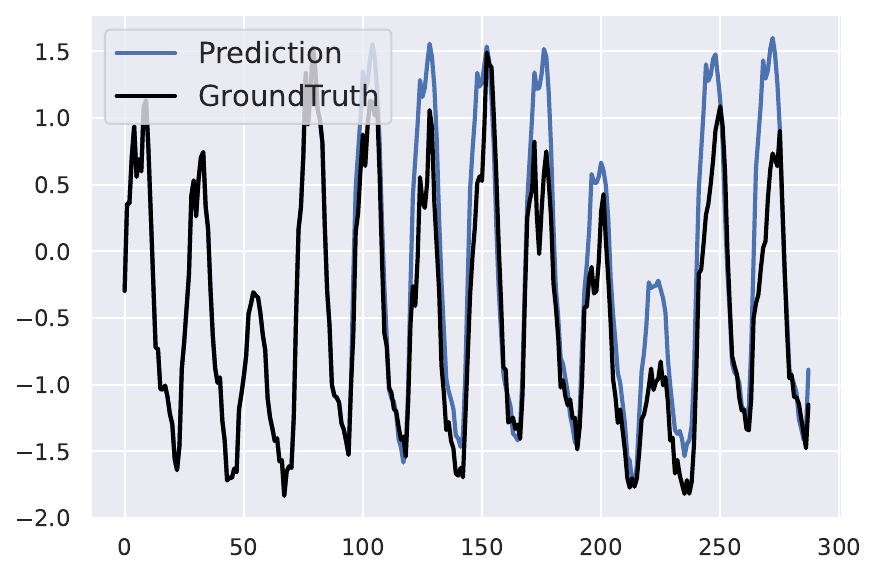}
}
\caption{The forecast sequence of DF (in blue) and \time\ (in red), with historical length $\mathrm{H}=96$.}\label{fig:case}
\end{center}
\end{figure}

% \subsection{Learning objective comparison}
% \label{sec:compete}

% XXXXXXXXXXXX
% Appendix~\ref{other_learning_objectives}
\subsection{Learning objective comparison}
\label{sec:compete}
\begin{table}[t]
\centering
\begin{threeparttable}
\caption{Comparable results with other learning objectives for time-series forecasting.}\label{tab:loss_avg}
\renewcommand{\arraystretch}{1}
\setlength{\tabcolsep}{4.6pt} 
\scriptsize
\renewcommand{\multirowsetup}{\centering}
\begin{tabular}{c|l|cc|cc|cc|cc|cc|cc|cc}
\toprule
\multicolumn{2}{l}{Loss} & 
\multicolumn{2}{c}{\textbf{Ours}} &
\multicolumn{2}{c}{QDF} &
\multicolumn{2}{c}{Time-o1} &
\multicolumn{2}{c}{FreDF} &
\multicolumn{2}{c}{Koopman} &
\multicolumn{2}{c}{Soft-DTW} &
\multicolumn{2}{c}{DF} \\
\cmidrule(lr){3-4} \cmidrule(lr){5-6}\cmidrule(lr){7-8} \cmidrule(lr){9-10}\cmidrule(lr){11-12}\cmidrule(lr){13-14}\cmidrule(lr){15-16}
\multicolumn{2}{l}{Metrics}  & MSE & MAE  & MSE & MAE  & MSE & MAE  & MSE & MAE  & MSE & MAE  & MSE & MAE  & MSE & MAE  \\
\toprule

\multirow{4}{*}{{\rotatebox{90}{\scalebox{0.95}{TQNet}}}}
& ETTm1 & 0.373 & \bst{0.386} & \bst{0.371} & 0.389 & \subbst{0.372} & 0.390 & 0.375 & 0.390 & 0.499 & 0.595 & 0.394 & \subbst{0.387} & 0.377 & 0.393 \\
& ETTh1 & \bst{0.430} & 0.438 & \subbst{0.431} & \bst{0.431} & 0.432 & 0.437 & 0.432 & \subbst{0.432} & 0.442 & 0.451 & 0.453 & 0.438 & 0.441 & 0.434 \\
& ECL   & \bst{0.162} & \bst{0.253} & \subbst{0.165} & \subbst{0.257} & 0.167 & \subbst{0.257} & 0.168 & \subbst{0.257} & 0.166 & 0.258 & 0.524 & 0.623 & \subbst{0.165} & 0.259 \\
& Weather & \bst{0.241} & \bst{0.264} & \subbst{0.242} & \subbst{0.268} & 0.245 & 0.269 & 0.244 & \subbst{0.268} & 0.306 & 0.282 & 0.276 & 0.255 & \subbst{0.242} & \subbst{0.268} \\
\midrule

\multirow{4}{*}{{\rotatebox{90}{\scalebox{0.95}{PDF}}}}
& ETTm1 & \bst{0.378} & \bst{0.383} & \subbst{0.381} & \subbst{0.394} & 0.386 & 0.399 & 0.387 & 0.400 & 0.587 & 0.485 & 0.396 & 0.404 & 0.387 & 0.396 \\
& ETTh1 & \bst{0.429} & \subbst{0.432} & \subbst{0.436} & \bst{0.429} & 0.438 & 0.438 & 0.437 & 0.435 & 0.497 & 0.472 & 0.447 & 0.447 & 0.452 & 0.440 \\
& ECL   & \bst{0.189} & \bst{0.269} & \subbst{0.194} & 0.277 & 0.195 & 0.276 & \subbst{0.194} & \subbst{0.274} & 0.196 & 0.281 & 0.695 & 0.548 & 0.198 & 0.281 \\
& Weather & \bst{0.259} & \bst{0.276} & \bst{0.259} & \subbst{0.281} & \subbst{0.264} & 0.284 & 0.268 & 0.287 & 0.268 & 0.290 & 1.296 & 0.452 & 0.265 & 0.283 \\
\bottomrule
\end{tabular}
\begin{tablenotes}
\item  \tiny \textit{Note}:  \bst{Bold} and \subbst{underlined} denote best and second-best results, respectively. Follow the settings of QDF~\citep{wang2026iclrqdf}.
% The reported results are averaged over forecast horizons: T=96, 192, 336 and 720.
\end{tablenotes}
\end{threeparttable}
\end{table}
% To comprehensively evaluate our proposed objective, we compare \textbf{\time} against 11 established time-series learning objectives encompassing four main categories: shape-alignment, likelihood maximization, distribution balancing, and decomposition-based methods. Table~\ref{tab:loss_avg} presents a summarized view of the average forecasting performance across four representative datasets. For a detailed comparison against all 11 baselines across individual prediction lengths, please refer to the comprehensive results in Appendix Table ~\ref{tab:multistep_app_full}.

\time\ is compared with 11 learning objectives from four categories: shape alignment, likelihood maximization, distribution balancing, and decomposition. Table~\ref{tab:loss_avg} summarizes average performance on four datasets, with full horizon-level results in Appendix Table~\ref{tab:multistep_app_full}.

% While traditional shape-alignment (e.g., Soft-DTW~\citep{soft-dtw}, Koopman~\citep{koopman}) and recent likelihood maximization approaches (e.g., Time-o1~\citep{wang2025nipstimeo1}, FreDF~\citep{wang2025fredf}, QDF~\citep{wang2026iclrqdf}) demonstrate certain improvements over the standard MSE, they fundamentally treat multivariate forecasting as isolated scalar predictions. Consequently, they fail to capture the complex geometry of multivariate errors, leaving the strictly non-zero spatiotemporal objective gap ($\Delta \neq 0$) unaddressed, as theoretically identified in Theorem ~\ref{thm:spatiotemporal_nll}. In contrast, \time\ overcomes this bottleneck by acting as an $\ell_{1}$-type graph total variation regularizer that explicitly penalizes discrepancies in edge-wise residual structures. As a result, \time\ consistently achieves state-of-the-art performance—securing the $1^{\text{st}}$ place count in 20 out of the evaluated settings (Appendix Table ~\ref{tab:multistep_app_full})—and demonstrates robust generalizability across different model architectures like TQNet~\citep{lin2025TQNet} and PDF~\citep{dai2024pdf} (Appendix Table ~\ref{tab:loss-compare}).

Shape-alignment losses (e.g., Soft-DTW~\citep{soft-dtw}, Koopman~\citep{koopman}) and likelihood objectives (e.g., Time-o1~\citep{wang2025nipstimeo1}, FreDF~\citep{wang2025fredf}, QDF~\citep{wang2026iclrqdf}) improve over MSE, but still treat multivariate outputs as scalar predictions. They therefore leave the spatiotemporal objective gap ($\Delta\neq0$) in Theorem~\ref{thm:spatiotemporal_nll} unaddressed. By penalizing edge-wise residual discrepancies, \time\ ranks first in 20 settings (Appendix Table~\ref{tab:multistep_app_full}) and generalizes across TQNet~\citep{lin2025TQNet} and PDF~\citep{dai2024pdf} (Appendix Table~\ref{tab:loss-compare}).

\subsection{Generalization studies}\label{sec:generalize}
% In this section, we assess the generalizability of \time\ by applying it to across various forecast models and comparing to DBLoss~\citep{qiu2025DBLoss}. 

% \paragraph{Varying forecast models.}
% In this section, we assess the generalizability of \time\ by applying it across various forecast models. We further demonstrate the flexibility of \time\ by integrating it into representative forecasting models, including TimeFilter, TQNet, CFPT~\citep{CFPT}, and iTransformer. As shown in \autoref{fig:backbone}, \time\ consistently improves forecasting performance across different model architectures and datasets. Specifically, on the ECL dataset, all four backbones achieve lower errors with \time\, with iTransformer showing the most pronounced gains, reducing MSE by 8.6\% and MAE by 7.9\%. On the Weather dataset, \time\ also brings consistent improvements, yielding reductions of 1.3\%--2.3\% in MSE and 1.9\%--3.3\% in MAE across the tested models. These results highlight \time\ as a general and plug-and-play enhancement that can effectively benefit a variety of forecasting models.

We test \time\ on TimeFilter, TQNet, CFPT~\citep{CFPT}, and iTransformer. As shown in \autoref{fig:backbone}, \time\ improves different architectures and datasets. On ECL, all four backbones improve, with iTransformer reducing MSE by 8.6\% and MAE by 7.9\%. On Weather, MSE drops by 1.3\%--2.3\% and MAE by 1.9\%--3.3\%, showing that \time\ is broadly plug-and-play.

\begin{figure}
\begin{center}
\subfigure[ECL with MSE]{\includegraphics[width=0.24\linewidth]{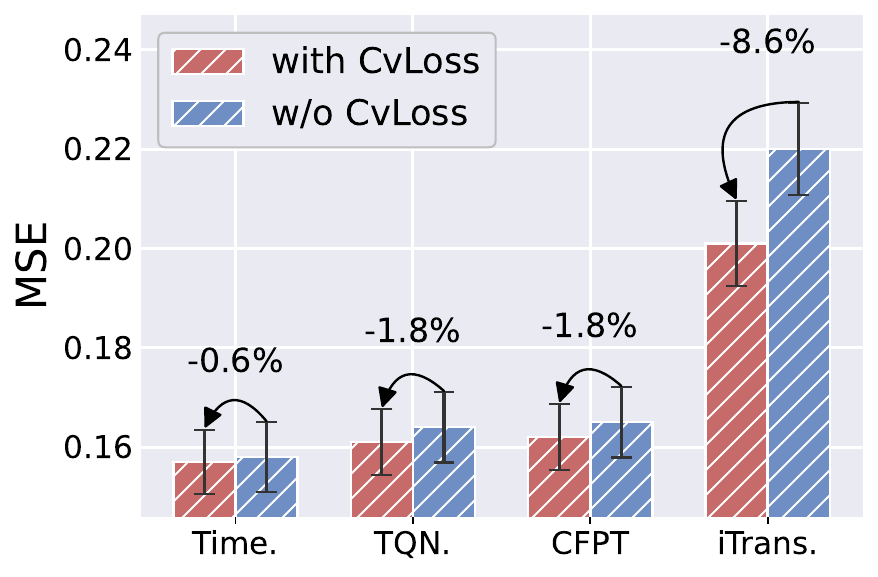}}
\subfigure[ECL with MAE]{\includegraphics[width=0.24\linewidth]{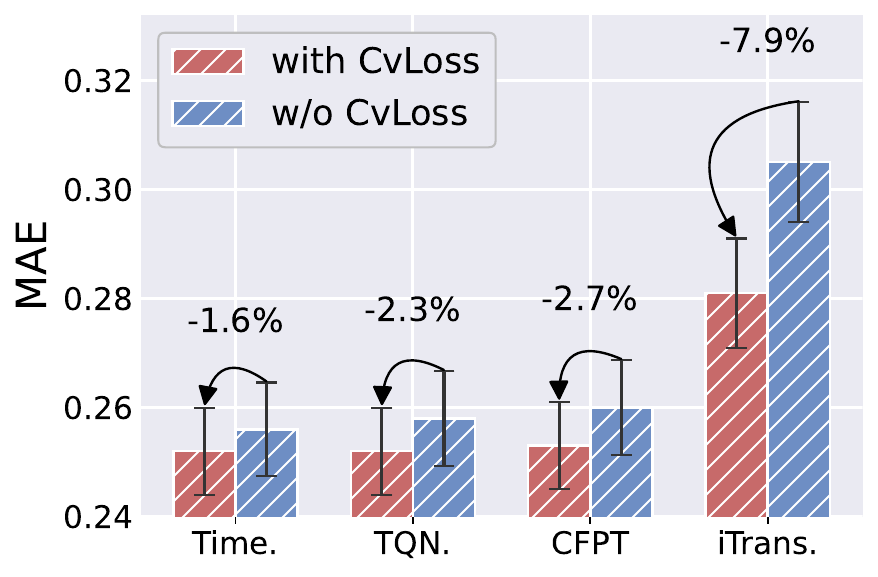}}
\subfigure[Weather with MSE]{\includegraphics[width=0.24\linewidth]{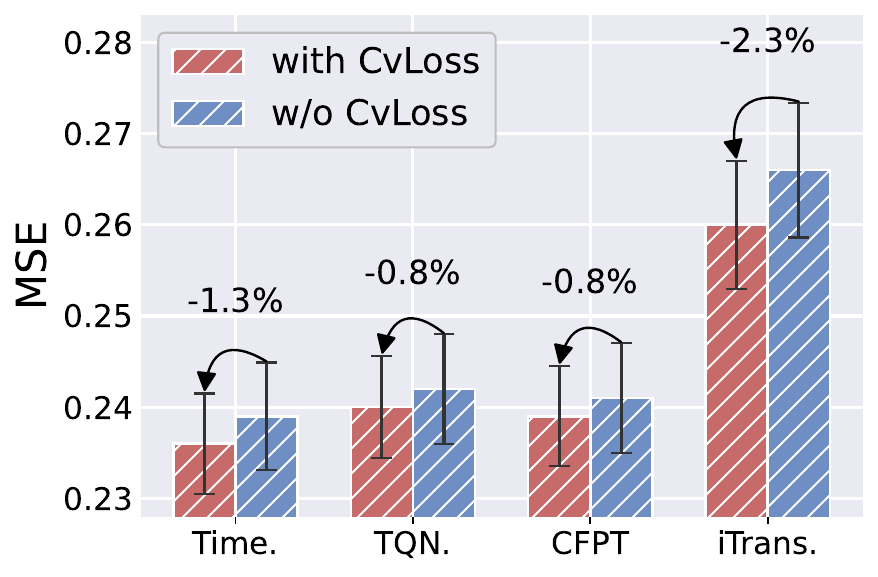}}
\subfigure[Weather with MAE]{\includegraphics[width=0.24\linewidth]{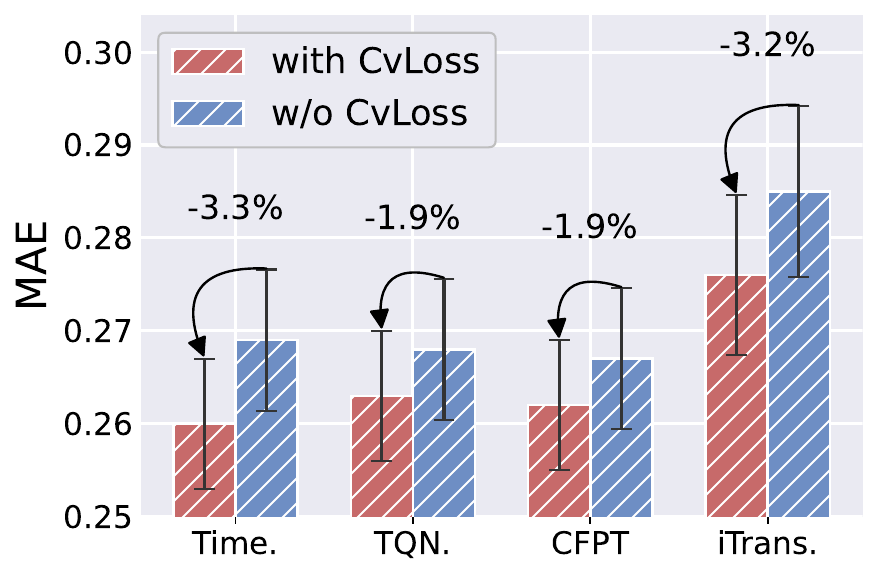}}
\caption{Improvement of \time\ applied to different forecast models, shown with colored bars for means over forecast lengths (96, 192, 336, 720) and error bars for 50\% confidence intervals. }
\label{fig:backbone}
\end{center}
\end{figure}

% \paragraph{Comprehensive comparison with DBLoss.}
% To further quantify the macro-level improvements, we aggregate the forecasting errors across all seven datasets and three backbones detailed in Table~\ref{tab:comprehensive_all_variants}. Overall, \time\ achieves a substantial global error reduction of 6.18\% in MSE and 5.95\% in MAE compared to the original models. In contrast, DBLoss yields a much more limited average reduction of 2.50\% in MSE and 3.60\% in MAE. Notably, \time\ demonstrates exceptional robustness regardless of the backbone's inherent capacity; for instance, when applied to the simpler DLinear model, it yields a striking MSE improvement of nearly 9.8\%. 
% % These aggregate results confirm that \time\ provides consistent performance gains that are substantially greater than those of DBLoss across various architectures.

\begin{table}
\caption{Ablation study results.}\label{tab:system_ablation_app}
\setlength{\tabcolsep}{4pt}
\scriptsize
\centering
\begin{threeparttable}
\begin{tabular}{lcclccccccccccccccc}
    \toprule
    \multirow{2}{*}{Model} & \multirow{2}{*}{Sync.} & \multirow{2}{*}{Async.} &\multirow{2}{*}{Data} && \multicolumn{2}{c}{T=96} && \multicolumn{2}{c}{T=192} && \multicolumn{2}{c}{T=336} && \multicolumn{2}{c}{T=720} && \multicolumn{2}{c}{Avg} \\
    \cmidrule{6-7} \cmidrule{9-10} \cmidrule{12-13} \cmidrule{15-16} \cmidrule{18-19}
    &&&&& MSE  & MAE && MSE & MAE && MSE & MAE && MSE & MAE && MSE & MAE \\
    \midrule
    
\multirow{4}{*}{DF} & \multirow{4}{*}{\XSolidBrush} & \multirow{4}{*}{\XSolidBrush}
&   ETTm1 && 0.311 & 0.353 && 0.357 & 0.378 && 0.390 & 0.401 && 0.449 & 0.439 && 0.377 & 0.393 \\
&&& ETTh1 && 0.371 & 0.393 && 0.428 & 0.426 && 0.475 & 0.446 && \subbst{0.488} & \subbst{0.470} && \subbst{0.441} & \subbst{0.434} \\
&&& ECL && 0.134 & 0.229 && 0.154 & 0.247 && 0.170 & 0.265 && 0.200 & 0.293 && 0.165 & 0.259 \\
&&& Weather && 0.156 & 0.200 && 0.205 & 0.245 && \subbst{0.262} & 0.287 && 0.343 & 0.342 && 0.242 & 0.269 \\
\midrule

\multirow{4}{*}{{\time}$^\dagger$} & \multirow{4}{*}{\Checkmark} & \multirow{4}{*}{\XSolidBrush}
&   ETTm1 && 0.306 & 0.344 && 0.357 & \bst{0.372} && \subbst{0.388} & \bst{0.399} && 0.450 & 0.439 && 0.375 & 0.389 \\
&&& ETTh1 && 0.369 & 0.387 && \subbst{0.426} & 0.422 && 0.475 & 0.444 && 0.505 & 0.485 && 0.444 & 0.435 \\
&&& ECL && 0.134 & 0.228 && \subbst{0.152} & \subbst{0.243} && \subbst{0.169} & \subbst{0.261} && \subbst{0.199} & \subbst{0.289} && 0.164 & \subbst{0.255} \\
&&& Weather && 0.159 & 0.202 && 0.208 & 0.246 && 0.265 & 0.287 && 0.344 & 0.341 && 0.244 & 0.269 \\
\midrule

\multirow{4}{*}{{\time}$^\ddagger$} & \multirow{4}{*}{\XSolidBrush} & \multirow{4}{*}{\Checkmark}
&   ETTm1 && \subbst{0.305} & \subbst{0.343} && \subbst{0.353} & 0.373 && 0.390 & 0.400 && \subbst{0.448} & \subbst{0.431} && \subbst{0.374} & \subbst{0.387} \\
&&& ETTh1 && \subbst{0.369} & \subbst{0.387} && 0.428 & \subbst{0.422} && \subbst{0.473} & \subbst{0.444} && 0.505 & 0.485 && 0.444 & 0.435 \\
&&& ECL && \subbst{0.134} & \subbst{0.228} && 0.153 & 0.244 && 0.170 & 0.262 && 0.200 & 0.290 && \subbst{0.164} & 0.256 \\
&&& Weather && \subbst{0.156} & \subbst{0.196} && \subbst{0.203} & \subbst{0.241} && 0.263 & \subbst{0.286} && \subbst{0.343} & \subbst{0.340} && \subbst{0.241} & \subbst{0.266} \\
\midrule

\multirow{4}{*}{{\time}} & \multirow{4}{*}{\Checkmark} & \multirow{4}{*}{\Checkmark}
&   ETTm1 && \bst{0.305} & \bst{0.343} && \bst{0.352} & \subbst{0.373} && \bst{0.388} & \subbst{0.400} && \bst{0.444} & \bst{0.430} && \bst{0.372} & \bst{0.387} \\
&&& ETTh1 && \bst{0.368} & \bst{0.385} && \bst{0.426} & \bst{0.422} && \bst{0.471} & \bst{0.443} && \bst{0.486} & \bst{0.469} && \bst{0.438} & \bst{0.430} \\
&&& ECL && \bst{0.133} & \bst{0.225} && \bst{0.151} & \bst{0.241} && \bst{0.166} & \bst{0.258} && \bst{0.196} & \bst{0.287} && \bst{0.162} & \bst{0.253} \\
&&& Weather && \bst{0.154} & \bst{0.192} && \bst{0.203} & \bst{0.238} && \bst{0.262} & \bst{0.285} && \bst{0.342} & \bst{0.339} && \bst{0.240} & \bst{0.264} \\
    \bottomrule
\end{tabular}
\begin{tablenotes}
    \item  \tiny \textit{Note}:  \bst{Bold} and \subbst{underlined} denote best and second-best results, respectively. “Sync.” and “Async.” are abbreviations for synchronous and asynchronous.
\end{tablenotes}
\end{threeparttable}
\end{table}

\subsection{Ablation studies}\label{sec:ablation}

To investigate the impact of cross-variable interaction designs within our proposed CVL, we conduct ablation studies on the construction of the edge set $\mathcal{E}$. Specifically, we compare the following variants:
\begin{itemize}[leftmargin=*]
    \item {\time}$^\dagger$ utilizes a synchronous edge set $\mathcal{E}$ that strictly captures cross-variable interactions at the same time step.
    \item {\time}$^\ddagger$ utilizes an asynchronous edge set $\mathcal{E}$ that exclusively models lagged cross-variable interactions across different time steps (disallowing same-time interactions).
    \item {\time} is our default setting, where $\mathcal{E}$ is a fully connected cross-variable graph that permits both synchronous and lagged interactions.
\end{itemize}

As shown in Table \ref{tab:system_ablation_app}, the default {\time} generally outperforms both {\time}$^\dagger$ and {\time}$^\ddagger$ in most settings. This confirms that integrating both synchronous and asynchronous interactions within $\mathcal{E}$ is crucial for capturing complex dependencies and yields the best overall forecasting performance.

% \subsection{Hyperparameter sensitivity}\label{sec:hyper}
% In this section, we analyze how varying the weight of the Cross-Variable Loss ($\alpha$) influences the forecasting performance, as summarized in Table 4 and Table 5 for TQNet and TimeFilter. On the one hand, increasing $\alpha$ from 0 generally leads to consistent performance improvements; for example, integrating CvLoss into TQNet reduces MSE from 0.379 to 0.365 on ETTh2, and TimeFilter sees MSE drop from 0.159 to 0.157 on ECL, explicitly showcasing the utility of incorporating cross-variable structural dependencies into the learning objective. On the other hand, while a pure structural penalty ($\alpha=1$) achieves the best results in specific scenarios (e.g., TQNet on ETTh2), the optimal performance is typically observed around $\alpha=0.5$. For instance, on the ECL dataset, both models achieve their lowest errors at $\alpha=0.5$ but experience a performance drop when $\alpha$ reaches 1. This underscores the complementary role of the standard point-wise MSE, which acts as a mathematically necessary anchor to ensure point-wise accuracy and prevent trivial translation-invariant solutions.

\subsection{Hyperparameter sensitivity}\label{sec:hyper}
Tables~\ref{tab:sensi-TQNet} and \ref{tab:sensi-TimeFilter} report the sensitivity of CvLoss to the loss weight $\alpha$ on TQNet and TimeFilter. Increasing $\alpha$ from 0 yields consistent improvements, indicating the benefit of incorporating cross-variable structural dependencies into the learning objective across different backbones and datasets. For example, CvLoss reduces the MSE of TQNet from 0.379 to 0.365 on ETTh2, and that of TimeFilter from 0.159 to 0.157 on ECL. However, using the structural penalty ($\alpha=1$) is not always optimal; on ECL, both models perform best around $\alpha=0.5$ and degrade when $\alpha$ reaches 1. This suggests that CvLoss is most effective when combined with point-wise MSE, which anchors forecasting accuracy and prevents translation-invariant solutions under structural regularization.

\input{tab/table_hparam2}

\section{Conclusion}

This paper studies the overlooked output-side dependencies in multivariate time series forecasting. We show that the standard point-wise DF objective is mismatched with real-world spatiotemporal correlations, and introduce CvLoss to explicitly regularize cross-variable residual consistency. CvLoss is implemented as a graph total variation penalty over forecast patches, allowing both synchronous and lagged interactions to be modeled without changing the inference procedure. Extensive experiments show that CvLoss consistently improves SOTA forecasting backbones, outperforms competitive learning objectives, and stays efficient in practice. These results suggest that modeling the structural geometry of future variables is a simple but effective direction for improving multivariate forecasting.

\section{Limitations}

\time\ uses a predefined cross-variable graph and a fixed patching scheme. Although random edge sampling improves scalability, the graph may still include uninformative interactions, especially for very high-dimensional systems with sparse or domain-specific dependencies. Learning adaptive graph structures and patch sizes could further improve efficiency and interpretability. In addition, this work mainly evaluates deterministic forecasting on regular benchmark datasets; extending \time\ to probabilistic forecasting, irregular sampling, and missing-value scenarios is left for future work.

Two limitations concern the relationship between our analysis and the objective we actually optimise, and we state them explicitly. First, the implemented loss is not the object that Theorem~\ref{thm:spatiotemporal_nll} literally implicates. The theorem shows that a non-diagonal precision matrix is necessary, and Proposition~\ref{prop:precision} identifies the corresponding Gaussian family, but that family is generated by a \emph{squared} graph penalty; the $\ell_1$ penalty we deploy defines a pairwise Gibbs field over the same graph that is not Gaussian. The choice is empirical rather than a theoretical consequence: Appendix~\ref{app:norm_choice} shows that both norms help and that $\ell_1$ helps more under our protocol. The theory should therefore be read as motivating a relation-aware constraint rather than as deriving this one. Second, the default uniform fully connected graph assigns the same weight to every cross-variable pair, whereas real dependencies are heterogeneous. We use it deliberately: by Proposition~\ref{prop:precision} the edge set is the support of the off-diagonal entries, so a complete graph is the least informative choice available rather than an estimated structure. It is nonetheless an uninformative prior. The heterogeneous alternatives already present in the paper, namely top-$K$ selection by the measured structural discrepancy (Appendix~\ref{app:complexity_and_selection}) and the synchronous and asynchronous supports of Section~\ref{sec:ablation}, are only partial substitutes for a learned weighting.

% \newpage
\small
\bibliography{main}

\begin{thebibliography}{45}
\providecommand{\natexlab}[1]{#1}
\providecommand{\url}[1]{\texttt{#1}}
\expandafter\ifx\csname urlstyle\endcsname\relax
  \providecommand{\doi}[1]{doi: #1}\else
  \providecommand{\doi}{doi: \begingroup \urlstyle{rm}\Url}\fi

\bibitem[Liu et~al.(2024)Liu, Hu, Zhang, Wu, Wang, Ma, and Long]{itransformer}
Yong Liu, Tengge Hu, Haoran Zhang, Haixu Wu, Shiyu Wang, Lintao Ma, and
  Mingsheng Long.
\newblock itransformer: Inverted transformers are effective for time series
  forecasting.
\newblock In \emph{International Conference on Learning Representations}, 2024.

\bibitem[Zhao and Shen(2024)]{LIFT}
Lifan Zhao and Yanyan Shen.
\newblock Rethinking channel dependence for multivariate time series
  forecasting: Learning from leading indicators.
\newblock In \emph{The Twelfth International Conference on Learning
  Representations}, 2024.

\bibitem[Qiu et~al.(2025{\natexlab{a}})Qiu, Wu, Lin, Guo, Hu, and
  Yang]{qiu2025duet}
Xiangfei Qiu, Xingjian Wu, Yan Lin, Chenjuan Guo, Jilin Hu, and Bin Yang.
\newblock Duet: Dual clustering enhanced multivariate time series forecasting.
\newblock In \emph{SIGKDD}, pages 1185--1196, 2025{\natexlab{a}}.

\bibitem[Hu et~al.(2025{\natexlab{a}})Hu, Zhang, Liu, Lan, Li, Cheng, Dai, Xia,
  and Pan]{hu2025timefilter}
Yifan Hu, Guibin Zhang, Peiyuan Liu, Disen Lan, Naiqi Li, Dawei Cheng, Tao Dai,
  Shu-Tao Xia, and Shirui Pan.
\newblock Timefilter: Patch-specific spatial-temporal graph filtration for time
  series forecasting.
\newblock In \emph{Forty-second International Conference on Machine Learning},
  2025{\natexlab{a}}.

\bibitem[Ding et~al.(2025)Ding, Fan, Wang, Jian, Wang, Gong, Jiang, Luo, and
  Zhan]{ding2025dualsg}
Kuiye Ding, Fanda Fan, Yao Wang, Ruijie Jian, Xiaorui Wang, Luqi Gong, Yishan
  Jiang, Chunjie Luo, and Jianfeng Zhan.
\newblock Dualsg: A dual-stream explicit semantic-guided multivariate time
  series forecasting framework.
\newblock In \emph{Proceedings of the 33rd ACM International Conference on
  Multimedia}, pages 508--517, 2025.

\bibitem[Wang et~al.(2025{\natexlab{a}})Wang, Pan, Chen, Yang, Zhang, Yang,
  Liu, Li, and Tao]{wang2025fredf}
Hao Wang, Licheng Pan, Zhichao Chen, Degui Yang, Sen Zhang, Yifei Yang, Xinggao
  Liu, Haoxuan Li, and Dacheng Tao.
\newblock Fredf: Learning to forecast in the frequency domain.
\newblock In \emph{ICLR}, 2025{\natexlab{a}}.

\bibitem[Hu et~al.(2025{\natexlab{b}})Hu, Li, Liu, Zhu, Li, Dai, tao Xia,
  Cheng, and Jiang]{hu2025fintsb}
Yifan Hu, Yuante Li, Peiyuan Liu, Yuxia Zhu, Naiqi Li, Tao Dai, Shu tao Xia,
  Dawei Cheng, and Changjun Jiang.
\newblock Fintsb: A comprehensive and practical benchmark for financial time
  series forecasting.
\newblock \emph{arXiv preprint arXiv:2502.18834}, 2025{\natexlab{b}}.

\bibitem[Hu et~al.(2025{\natexlab{c}})Hu, Liu, Li, Cheng, Li, Dai, Bao, and
  Shu-Tao]{hu2025finmamba}
Yifan Hu, Peiyuan Liu, Yuante Li, Dawei Cheng, Naiqi Li, Tao Dai, Jigang Bao,
  and Xia Shu-Tao.
\newblock Finmamba: Market-aware graph enhanced multi-level mamba for stock
  movement prediction.
\newblock \emph{arXiv preprint arXiv:2502.06707}, 2025{\natexlab{c}}.

\bibitem[Wang et~al.(2024)Wang, Wu, Dong, Liu, Long, and
  Wang]{wang2024tssurvey}
Yuxuan Wang, Haixu Wu, Jiaxiang Dong, Yong Liu, Mingsheng Long, and Jianmin
  Wang.
\newblock Deep time series models: A comprehensive survey and benchmark.
\newblock 2024.

\bibitem[Qiu et~al.(2026)Qiu, Cheng, Wu, Lu, Hu, Guo, Jensen, and
  Yang]{qiu2026survey}
Xiangfei Qiu, Hanyin Cheng, Xingjian Wu, Junkai Lu, Jilin Hu, Chenjuan Guo,
  Christian~S. Jensen, and Bin Yang.
\newblock A comprehensive survey of deep learning for multivariate time series
  forecasting: A channel strategy perspective, 2026.

\bibitem[Liu et~al.(2021)Liu, Li, and Xia]{GDTW}
Xiang Liu, Naiqi Li, and Shu-Tao Xia.
\newblock Gdtw: A novel differentiable dtw loss for time series tasks.
\newblock In \emph{Proc. IEEE Int. Conf. Acoust. Speech Signal Process.}, pages
  2860--2864. IEEE, 2021.

\bibitem[Le~Guen and Thome(2019)]{Dilate}
Vincent Le~Guen and Nicolas Thome.
\newblock Shape and time distortion loss for training deep time series
  forecasting models.
\newblock \emph{Proc. Adv. Neural Inf. Process. Syst.}, 32, 2019.

\bibitem[Cuturi and Blondel(2017)]{soft-dtw}
Marco Cuturi and Mathieu Blondel.
\newblock Soft-dtw: a differentiable loss function for time-series.
\newblock In \emph{Proc. Int. Conf. Mach. Learn.}, pages 894--903. PMLR, 2017.

\bibitem[Wang et~al.(2026{\natexlab{a}})Wang, Pan, Lu, Chen, Liu, He, Chu, Wen,
  Li, and Lin]{wang2026iclrqdf}
Hao Wang, Licheng Pan, Yuan Lu, Zhichao Chen, Tianqiao Liu, Shuting He, Zhixuan
  Chu, Qingsong Wen, Haoxuan Li, and Zhouchen Lin.
\newblock Quadratic direct forecast for training multi-step time-series
  forecast models.
\newblock In \emph{ICLR}, pages 1--9, 2026{\natexlab{a}}.

\bibitem[Wang et~al.(2025{\natexlab{b}})Wang, Pan, Chen, Chen, Dai, Wang, Li,
  and Lin]{wang2025nipstimeo1}
Hao Wang, Licheng Pan, Zhichao Chen, Xu~Chen, Qingyang Dai, Lei Wang, Haoxuan
  Li, and Zhouchen Lin.
\newblock Time-o1: Time-series forecasting needs transformed label alignment.
\newblock \emph{Proc. Adv. Neural Inf. Process. Syst.}, 2025{\natexlab{b}}.

\bibitem[Lange et~al.(2021)Lange, Brunton, and Kutz]{koopman}
Henning Lange, Steven~L Brunton, and J~Nathan Kutz.
\newblock From fourier to koopman: Spectral methods for long-term time series
  prediction.
\newblock \emph{Journal of Machine Learning Research}, 22\penalty0
  (41):\penalty0 1--38, 2021.

\bibitem[Pan et~al.(2026)Pan, Wang, Yang, Li, Wen, Li, Chen, Li, Chu, and
  Lu]{kmbdf}
Licheng Pan, Hao Wang, Haocheng Yang, Yuqi Li, Qingsong Wen, Xiaoxi Li, Zhichao
  Chen, Haoxuan Li, Zhixuan Chu, and Yuan Lu.
\newblock Deep time-series forecasting needs kernelized moment balancing, 2026.

\bibitem[Wang et~al.(2026{\natexlab{b}})Wang, Pan, Lu, Chu, Li, He, Chen, Li,
  Wen, and Lin]{wang2026iclrdistdf}
Hao Wang, Licheng Pan, Yuan Lu, Zhixuan Chu, Xiaoxi Li, Shuting He, Zhichao
  Chen, Haoxuan Li, Qingsong Wen, and Zhouchen Lin.
\newblock Distdf: Time-series forecasting needs joint-distribution wasserstein
  alignment.
\newblock In \emph{Proc. Int. Conf. Learn. Represent.}, pages 1--9,
  2026{\natexlab{b}}.

\bibitem[Qiu et~al.(2025{\natexlab{b}})Qiu, Wu, Cheng, Liu, Guo, Hu, and
  Yang]{qiu2025DBLoss}
Xiangfei Qiu, Xingjian Wu, Hanyin Cheng, Xvyuan Liu, Chenjuan Guo, Jilin Hu,
  and Bin Yang.
\newblock Dbloss: Decomposition-based loss function for time series
  forecasting.
\newblock In \emph{NeurIPS}, 2025{\natexlab{b}}.

\bibitem[Xiong et~al.(2025)Xiong, Tang, Ma, Zhang, Xu, and Li]{TDAlign}
Qi~Xiong, Kai Tang, Minbo Ma, Ji~Zhang, Jie Xu, and Tianrui Li.
\newblock Modeling temporal dependencies within the target for long-term time
  series forecasting.
\newblock \emph{IEEE Transactions on Knowledge and Data Engineering},
  37\penalty0 (12):\penalty0 7300--7314, 2025.
\newblock \doi{10.1109/TKDE.2025.3609415}.

\bibitem[Kudrat et~al.(2025)Kudrat, Xie, Sun, Jia, and Hu]{patchloss}
Dilfira Kudrat, Zongxia Xie, Yanru Sun, Tianyu Jia, and Qinghua Hu.
\newblock Patch-wise structural loss for time series forecasting.
\newblock In \emph{Forty-second International Conference on Machine Learning},
  2025.

\bibitem[Liang et~al.(2024)Liang, Wen, Nie, Jiang, Jin, Song, Pan, and
  Wen]{liangsurvey}
Yuxuan Liang, Haomin Wen, Yuqi Nie, Yushan Jiang, Ming Jin, Dongjin Song,
  Shirui Pan, and Qingsong Wen.
\newblock Foundation models for time series analysis: A tutorial and survey.
\newblock In \emph{Proceedings of the 30th ACM SIGKDD Conference on Knowledge
  Discovery and Data Mining}, 2024.

\bibitem[Chen et~al.(2023)Chen, Li, Arik, Yoder, and Pfister]{chen2023tsmixer}
Si-An Chen, Chun-Liang Li, Sercan~O Arik, Nathanael~Christian Yoder, and Tomas
  Pfister.
\newblock {TSM}ixer: An all-{MLP} architecture for time series forecast-ing.
\newblock \emph{Transactions on Machine Learning Research}, 2023.
\newblock ISSN 2835-8856.

\bibitem[Zhang and Yan(2023)]{crossformer}
Yunhao Zhang and Junchi Yan.
\newblock Crossformer: Transformer utilizing cross-dimension dependency for
  multivariate time series forecasting.
\newblock In \emph{International Conference on Learning Representations}, 2023.

\bibitem[Wu et~al.(2023)Wu, Hu, Liu, Zhou, Wang, and Long]{Timesnet}
Haixu Wu, Tengge Hu, Yong Liu, Hang Zhou, Jianmin Wang, and Mingsheng Long.
\newblock Timesnet: Temporal 2d-variation modeling for general time series
  analysis.
\newblock In \emph{International Conference on Learning Representations}, 2023.

\bibitem[Wu et~al.(2020)Wu, Pan, Long, Jiang, Chang, and
  Zhang]{wu2020connecting}
Zonghan Wu, Shirui Pan, Guodong Long, Jing Jiang, Xiaojun Chang, and Chengqi
  Zhang.
\newblock Connecting the dots: Multivariate time series forecasting with graph
  neural networks.
\newblock In \emph{Proceedings of the 26th ACM SIGKDD International Conference
  on Knowledge Discovery \& Data Mining}, 2020.

\bibitem[Han et~al.(2024{\natexlab{a}})Han, Zhu, Chen, Ning, Luo, and
  Wan]{mcformer}
Wenyong Han, Tao Zhu, Liming Chen, Huansheng Ning, Yang Luo, and Yaping Wan.
\newblock Mcformer: Multivariate time series forecasting with mixed-channels
  transformer.
\newblock \emph{IEEE Internet of Things Journal}, 11\penalty0 (17):\penalty0
  28320--28329, 2024{\natexlab{a}}.

\bibitem[Zhao et~al.(2024)Zhao, Peng, Sun, Yang, Zhang, and
  Wu]{zhao2024rethinking}
TingYu Zhao, Bo~Peng, Yuan Sun, DaiPeng Yang, ZhenGuang Zhang, and Xi~Wu.
\newblock Rethinking superpixel segmentation from biologically inspired
  mechanisms.
\newblock \emph{Applied Soft Computing}, 156:\penalty0 111467, 2024.

\bibitem[Wei et~al.(2026)Wei, Jiang, Wei, Ye, Song, Chen, and
  Ma]{wei2026tarfvae}
Jiawen Wei, Lan Jiang, Pengbo Wei, Ziwen Ye, Teng Song, Chen Chen, and Guangrui
  Ma.
\newblock {TARFVAE}: Efficient one-step generative time series forecasting via
  {TARFLOW} based {VAE}.
\newblock In \emph{The Thirty-ninth Annual Conference on Neural Information
  Processing Systems}, 2026.

\bibitem[Evgeniou et~al.(2005)Evgeniou, Micchelli, and
  Pontil]{evgeniou2005learning}
Theodoros Evgeniou, Charles~A. Micchelli, and Massimiliano Pontil.
\newblock Learning multiple tasks with kernel methods.
\newblock \emph{Journal of Machine Learning Research}, 6\penalty0
  (21):\penalty0 615--637, 2005.

\bibitem[Nie et~al.(2023)Nie, Nguyen, Sinthong, and Kalagnanam]{PatchTST}
Yuqi Nie, Nam~H Nguyen, Phanwadee Sinthong, and Jayant Kalagnanam.
\newblock A time series is worth 64 words: Long-term forecasting with
  transformers.
\newblock In \emph{International Conference on Learning Representations}, 2023.

\bibitem[Ding et~al.(2026)Ding, Fan, Hou, Wang, Wang, Yang, and
  Zhan]{2026timemosaic}
Kuiye Ding, Fanda Fan, Chunyi Hou, Zheya Wang, Lei Wang, Zhengxin Yang, and
  Jianfeng Zhan.
\newblock Timemosaic: Temporal heterogeneity guided time series forecasting via
  adaptive granularity patch and segment-wise decoding.
\newblock \emph{Proceedings of the AAAI Conference on Artificial Intelligence},
  40\penalty0 (25):\penalty0 20790--20798, Mar. 2026.
\newblock \doi{10.1609/aaai.v40i25.39218}.
\newblock URL \url{https://ojs.aaai.org/index.php/AAAI/article/view/39218}.

\bibitem[Lin et~al.(2025)Lin, Chen, Wu, Qiu, and Lin]{lin2025TQNet}
Shengsheng Lin, Haojun Chen, Haijie Wu, Chunyun Qiu, and Weiwei Lin.
\newblock Temporal query network for efficient multivariate time series
  forecasting.
\newblock In \emph{Forty-second International Conference on Machine Learning},
  2025.

\bibitem[Zhou et~al.(2022)Zhou, Ma, Wen, Wang, Sun, and Jin]{fedformer}
Tian Zhou, Ziqing Ma, Qingsong Wen, Xue Wang, Liang Sun, and Rong Jin.
\newblock F{ED}former: Frequency enhanced decomposed transformer for long-term
  series forecasting.
\newblock In \emph{International Conference on Machine Learning}, pages
  27268--27286. PMLR, 2022.

\bibitem[Yu et~al.(2024)Yu, Zou, Hu, Aviles-Rivero, Qin, and Wang]{leddam}
Guoqi Yu, Jing Zou, Xiaowei Hu, Angelica~I Aviles-Rivero, Jing Qin, and Shujun
  Wang.
\newblock Revitalizing multivariate time series forecasting: Learnable
  decomposition with inter-series dependencies and intra-series variations
  modeling.
\newblock In \emph{Forty-first International Conference on Machine Learning},
  2024.

\bibitem[Zeng et~al.(2023)Zeng, Chen, Zhang, and Xu]{DLinear}
Ailing Zeng, Muxi Chen, Lei Zhang, and Qiang Xu.
\newblock Are transformers effective for time series forecasting?
\newblock In \emph{Proc. AAAI Conf. Artif. Intell.}, 2023.

\bibitem[Han et~al.(2024{\natexlab{b}})Han, Chen, Ye, and Zhan]{softs}
Lu~Han, Xu-Yang Chen, Han-Jia Ye, and De-Chuan Zhan.
\newblock S{OFTS}: Efficient multivariate time series forecasting with
  series-core fusion.
\newblock In \emph{Advances in Neural Information Processing Systems},
  2024{\natexlab{b}}.

\bibitem[Qiu et~al.(2024)Qiu, Hu, Zhou, Wu, Du, Zhang, Guo, Zhou, Jensen,
  Sheng, and Yang]{qiutfb}
Xiangfei Qiu, Jilin Hu, Lekui Zhou, Xingjian Wu, Junyang Du, Buang Zhang,
  Chenjuan Guo, Aoying Zhou, Christian~S. Jensen, Zhenli Sheng, and Bin Yang.
\newblock Tfb: Towards comprehensive and fair benchmarking of time series
  forecasting methods.
\newblock In \emph{VLDB}, pages 2363--2377, 2024.

\bibitem[Kingma and Ba(2015)]{Adam}
Diederik~P. Kingma and Jimmy Ba.
\newblock Adam: {A} method for stochastic optimization.
\newblock In \emph{International Conference on Learning Representations}, pages
  1--9, 2015.

\bibitem[Dai et~al.(2024)Dai, Wu, Liu, Li, Bao, Jiang, and Xia]{dai2024pdf}
Tao Dai, Beiliang Wu, Peiyuan Liu, Naiqi Li, Jigang Bao, Yong Jiang, and
  Shu-Tao Xia.
\newblock Periodicity decoupling framework for long-term series forecasting.
\newblock \emph{International Conference on Learning Representations}, 2024.

\bibitem[Kou et~al.(2025)Kou, Wang, Shi, Yao, Li, Zhu, Zhang, and Du]{CFPT}
Feifei Kou, Jiahao Wang, Lei Shi, Yuhan Yao, Yawen Li, Suguo Zhu, Zhongbao
  Zhang, and Junping Du.
\newblock Cfpt: Empowering time series forecasting through cross-frequency
  interaction and periodic-aware timestamp modeling.
\newblock In \emph{Forty-second International Conference on Machine Learning},
  2025.

\bibitem[Li et~al.(2021)Li, Hui, and Zhang]{Informer}
Jianxin Li, Xiong Hui, and Wancai Zhang.
\newblock Informer: Beyond efficient transformer for long sequence time-series
  forecasting.
\newblock In \emph{The Thirty-Fifth AAAI Conference on Artificial
  Intelligence}, 2021.

\bibitem[Wu et~al.(2021)Wu, Xu, Wang, and Long]{Autoformer}
Haixu Wu, Jiehui Xu, Jianmin Wang, and Mingsheng Long.
\newblock Autoformer: Decomposition transformers with {Auto-Correlation} for
  long-term series forecasting.
\newblock In \emph{Advances in Neural Information Processing Systems}, 2021.

\bibitem[Liu et~al.(2022)Liu, Zeng, Chen, Xu, Lai, Ma, and Xu]{SCINet}
Minhao Liu, Ailing Zeng, Muxi Chen, Zhijian Xu, Qiuxia Lai, Lingna Ma, and
  Qiang Xu.
\newblock Scinet: time series modeling and forecasting with sample convolution
  and interaction.
\newblock In \emph{Advances in Neural Information Processing Systems}, 2022.

\bibitem[Liu et~al.(2025)Liu, Wu, Hu, Li, Dai, Bao, and Xia]{liu2025timebridge}
Peiyuan Liu, Beiliang Wu, Yifan Hu, Naiqi Li, Tao Dai, Jigang Bao, and Shu-Tao
  Xia.
\newblock Timebridge: Non-stationarity matters for long-term time series
  forecasting.
\newblock \emph{International Conference on Machine Learning}, 2025.

\end{thebibliography}
\bibliographystyle{unsrtnat}

%%%%%%%%%%%%%%%%%%%%%%%%%%%%%%%%%%%%%%%%%%%%%%%%%%%%%%%%%%%%
\newpage
\appendix
\section{Theoretical Justification}
\label{app:theory}

In this section, we provide the detailed derivation of Theorem~\ref{thm:spatiotemporal_nll}, which reveals the suboptimality of the standard DF objective under real-world spatiotemporal correlations, followed by the proof of Proposition~\ref{prop:precision}.

\subsection{Proof of Theorem~\ref{thm:spatiotemporal_nll}}

\textbf{Proof.} From a probabilistic perspective, finding the optimal parameters $\theta$ for the forecasting model $f_\theta$ is equivalent to maximizing the likelihood of the ground-truth target $Y$ given the input history $X$. Let
\[
\mathbf{e} = \mathrm{vec}(\hat{Y} - Y) \in \mathbb{R}^{TD}
\]
be the flattened forecast error vector.

Assume that the ground-truth residuals follow a zero-mean multivariate Gaussian distribution, i.e.,
\[
\mathbf{e} \sim \mathcal{N}(\mathbf{0}, \mathbf{\Sigma}_{ST}),
\]
where $\mathbf{\Sigma}_{ST} \in \mathbb{R}^{TD \times TD}$ is the true spatiotemporal covariance matrix. The probability density function of the residuals is given by:
\[
p(\mathbf{e}) =
\frac{1}{\sqrt{(2\pi)^{TD} \det(\mathbf{\Sigma}_{ST})}}
\exp\left(
-\frac{1}{2} \mathbf{e}^\top \mathbf{\Sigma}_{ST}^{-1} \mathbf{e}
\right)
\]

Minimizing the NLL of this distribution yields the optimal objective function. Taking the negative logarithm, we have:
\[
-\log p(\mathbf{e}) =
\frac{TD}{2} \log(2\pi)
+ \frac{1}{2} \log \det(\mathbf{\Sigma}_{ST})
+ \frac{1}{2} \mathbf{e}^\top \mathbf{\Sigma}_{ST}^{-1} \mathbf{e}
\]

Assuming the covariance matrix $\mathbf{\Sigma}_{ST}$ represents the intrinsic data distribution and is independent of the model parameters $\theta$, the first two terms are constants with respect to the optimization process. By dropping these constants and scaling the objective by $\frac{1}{TD}$ to obtain the average loss per prediction entry, we define the practical NLL objective as:
\[
\mathcal{L}_{\mathrm{nll}} = \frac{1}{2TD} \mathbf{e}^\top \mathbf{\Sigma}_{ST}^{-1} \mathbf{e}
\]

Now, consider the standard point-wise MSE objective used in the DF paradigm, defined as:
\[
\mathcal{L}_{\mathrm{df}} =
\frac{1}{T} \sum_{t=1}^{T} \frac{1}{D} \| Y_{t,:} - \hat{Y}_{t,:} \|_2^2
= \frac{1}{TD} \mathbf{e}^\top \mathbf{e}
\]

Notice that $\mathcal{L}_{\mathrm{df}}$ can be probabilistically interpreted as the negative log-likelihood under a strictly naive assumption: the residuals of all variables across all time steps are entirely independent and share a uniform variance $\sigma^2$. Under this assumption, the covariance matrix reduces to a spherical structure $\mathbf{\Sigma}_{ST} = \sigma^2 \mathbf{I}$, where $\mathbf{I} \in \mathbb{R}^{TD \times TD}$ is the identity matrix. In this special case, the NLL naturally degrades to:
\[
\mathcal{L}_{\mathrm{nll\_naive}} =
\frac{1}{2TD} \mathbf{e}^\top (\sigma^2 \mathbf{I})^{-1} \mathbf{e}
= \frac{1}{2\sigma^2} \left( \frac{1}{TD} \mathbf{e}^\top \mathbf{e} \right)
= \frac{1}{2\sigma^2} \mathcal{L}_{\mathrm{df}}
\]

To formally quantify the structural mismatch between the standard DF paradigm and the optimal estimation under true spatiotemporal correlations, we define the structural gap $\Delta$ as the difference between the true optimal objective $\mathcal{L}_{\mathrm{nll}}$ and the scaled DF objective $\frac{1}{2\sigma^2}\mathcal{L}_{\mathrm{df}}$:
\[
\Delta =
\mathcal{L}_{\mathrm{nll}} - \frac{1}{2\sigma^2}\mathcal{L}_{\mathrm{df}}
= \frac{1}{2TD} \mathbf{e}^\top \mathbf{\Sigma}_{ST}^{-1} \mathbf{e}
- \frac{1}{2TD \sigma^2} \mathbf{e}^\top \mathbf{I} \mathbf{e}
\]

Factoring out the common terms, we obtain:
\[
\Delta =
\frac{1}{2TD} \mathbf{e}^\top
\left(
\mathbf{\Sigma}_{ST}^{-1} - \frac{1}{\sigma^2}\mathbf{I}
\right)
\mathbf{e}
\]

In real-world multivariate time series, dynamic physical processes (e.g., disturbance propagation and lagged correlations) dictate that $\mathbf{\Sigma}_{ST}$ is neither diagonal nor uniform. Consequently, $\mathbf{\Sigma}_{ST}^{-1} \neq \frac{1}{\sigma^2}\mathbf{I}$, meaning the structural gap $\Delta \neq 0$. Therefore, relying exclusively on $\mathcal{L}_{\mathrm{df}}$ implicitly ignores the off-diagonal elements of the precision matrix $\mathbf{\Sigma}_{ST}^{-1}$, leading to statistically suboptimal parameter estimation. This concludes the proof. $\hfill $

\subsection{Proof of Proposition~\ref{prop:precision}}
\label{app:proposition}

Theorem~\ref{thm:spatiotemporal_nll} shows that MSE is the correct objective only when the residual precision matrix is a multiple of the identity, but it does not say which non-diagonal precision matrix should replace it, and estimating $\mathbf{\Sigma}_{ST}^{-1}$ from data is not feasible at the scales considered here. Proposition~\ref{prop:precision} closes this gap from the opposite direction: instead of estimating a precision matrix, we fix a one-parameter family of precision matrices whose off-diagonal support is a cross-variable graph, and show that the negative log-likelihood of that family is exactly MSE plus a squared graph total-variation penalty.

\textbf{Proof.} Recall from Section~\ref{cvl} that $A\in\mathbb{R}^{|\mathcal{E}|\times N}$ is the incidence matrix of the cross-variable graph under an arbitrary edge orientation, that $\tilde A=A\otimes I_L$ is its lifted counterpart, and that $\mathbf{r}=\mathrm{vec}(E)\in\mathbb{R}^{NL}$ collects the node-wise residuals $\mathbf{e}_1,\dots,\mathbf{e}_N\in\mathbb{R}^{L}$. By construction, the $k$-th block of $\tilde A\mathbf{r}$ associated with the edge $k=(i,j)$ equals $\mathbf{e}_i-\mathbf{e}_j$, so
\[
\|\tilde A\mathbf{r}\|_2^{2}
=
\sum_{(i,j)\in\mathcal{E}}\|\mathbf{e}_i-\mathbf{e}_j\|_2^{2} .
\]
Writing $\mathrm{Lap}(\mathcal{E})=\tilde A^{\top}\tilde A$ for the lifted graph Laplacian, the same quantity is the associated quadratic form,
\[
\mathbf{r}^{\top}\mathrm{Lap}(\mathcal{E})\,\mathbf{r}
=
\mathbf{r}^{\top}\tilde A^{\top}\tilde A\,\mathbf{r}
=
\|\tilde A\mathbf{r}\|_2^{2}
=
\sum_{(i,j)\in\mathcal{E}}\|\mathbf{e}_i-\mathbf{e}_j\|_2^{2} .
\]
Substituting the structured precision family of Eq.~\eqref{eq:precision_family} and expanding by linearity gives
\[
\mathbf{r}^{\top}\mathbf{\Sigma}^{-1}(\lambda)\,\mathbf{r}
=
\frac{1}{\sigma^{2}}\Bigl(\mathbf{r}^{\top}\mathbf{r}+\lambda\,\mathbf{r}^{\top}\mathrm{Lap}(\mathcal{E})\,\mathbf{r}\Bigr)
=
\frac{1}{\sigma^{2}}
\Bigl(
\|\mathbf{r}\|_2^{2}
+
\lambda\!\!\sum_{(i,j)\in\mathcal{E}}\!\!\|\mathbf{e}_i-\mathbf{e}_j\|_2^{2}
\Bigr),
\]
which is Eq.~\eqref{eq:precision_identity}. This concludes the proof. $\hfill $

\paragraph{Remarks.} Four consequences are worth recording, since they determine how the theory should and should not be read.

\begin{itemize}[leftmargin=*]
\item \textbf{The family is a valid precision family.} $\mathrm{Lap}(\mathcal{E})$ is positive semi-definite, so $\mathbf{I}+\lambda\,\mathrm{Lap}(\mathcal{E})\succ 0$ for every $\lambda\ge 0$ and $\mathbf{\Sigma}^{-1}(\lambda)$ is a genuine precision matrix. Its off-diagonal entries are $-\lambda/\sigma^{2}$ exactly on the node pairs in $\mathcal{E}$ and zero elsewhere, so $\mathcal{E}$ is the conditional-independence structure of the corresponding Gaussian field rather than an analogy.
\item \textbf{It contains the spherical case.} At $\lambda=0$ we recover $\mathbf{\Sigma}^{-1}=\sigma^{-2}\mathbf{I}$, which is precisely the situation in which the objective gap $\Delta$ of Theorem~\ref{thm:spatiotemporal_nll} vanishes. The theorem and the loss are therefore two ends of one family rather than unrelated statements.
\item \textbf{It matches the deployed weighting.} Dividing the objective $\mathcal{L}_{\alpha}=(1-\alpha)\mathcal{L}_{\mathrm{df}}+\alpha\mathcal{L}_{\mathrm{cv}}$ by $(1-\alpha)$ leaves the minimiser unchanged and yields $\mathcal{L}_{\mathrm{df}}+\lambda\mathcal{L}_{\mathrm{cv}}$ with $\lambda=\alpha/(1-\alpha)$. Values of $\alpha$ close to $1$ therefore correspond to a large $\lambda$ within the family above, not to a weight outside the stated range $(0,1)$.
\item \textbf{The complete graph is the weakest member.} Since $\mathcal{E}$ is the support of the off-diagonal entries, taking $\mathcal{E}$ to be the complete cross-variable graph imposes \emph{no} zero constraint on any cross-variable pair; sparser choices impose strictly more. The empty graph, i.e.\ plain MSE, is the member excluded by Theorem~\ref{thm:spatiotemporal_nll} whenever cross-variable dependencies are present.
\end{itemize}

Proposition~\ref{prop:precision} concerns the squared penalty. The objective we deploy in Eq.~\eqref{eq:cvl} uses the $\ell_1$ norm on the same edge differences, which yields the pairwise field
\[
p_{1}(\mathbf{r}\mid\mathcal{E})
\;\propto\;
\exp\Bigl[
-a\sum_{v=1}^{N}\|\mathbf{e}_v\|_2^{2}
-b\!\!\sum_{(i,j)\in\mathcal{E}}\!\!\|\mathbf{e}_i-\mathbf{e}_j\|_1
\Bigr],
\qquad
a=\frac{1-\alpha}{NL}>0,
\quad
b=\frac{\alpha}{|\mathcal{E}|L}>0 .
\]
This factorises into node and edge potentials and is therefore a pairwise Gibbs Markov random field with respect to the same graph, but it is not Gaussian, and we consequently restrict the precision-matrix reading to the $\ell_2$ case. We also do not claim that $\mathcal{E}$ recovers a sparse precision support: the default topology is complete, with random edge sampling used only for scalability. The $\ell_1$ form is adopted because squared penalties can be dominated by a small number of large edge-wise discrepancies arising from localized anomalies or variable-specific shocks. This is a task-specific empirical hypothesis about time-series residuals rather than a universal claim. A comparable separation between the norm used for analysis and the norm used in practice appears in FreDF~\citep{wang2025fredf}, whose analysis is $\ell_2$/Gaussian while its deployed frequency-domain loss is $\ell_1$. Appendix~\ref{app:norm_choice} tests the two norms directly.

\section{Reproduction Details}\label{sec:reproduce}
\subsection{Dataset descriptions}\label{sec:dataset}

Our empirical evaluation is conducted on a diverse collection of widely-used time series forecasting benchmarks. Each dataset presents distinct characteristics in terms of dimensionality and temporal resolution. A summary is provided in \autoref{tab:dataset}.

\begin{itemize}[leftmargin=*]
\item \textbf{ETT}~\citep{Informer}: This dataset includes seven electricity transformer indicators logged between July 2016 and July 2018. It is partitioned into four subsets according to sampling rates: ETTh1 and ETTh2 for hourly data, and ETTm1 and ETTm2 for 15-minute intervals.
\item \textbf{Weather}~\citep{Autoformer}: Encompasses 21 meteorological factors collected every 10 minutes throughout the year 2020 from the Max Planck Institute for Biogeochemistry.
\item \textbf{ECL}~\citep{Autoformer}: Records the hourly power usage patterns of 321 individual customers.
\item \textbf{Traffic}~\citep{Autoformer}: Monitors hourly road occupancy via 862 sensors located on freeways in the San Francisco Bay Area from 2015 to 2016.
\item \textbf{Solar-Energy}\footnote{\url{https://www.nrel.gov/grid/solar-power-data.html}}: 10-minute solar power production records from 137 photovoltaic plants throughout 2006.
\item \textbf{PEMS}~\citep{SCINet}: Derived from public California highway traffic records, aggregated at 5-minute steps. We adopt two standard subsets, PEMS03 and PEMS08, for our study.
\end{itemize}

\begin{table}
  \caption{Dataset description. }\label{tab:dataset}
  \centering
  \renewcommand{\multirowsetup}{\centering}
  \setlength{\tabcolsep}{8pt}
  \small
    \begin{threeparttable}
  \begin{tabular}{llllll}
    \toprule
    Dataset & D & Forecast length & Train / validation / test & Frequency& Domain \\
    \toprule
     ETTh1 & 7 & 96, 192, 336, 720 & 8545/2881/2881 & Hourly & Health\\
     \midrule
     ETTh2 & 7 & 96, 192, 336, 720 & 8545/2881/2881 & Hourly & Health\\
     \midrule
     ETTm1 & 7 & 96, 192, 336, 720 & 34465/11521/11521 & 15min & Health\\
     \midrule
     ETTm2 & 7 & 96, 192, 336, 720 & 34465/11521/11521 & 15min & Health\\
    % \midrule
    % Exchange & 8 & 96, 192, 336, 720 & 5120/665/1422 & Daily & Economy \\
    \midrule
    Weather & 21 & 96, 192, 336, 720 & 36792/5271/10540 & 10min & Weather\\
    \midrule
    ECL & 321 & 96, 192, 336, 720 & 18317/2633/5261 & Hourly & Electricity \\
    \midrule
    Traffic & 862 & 96, 192, 336, 720 & 12185/1757/3509 & Hourly & Transportation \\
    \midrule
    Solar-Energy & 137  & 96, 192, 336, 720 & 36601/5161/10417 & 10min & Energy \\
    \midrule
    PEMS03 & 358 & 12, 24, 36, 48 & 15617/5135/5135 & 5min & Transportation\\
    \midrule
    PEMS04 & 307 & 12, 24, 48, 96 & 10172/3375/281 & 5min & Transportation\\
    \midrule
    PEMS07 & 883 & 12, 24, 48, 96 & 16911/5622/468 & 5min & Transportation\\
    \midrule
    PEMS08 & 170 & 12, 24, 36, 48 & 10690/3548/265 & 5min & Transportation\\
    \bottomrule
    \end{tabular}
    \begin{tablenotes}
    \item  \scriptsize \textit{Note}:  \textit{D} denotes the number of variates. \emph{Frequency} denotes the sampling interval of time points. \emph{Train, Validation, Test} denotes the number of samples employed in each split. The taxonomy aligns with~\citep{Timesnet}.
    \end{tablenotes}
    \end{threeparttable}
\end{table}

\paragraph{Why these datasets.} The twelve benchmarks above are the standard suite of~\citep{Timesnet, itransformer} and were selected before any results were obtained; we record the reasoning here because the choice interacts with what \time\ is designed to do. Three considerations apply. First, we require coverage of the variable-count range, since \time\ regularizes interactions between variables and its cost and its benefit both depend on $D$: the suite spans $D=7$ (the four ETT subsets) to $D=862$ (Traffic), with Weather ($21$), Solar ($137$), PEMS08 ($170$), PEMS04 ($307$), ECL ($321$), PEMS03 ($358$) and PEMS07 ($883$) in between. Second, we require coverage of the cross-variable correlation range, which Theorem~\ref{thm:spatiotemporal_nll} predicts should govern the size of the effect; the suite spans weakly correlated channels (ETT, average CCC $0.14$--$0.18$) to strongly correlated ones (Solar, $0.91$; PEMS07, $0.78$), as reported in Table~\ref{tab:learned_weights_corr}. Third, both the long-horizon regime ($T\in\{96,192,336,720\}$) and the short-horizon regime used for PEMS ($T\in\{12,24,36,48\}$) are represented, so the conclusions are not specific to one horizon scale. All four ETT subsets are included, at both the hourly (ETTh1, ETTh2) and the $15$-minute (ETTm1, ETTm2) sampling rates.

We deliberately exclude the Exchange dataset, following the protocol of~\citep{hu2025timefilter}. Exchange is close to a driftless random walk, a property documented since~\citep{SCINet}, and on such a series the trivial predictor that repeats the last observed value is competitive with deep forecasters. A benchmark on which the degenerate baseline is already near-optimal cannot discriminate between learning objectives, which is the comparison this paper makes. The point is compounded for our specific claim: Exchange consists of eight loosely coupled national exchange rates, so it carries little of the cross-variable structure that \time\ is designed to exploit, and including it would neither support nor challenge the mechanism under test. We note the exclusion explicitly rather than silently, since the omission is a choice about what the evidence can show.

Following established protocols~\citep{qiutfb, itransformer}, all datasets are segmented chronologically into training, validation, and testing portions. Regarding the ETT, Weather, ECL, Solar and Traffic datasets, we maintain a constant input window of 96 time steps and assess model performance across four distinct forecasting horizons: 96, 192, 336, and 720. For the PEMS datasets, while the historical look-back period remains 96, the evaluation focuses on shorter prediction intervals of 12, 24, 36, and 48 steps. Throughout our experimental phase, we disable the \textit{dropping-last trick} during the final test set assessment to ensure that every data point in the terminal batch is fully utilized.

\subsection{Implementation details of model training}
To ensure a strictly fair comparison, we build all experiments on top of publicly released benchmark implementations and retain the original model architectures and hyperparameter configurations. When integrating CvLoss, we do not modify the forecasting backbone, data preprocessing, optimizer, training schedule, or evaluation protocol; only the proposed objective term and its associated weight are added. We also disable the drop-last trick for all methods following~\citep{qiutfb}, so that every test sample is evaluated consistently. To establish a fair comparison, we reproduced all baseline models using their official, publicly available implementations, primarily sourcing from the TQNet~\citep{lin2025TQNet}, TimeFilter~\citep{hu2025timefilter}, iTransformer~\citep{itransformer}, TimeBridge~\citep{liu2025timebridge} and Time-Series-Library~\citep{wang2024tssurvey} repositories. The reproducibility of these baseline results was verified prior to our experiments. All models were trained to minimize the MSE loss function using the Adam optimizer~\citep{Adam}. All baselines apply to the publicly available parameters in their library without any changes. To prevent overfitting, we employed an early stopping mechanism that terminates training if the validation loss fails to improve for three consecutive epochs.

\paragraph{What is tuned, and on which split.}
\label{app:tuning}
Two quantities are tuned for \time: the loss weight $\alpha$ and the patch length $L$. Nothing else is. Both are selected on the \emph{validation} split and never on the test split, and all methods, baselines included, use early stopping on the validation loss with a patience of three epochs. Every number reported in the main text and in the appendix uses the fixed convex objective $\mathcal{L}_{\alpha}=(1-\alpha)\mathcal{L}_{\mathrm{df}}+\alpha\mathcal{L}_{\mathrm{cv}}$ with a single scalar $\alpha\in(0,1)$ chosen this way; $\alpha$ is never learned during training, and there is no configuration in which it varies across batches, variables or edges. The sensitivity tables report the full sweep over $\alpha$, whereas the main tables report the validation-selected setting; the two are consistent by construction, since the $\alpha=0$ rows of Tables~\ref{tab:sensi-iTransformer} and~\ref{tab:sensi-TimeBridge} reproduce the unregularised baselines exactly. The separate quantities in Table~\ref{tab:learned_weights_corr} are a diagnostic and are \emph{not} values of $\alpha$; this is stated with that table.

\paragraph{Baselines are tuned at least as favourably as our method.}
This deserves a precise statement, because a plug-in objective can appear to win simply by receiving more tuning effort than the models it is added to. The comparison here is arranged so that the opposite holds. Baselines are the authors' own public implementations, run with the hyperparameters published in their libraries and verified to reproduce beforehand; we change none of them. When \time\ is added, the backbone, data preprocessing, normalisation, optimizer, learning-rate schedule, batch size, early-stopping rule and evaluation protocol are all left exactly as the baseline defines them, and only the additional objective term and its weight are introduced. Our method therefore has strictly less freedom than the models it is compared against: two scalars against a fully tuned configuration. The drop-last trick is disabled for every model, ours included, following~\citep{qiutfb}, so no method benefits from a truncated final test batch.

\paragraph{No ground-truth future statistics enter the objective.}
\label{app:no_leakage}
In every configuration used for the reported results, the edge set $\mathcal{E}$ is fixed a priori, since it is the complete cross-variable graph of Section~\ref{cvl}, and therefore depends on no property of the future window. The quantity \time\ consumes is the residual $E=\hat Z-Z$, which is the same supervision signal that the standard DF objective already consumes in Eq.~\eqref{eq:df_loss}; no additional statistic of $Y$ is used. The one variant that inspects the ground-truth correlation matrix $\mathbf{C}(Y)$ is the top-$K$ edge selection of Appendix~\ref{app:complexity_and_selection}, which is used \emph{during training only} to concentrate computation on the worst-aligned variable pairs, and which produced none of the numbers in this paper. In all configurations the \time\ term leaves the computation graph at inference: it contributes no parameters, no buffers and no forward-pass computation to the deployed model, which is why the inference latency in Figure~\ref{fig:inference_latency} sits at parity with the unmodified backbone.

\subsection{Additional Visualization of Cross-Variable Correlation Structures}
\label{app:heatmap_vis}

% \begin{figure}
% \subfigure[Ground-truth structure.]{\includegraphics[width=0.23\linewidth]{fig/heatmap_vis/ecl1.pdf}}
% \hfill
%     \raisebox{-0.0\height}{\rule{0.8pt}{2.6cm}} % Vertical line of 4cm height and 0.5pt width
% \hfill
% \subfigure[Baseline structure.]{\includegraphics[width=0.23\linewidth]{fig/heatmap_vis/ecl2.pdf}}
% \hfill
%     \raisebox{-0.0\height}{\rule{0.8pt}{2.6cm}} % Vertical line of 4cm height and 0.5pt width
% \hfill
% \subfigure[Improvement by CvLoss.]{\includegraphics[width=0.23\linewidth]{fig/heatmap_vis/ecl3.pdf}}
% \hfill
%     \raisebox{-0.0\height}{\rule{0.8pt}{2.6cm}} % Vertical line of 4cm height and 0.5pt width
% \hfill
% \subfigure[ECL Snapshot.]{\includegraphics[width=0.2\linewidth]{fig/heatmap_vis/ecl4.pdf}}
% \caption{The cross-variable correlation structures of future series given $X$ on the ECL dataset, with input length $H=96$ and forecast horizon $T=96$. The correlation matrices are computed from the ground-truth future series (a) and the future series predicted by plain iTransformer~\citep{itransformer} (b), while (c) shows where iTransformer trained with CvLoss yields smaller structural errors than plain iTransformer, covering 84.83\% of cross-variable entries. See details in Appendix~\ref{app:heatmap_vis}.}
% \label{fig:idea}
% \end{figure}

\begin{figure}
\subfigure[Ground-truth structure.]{\includegraphics[width=0.23\linewidth]{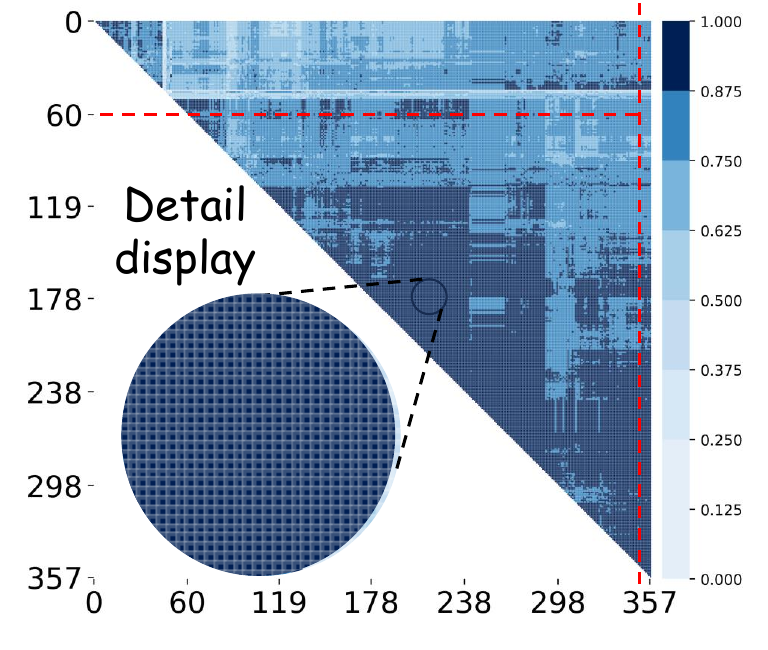}}
\hfill
    \raisebox{-0.0\height}{\rule{0.8pt}{2.6cm}} % Vertical line of 4cm height and 0.5pt width
\hfill
\subfigure[Baseline structure.]{\includegraphics[width=0.23\linewidth]{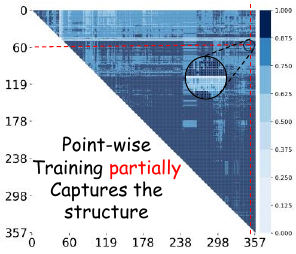}}
\hfill
    \raisebox{-0.0\height}{\rule{0.8pt}{2.6cm}} % Vertical line of 4cm height and 0.5pt width
\hfill
\subfigure[Improvement by CvLoss.]{\includegraphics[width=0.23\linewidth]{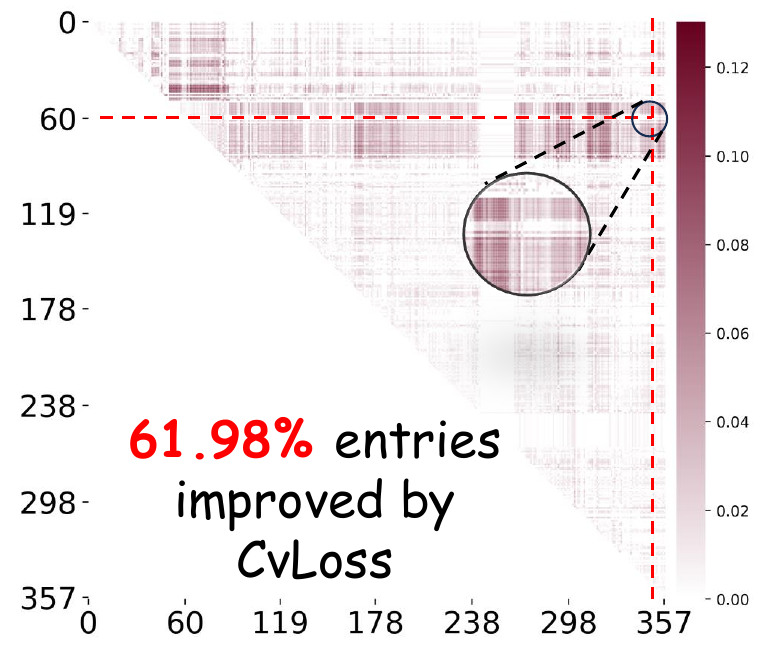}}
\hfill
    \raisebox{-0.0\height}{\rule{0.8pt}{2.6cm}} % Vertical line of 4cm height and 0.5pt width
\hfill
\subfigure[PEMS03 Snapshot.]{\includegraphics[width=0.2\linewidth]{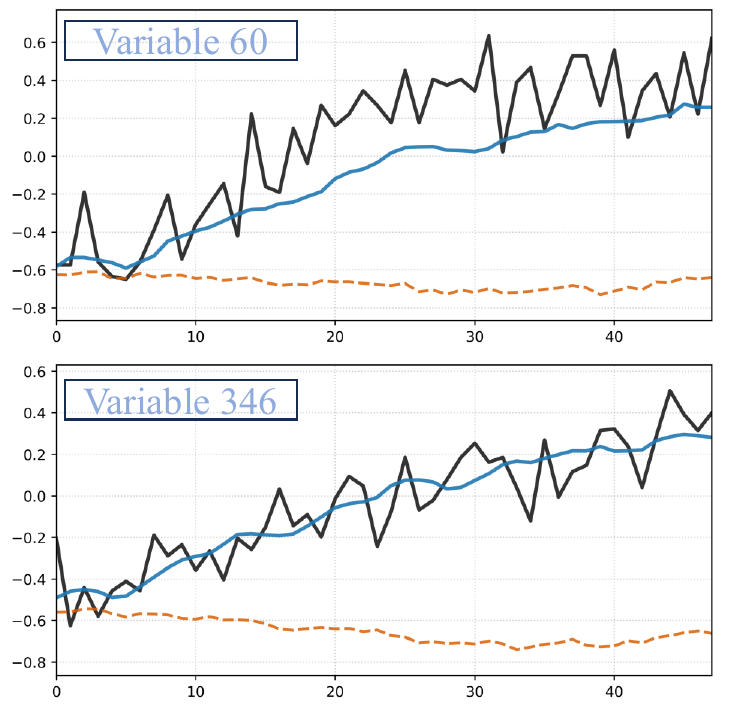}}

% \caption{The cross-variable correlation structures of future series given $X$ on the Pems03 dataset, with input length $H=96$ and forecast horizon $T=48$. The correlation matrices are computed from the ground-truth future series (a) and the future series predicted by plain iTransformer~\citep{itransformer} (b), while (c) shows where iTransformer trained with CvLoss yields smaller structural errors than plain iTransformer, covering 61.98\% of cross-variable entries. See details in Appendix~\ref{app:heatmap_vis}.}

\caption{Motivating example on PEMS03 with input length $H=96$ and forecast horizon $T=48$. (a,b) Cross-variable correlation structures of the ground truth and the plain iTransformer~\citep{itransformer} prediction. (c) Entries where CvLoss reduces structural error over the baseline, covering 61.98\% of cross-variable pairs. (d) Forecast snapshot: orange denotes the baseline prediction, black denotes the ground truth, and blue denotes ours (CvLoss). See details in Appendix~\ref{app:heatmap_vis}.}

\label{fig:idea_pems03}
\end{figure}

To further examine how CvLoss affects the recovery of cross-variable dependencies, we provide additional visualizations on the ETTh2 and Weather datasets. These cases follow the same qualitative protocol as Figure~\ref{fig:idea}, while using input length $H=96$ and forecast horizon $T=96$.

Let $Y \in \mathbb{R}^{T \times D}$ denote the ground-truth future series and let $\hat{Y}^{\text{base}}, \hat{Y}^{\text{cv}} \in \mathbb{R}^{T \times D}$ denote the future series predicted by the baseline iTransformer and by iTransformer trained with CvLoss, respectively. For any future series $S \in \mathbb{R}^{T \times D}$, we define its cross-variable correlation structure as
\[
C(S) = \left[ \, \big| \mathrm{Corr}(S_{:,i}, S_{:,j}) \big| \, \right]_{i,j=1}^{D} \in \mathbb{R}^{D \times D},
\]
where $\mathrm{Corr}(\cdot,\cdot)$ denotes the Pearson correlation coefficient. Accordingly, we visualize
\[
C^{\mathrm{gt}} = C(Y), \qquad
C^{\mathrm{base}} = C(\hat{Y}^{\mathrm{base}}).
\]

To quantify structural deviation from the ground-truth pattern, we define the structural error matrices as
\[
\mathrm{SE}^{\mathrm{base}} = \big| C^{\mathrm{base}} - C^{\mathrm{gt}} \big|, \qquad
\mathrm{SE}^{\mathrm{cv}} = \big| C(\hat{Y}^{\mathrm{cv}}) - C^{\mathrm{gt}} \big|,
\]
where the absolute value is taken element-wise. The improvement induced by CvLoss is then measured by
\[
\Delta = \mathrm{SE}^{\mathrm{base}} - \mathrm{SE}^{\mathrm{cv}}.
\]
A positive entry $\Delta_{ij} > 0$ means that CvLoss produces a smaller structural error than the baseline on the $(i,j)$-th cross-variable entry, i.e., the recovered correlation is closer to the ground-truth structure. Therefore, panel (c) visualizes the positive part of $\Delta$, highlighting where CvLoss improves over the baseline in recovering cross-variable dependencies.

These supplementary cases complement the ECL example of Figure~\ref{fig:idea} in the main text and the PEMS03 example of Figure~\ref{fig:idea_pems03}, and verify that the structural benefit of CvLoss is not restricted to a single dataset. As shown in Figures~\ref{fig:etth2_heatmap} and~\ref{fig:weather_heatmap}, CvLoss consistently improves the recovery of cross-variable correlation structures across different data regimes, including both low-dimensional and higher-dimensional multivariate forecasting benchmarks. This observation is consistent with our main claim that CvLoss acts as a structural regularizer on multivariate outputs and improves the preservation of cross-variable dependencies beyond point-wise supervision alone.

\begin{figure}[t]
\centering
\subfigure[Ground-truth structure.]{\includegraphics[width=0.3\linewidth]{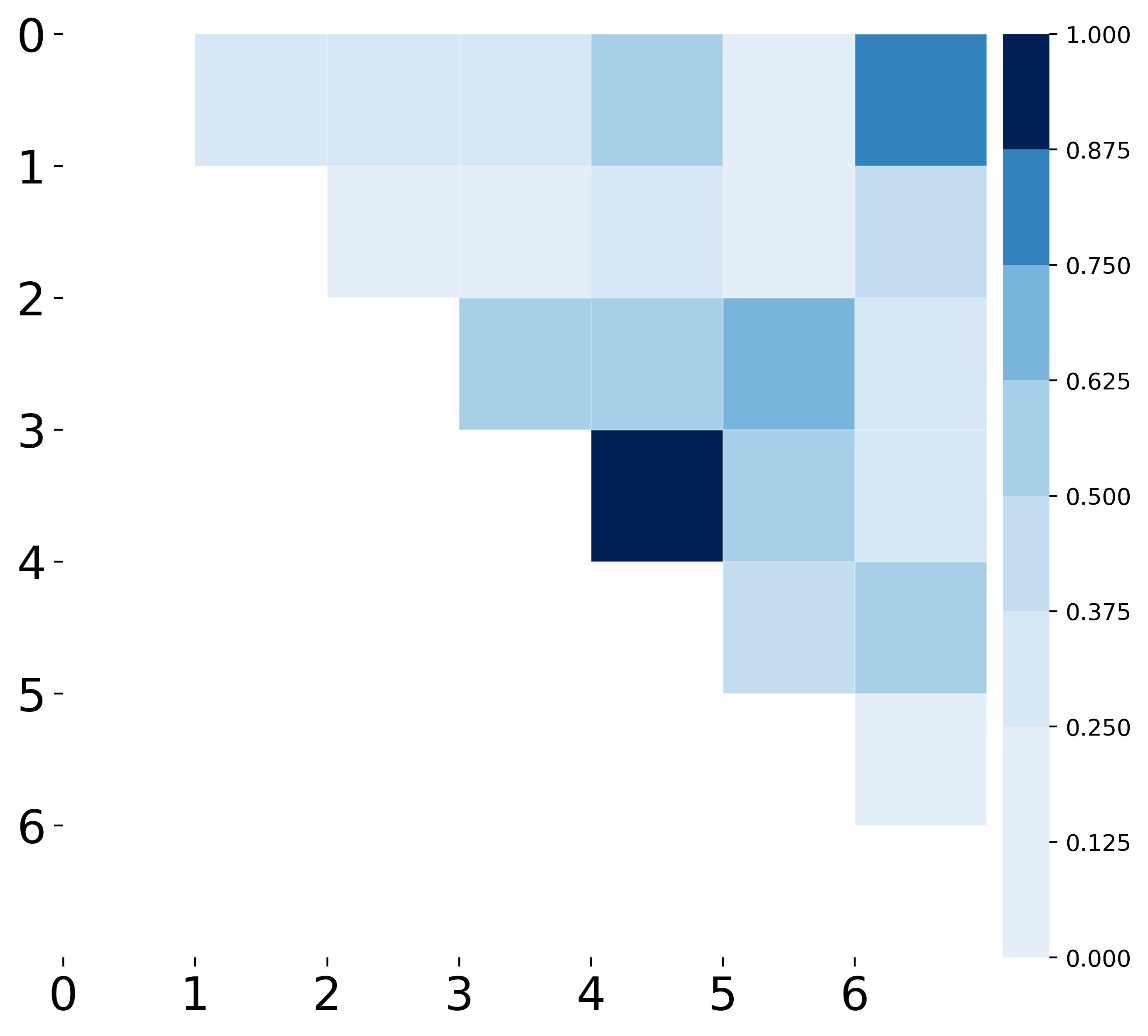}}
\hfill
    \raisebox{-0.0\height}{\rule{0.8pt}{3.5cm}}
\hfill
\subfigure[Baseline structure.]{\includegraphics[width=0.3\linewidth]{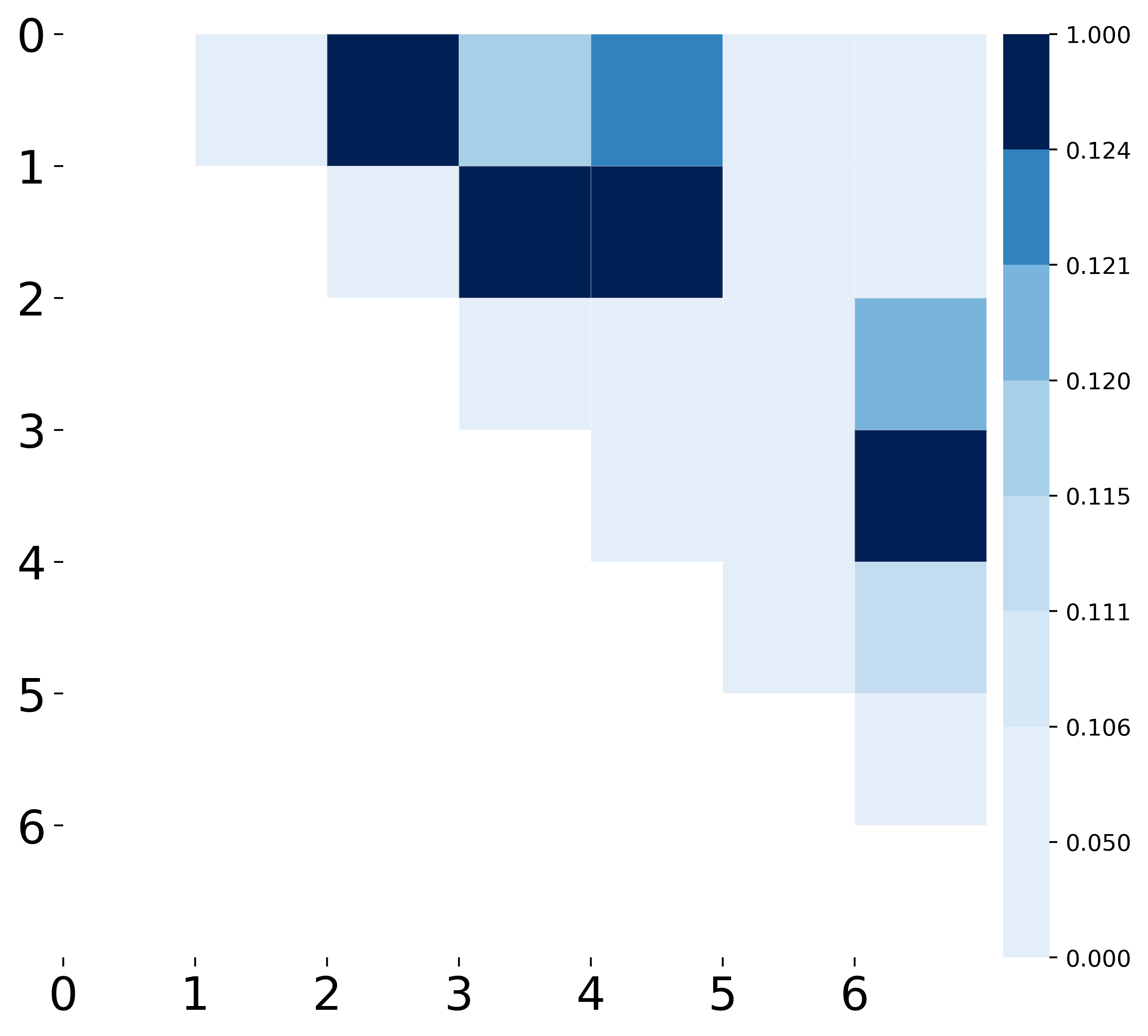}}
\hfill
    \raisebox{-0.0\height}{\rule{0.8pt}{3.5cm}}
\hfill
\subfigure[Improvement by CvLoss.]{\includegraphics[width=0.3\linewidth]{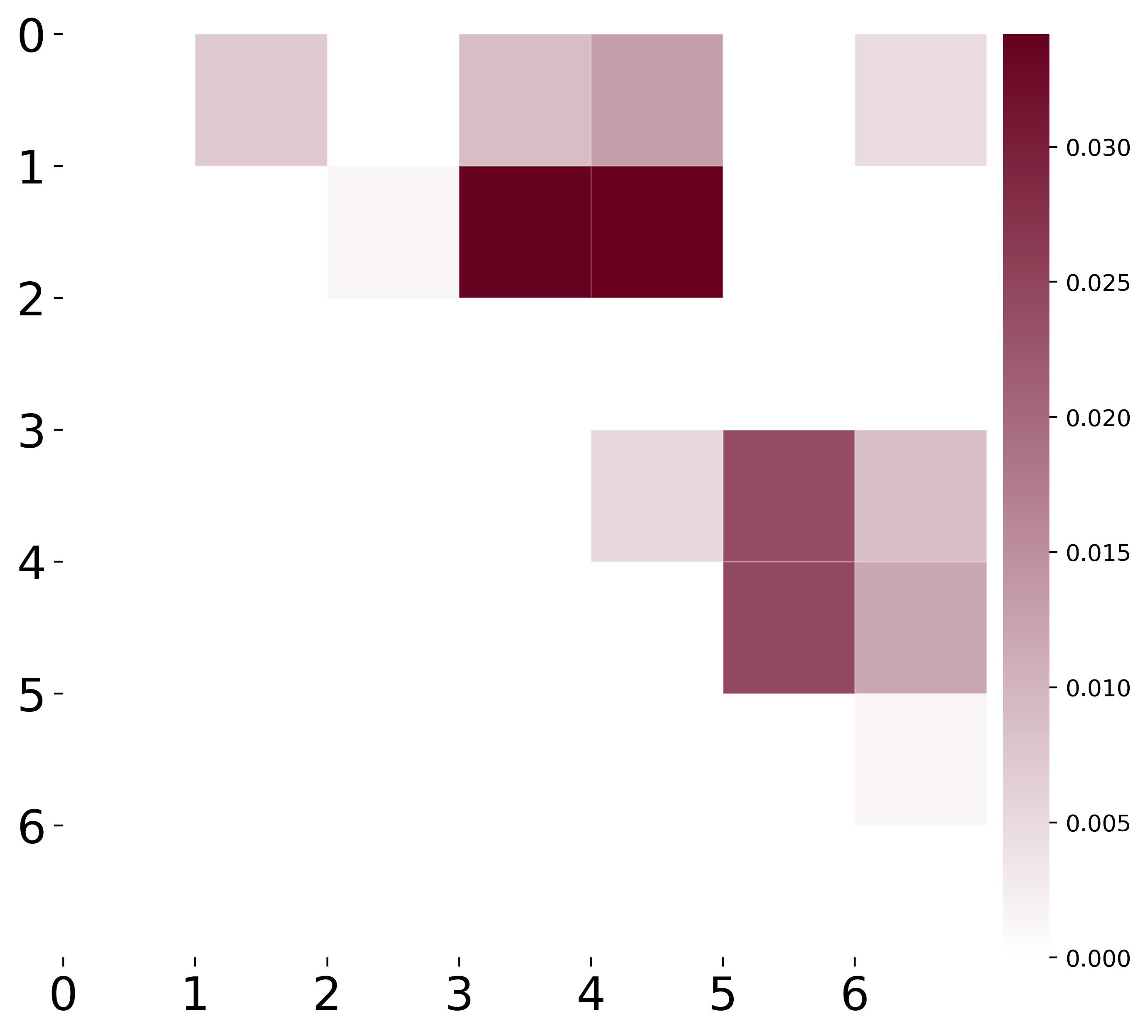}}
\caption{The cross-variable correlation structures of future series given $X$ on the ETTh2 dataset, with input length $H=96$ and forecast horizon $T=96$. The correlation matrices are computed from the ground-truth future series (a) and the future series predicted by iTransformer~\citep{itransformer} (b), while (c) shows the entries on which CvLoss yields smaller structural errors than iTransformer (62\%).}

\label{fig:etth2_heatmap}
\end{figure}

\begin{figure}[t]
\centering
\subfigure[Ground-truth structure.]{\includegraphics[width=0.3\linewidth]{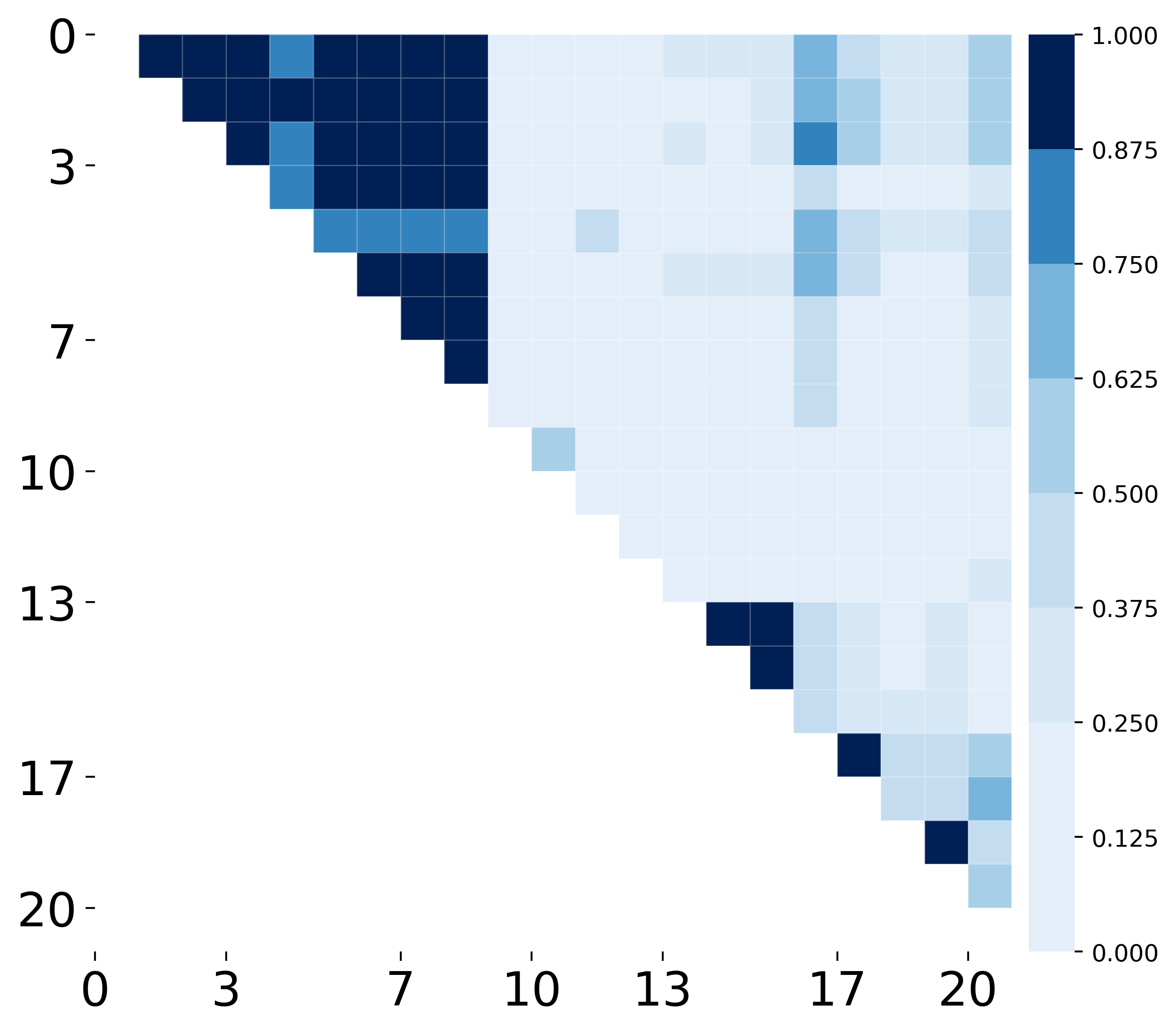}}
\hfill
    \raisebox{-0.0\height}{\rule{0.8pt}{3.5cm}}
\hfill
\subfigure[Baseline structure.]{\includegraphics[width=0.3\linewidth]{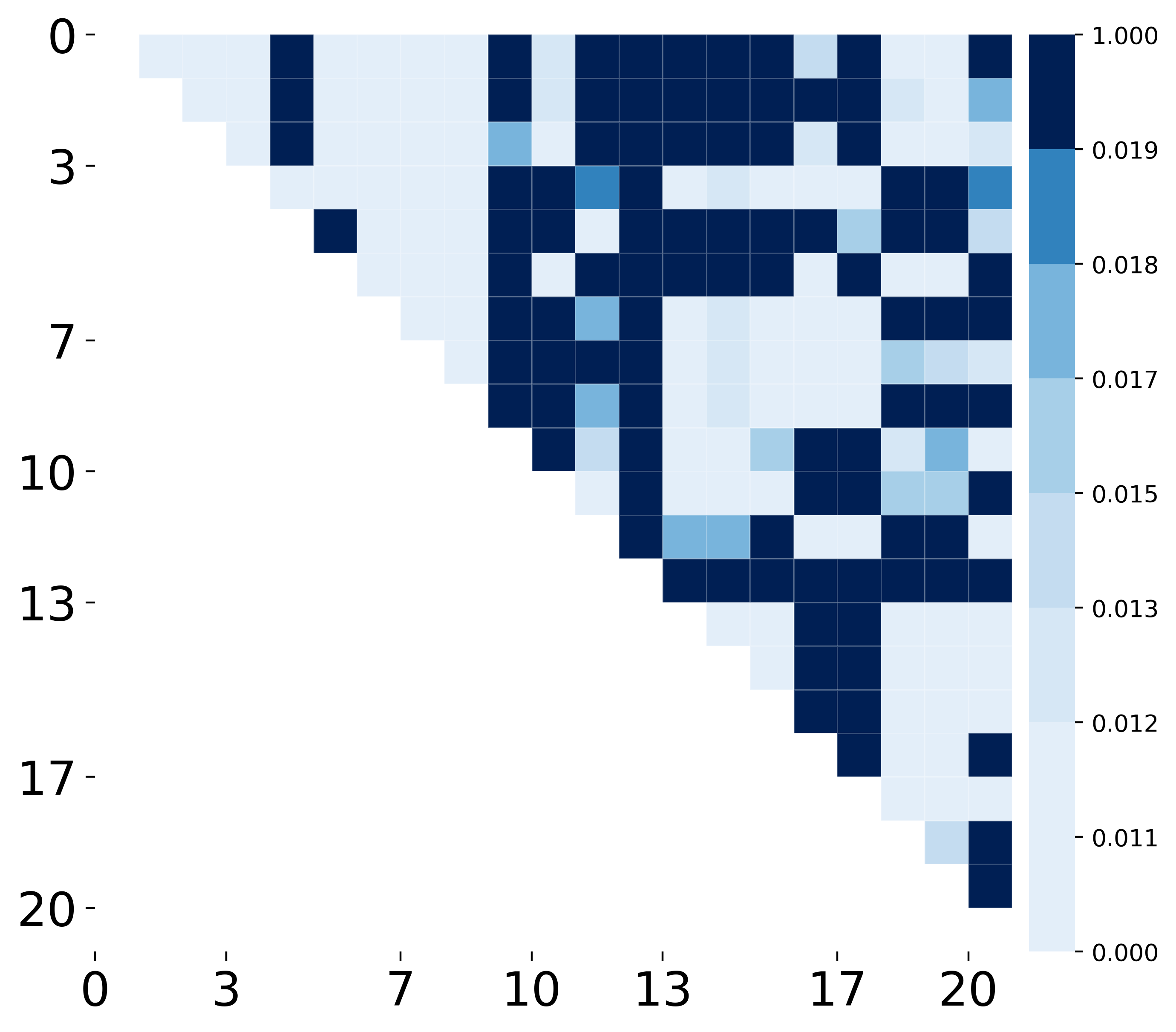}}
\hfill
    \raisebox{-0.0\height}{\rule{0.8pt}{3.5cm}}
\hfill
\subfigure[Improvement by CvLoss.]{\includegraphics[width=0.3\linewidth]{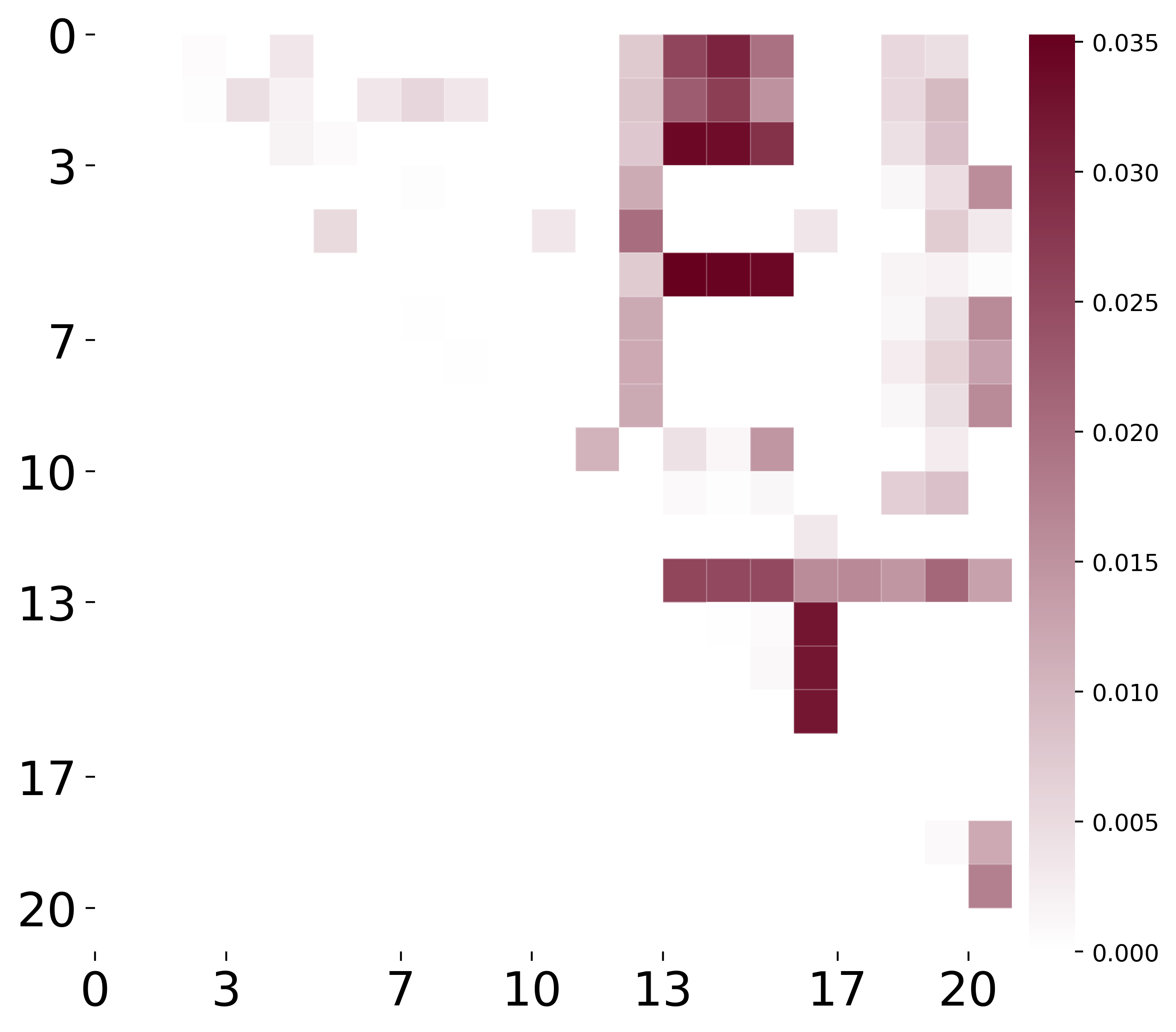}}
\caption{The cross-variable correlation structures of future series given $X$ on the Weather dataset, with input length $H=96$ and forecast horizon $T=96$. The correlation matrices are computed from the ground-truth future series (a) and the future series predicted by iTransformer~\citep{itransformer} (b), while (c) shows the entries on which CvLoss yields smaller structural errors than iTransformer(44\%).}
\label{fig:weather_heatmap}
\end{figure}

\begin{table}[htbp]
\caption{The comprehensive results on the long-term forecasting task. }\label{tab:longterm_app}
\renewcommand{\arraystretch}{1.05}
\setlength{\tabcolsep}{1.5pt}
\scriptsize
\centering
\begin{threeparttable}
\resizebox{\linewidth}{!}{
\begin{tabular}{lc|cc|cc|cc|cc|cc|cc|cc|cc|cc|cc|cc}

\toprule
  \multicolumn{2}{c}{\multirow{2}{*}{\scalebox{1.1}{Models}}} & \multicolumn{2}{c}{\time} & \multicolumn{2}{c}{TimeFilter} & \multicolumn{2}{c}{TQNet} & \multicolumn{2}{c}{iTransformer} & \multicolumn{2}{c}{Leddam} & \multicolumn{2}{c}{SOFTS} & \multicolumn{2}{c}{PatchTST} & \multicolumn{2}{c}{Crossformer} & \multicolumn{2}{c}{TimesNet} & \multicolumn{2}{c}{DLinear} & \multicolumn{2}{c}{FEDformer} \\ 
 
  \multicolumn{2}{c}{} & \multicolumn{2}{c}{\scalebox{0.8}{\textbf{(Ours)}}} & \multicolumn{2}{c}{\scalebox{0.8}{(2025)}} & \multicolumn{2}{c}{\scalebox{0.8}{(2025)}} & \multicolumn{2}{c}{\scalebox{0.8}{(2024)}} & \multicolumn{2}{c}{\scalebox{0.8}{(2024)}} & \multicolumn{2}{c}{\scalebox{0.8}{(2023)}} & \multicolumn{2}{c}{\scalebox{0.8}{(2023)}} & \multicolumn{2}{c}{\scalebox{0.8}{(2023)}} & \multicolumn{2}{c}{\scalebox{0.8}{(2023)}} & \multicolumn{2}{c}{\scalebox{0.8}{(2023)}} & \multicolumn{2}{c}{\scalebox{0.8}{(2022)}} \\

  \cmidrule(lr){3-4} \cmidrule(lr){5-6} \cmidrule(lr){7-8} \cmidrule(lr){9-10} \cmidrule(lr){11-12} \cmidrule(lr){13-14} \cmidrule(lr){15-16} \cmidrule(lr){17-18} \cmidrule(lr){19-20} \cmidrule(lr){21-22} \cmidrule(lr){23-24}

  \multicolumn{2}{c}{Metric} & \scalebox{0.85}{MSE} & \scalebox{0.85}{MAE} & \scalebox{0.85}{MSE} & \scalebox{0.85}{MAE} & \scalebox{0.85}{MSE} & \scalebox{0.85}{MAE} & \scalebox{0.85}{MSE} & \scalebox{0.85}{MAE} & \scalebox{0.85}{MSE} & \scalebox{0.85}{MAE} & \scalebox{0.85}{MSE} & \scalebox{0.85}{MAE} & \scalebox{0.85}{MSE} & \scalebox{0.85}{MAE} & \scalebox{0.85}{MSE} & \scalebox{0.85}{MAE} & \scalebox{0.85}{MSE} & \scalebox{0.85}{MAE} & \scalebox{0.85}{MSE} & \scalebox{0.85}{MAE} & \scalebox{0.85}{MSE} & \scalebox{0.85}{MAE} \\

\toprule

\multirow{5}{*}{\rotatebox[origin=c]{90}{ETTm1}} 
& 96   & \best{0.305} & \best{0.339} & 0.313 & 0.354 & \second{0.311} & \second{0.353} & 0.334 & 0.368 & 0.319 & 0.359 & 0.325 & 0.361 & 0.329 & 0.367 & 0.404 & 0.426 & 0.338 & 0.375 & 0.346 & 0.374 & 0.379 & 0.419 \\

& 192  & \best{0.353} & \best{0.367} & \second{0.356} & 0.380 & 0.357 & \second{0.378} & 0.377 & 0.391 & 0.369 & 0.383 & 0.375 & 0.389 & 0.367 & 0.385 & 0.450 & 0.451 & 0.374 & 0.387 & 0.382 & 0.391 & 0.426 & 0.441 \\

& 336  & \best{0.381} & \best{0.388} & \second{0.386} & 0.403 & 0.390 & \second{0.401} & 0.426 & 0.420 & 0.394 & 0.402 & 0.405 & 0.412 & 0.399 & 0.410 & 0.532 & 0.515 & 0.410 & 0.411 & 0.415 & 0.415 & 0.445 & 0.459 \\

& 720  & \second{0.450} & \best{0.430} & 0.452 & \second{0.437} & \best{0.449} & 0.439 & 0.491 & 0.459 & 0.460 & 0.442 & 0.466 & 0.447 & 0.454 & 0.439 & 0.666 & 0.589 & 0.478 & 0.450 & 0.473 & 0.451 & 0.543 & 0.490 \\

\cmidrule(lr){2-24}

& \emph{Avg} & \best{0.372} & \best{0.381} & \second{0.377} & 0.394 & \second{0.377} & \second{0.393} & 0.407 & 0.410 & 0.386 & 0.397 & 0.393 & 0.403 & 0.387 & 0.400 & 0.513 & 0.496 & 0.400 & 0.406 & 0.404 & 0.408 & 0.448 & 0.452 \\

\midrule

\multirow{5}{*}{\rotatebox[origin=c]{90}{ETTm2}} 

& 96   & \best{0.167} & \best{0.251} & \second{0.169} & \second{0.255} & 0.173 & \second{0.255} & 0.180 & 0.264 & 0.176 & 0.257 & 0.180 & 0.261 & 0.175 & 0.259 & 0.287 & 0.366 & 0.187 & 0.267 & 0.193 & 0.293 & 0.203 & 0.287 \\

& 192  & \best{0.232} & \best{0.294} & \second{0.235} & \second{0.299} & 0.243 & 0.300 & 0.250 & 0.309 & 0.243 & 0.303 & 0.246 & 0.306 & 0.241 & 0.302 & 0.414 & 0.492 & 0.249 & 0.309 & 0.284 & 0.361 & 0.269 & 0.328 \\

& 336  & \best{0.290} & \best{0.332} & \second{0.293} & \second{0.336} & 0.301 & 0.340 & 0.311 & 0.348 & 0.303 & 0.341 & 0.319 & 0.352 & 0.305 & 0.343 & 0.597 & 0.542 & 0.321 & 0.351 & 0.382 & 0.429 & 0.325 & 0.366 \\

& 720  & \best{0.389} & \best{0.391} & \second{0.390} & \second{0.393} & 0.395 & 0.394 & 0.412 & 0.407 & 0.400 & 0.398 & 0.405 & 0.401 & 0.402 & 0.400 & 1.730 & 1.042 & 0.408 & 0.403 & 0.558 & 0.525 & 0.421 & 0.415 \\

\cmidrule(lr){2-24}

& \emph{Avg} & \best{0.270} & \best{0.317} & \second{0.272} & \second{0.321} & 0.278 & 0.322 & 0.288 & 0.332 & 0.281 & 0.325 & 0.287 & 0.330 & 0.281 & 0.326 & 0.757 & 0.610 & 0.291 & 0.333 & 0.354 & 0.402 & 0.305 & 0.349 \\

\midrule

\multirow{5}{*}{\rotatebox[origin=c]{90}{ETTh1}} 

& 96   & \best{0.369} & \best{0.391} & \second{0.370} & 0.394 & 0.371 & \second{0.393} & 0.386 & 0.405 & 0.377 & 0.394 & 0.381 & 0.399 & 0.414 & 0.419 & 0.423 & 0.448 & 0.384 & 0.402 & 0.397 & 0.412 & 0.376 & 0.419 \\

& 192  & \best{0.412} & \best{0.420} & \second{0.413} & \best{0.420} & 0.428 & 0.426 & 0.441 & 0.436 & 0.424 & \second{0.422} & 0.435 & 0.431 & 0.460 & 0.445 & 0.471 & 0.474 & 0.436 & 0.429 & 0.446 & 0.441 & 0.420 & 0.448 \\

& 336  & \best{0.447} & \best{0.439} & \second{0.450} & \second{0.440} & 0.475 & 0.446 & 0.487 & 0.458 & 0.459 & 0.442 & 0.480 & 0.452 & 0.501 & 0.466 & 0.570 & 0.546 & 0.491 & 0.469 & 0.489 & 0.467 & 0.459 & 0.465 \\

& 720  & \best{0.447} & \best{0.457} & \second{0.448} & \best{0.457} & 0.488 & 0.470 & 0.503 & 0.491 & 0.463 & \second{0.459} & 0.499 & 0.488 & 0.500 & 0.488 & 0.653 & 0.621 & 0.521 & 0.500 & 0.513 & 0.510 & 0.506 & 0.507 \\

\cmidrule(lr){2-24}

& \emph{Avg} & \best{0.419} & \best{0.427} & \second{0.420} & \second{0.428} & 0.441 & 0.434 & 0.454 & 0.447 & 0.431 & 0.429 & 0.449 & 0.442 & 0.469 & 0.454 & 0.529 & 0.522 & 0.458 & 0.450 & 0.461 & 0.457 & 0.440 & 0.460 \\

\midrule

\multirow{5}{*}{\rotatebox[origin=c]{90}{ETTh2}} 

& 96   & \best{0.282} & \best{0.334} & \second{0.283} & 0.338 & 0.294 & 0.344 & 0.297 & 0.349 & 0.292 & 0.343 & 0.297 & 0.347 & 0.302 & 0.348 & 0.745 & 0.584 & 0.340 & 0.374 & 0.340 & 0.394 & 0.358 & 0.397 \\

& 192  & \best{0.356} & \best{0.383} & \second{0.362} & 0.392 & 0.368 & 0.393 & 0.380 & 0.400 & 0.367 & \second{0.389} & 0.373 & 0.394 & 0.388 & 0.400 & 0.877 & 0.656 & 0.402 & 0.414 & 0.482 & 0.479 & 0.429 & 0.439 \\

& 336  & \best{0.401} & \best{0.419} & \second{0.409} & 0.431 & 0.417 & 0.428 & 0.428 & 0.432 & 0.412 & \second{0.424} & 0.410 & 0.426 & 0.426 & 0.433 & 1.043 & 0.731 & 0.452 & 0.452 & 0.591 & 0.541 & 0.496 & 0.487 \\

& 720  & \best{0.397} & \best{0.426} & \second{0.408} & 0.434 & 0.434 & 0.446 & 0.427 & 0.445 & 0.419 & 0.438 & 0.411 & \second{0.433} & 0.431 & 0.446 & 1.104 & 0.763 & 0.462 & 0.468 & 0.839 & 0.661 & 0.463 & 0.474 \\

\cmidrule(lr){2-24}

& \emph{Avg} & \best{0.359} & \best{0.391} & \second{0.366} & 0.399 & 0.378 & 0.403 & 0.383 & 0.407 & 0.373 & 0.399 & 0.385 & 0.408 & 0.387 & 0.407 & 0.942 & 0.684 & 0.414 & 0.427 & 0.563 & 0.519 & 0.437 & 0.449 \\

\midrule

\multirow{5}{*}{\rotatebox[origin=c]{90}{Weather}} 

& 96   & \best{0.149} & \best{0.188} & \second{0.153} & \second{0.199} & 0.156 & 0.200 & 0.174 & 0.214 & 0.156 & 0.202 & 0.166 & 0.208 & 0.177 & 0.218 & 0.158 & 0.230 & 0.172 & 0.220 & 0.195 & 0.252 & 0.217 & 0.296 \\

& 192  & \best{0.198} & \best{0.236} & \second{0.202} & 0.246 & 0.205 & \second{0.245} & 0.221 & 0.254 & 0.207 & 0.250 & 0.217 & 0.253 & 0.225 & 0.259 & 0.206 & 0.277 & 0.219 & 0.261 & 0.237 & 0.295 & 0.276 & 0.336 \\

& 336  & \best{0.257} & \best{0.283} & \second{0.260} & 0.289 & 0.262 & \second{0.287} & 0.278 & 0.296 & 0.262 & 0.291 & 0.282 & 0.300 & 0.278 & 0.297 & 0.272 & 0.335 & 0.280 & 0.306 & 0.282 & 0.331 & 0.339 & 0.380 \\

& 720  & \best{0.340} & \best{0.334} & 0.345 & 0.344 & \second{0.343} & \second{0.342} & 0.358 & 0.349 & 0.343 & 0.343 & 0.356 & 0.351 & 0.354 & 0.348 & 0.398 & 0.418 & 0.365 & 0.359 & 0.345 & 0.382 & 0.403 & 0.428 \\

\cmidrule(lr){2-24}

& \emph{Avg} & \best{0.236} & \best{0.260} & \second{0.240} & 0.270 & 0.242 & \second{0.268} & 0.258 & 0.279 & 0.242 & 0.272 & 0.255 & 0.278 & 0.259 & 0.281 & 0.259 & 0.315 & 0.259 & 0.287 & 0.265 & 0.315 & 0.309 & 0.360 \\

\midrule

\multirow{5}{*}{\rotatebox[origin=c]{90}{Electricity}} 

& 96   & \best{0.131} & \best{0.225} & \second{0.133} & 0.230 & 0.134 & \second{0.229} & 0.148 & 0.240 & 0.141 & 0.235 & 0.143 & 0.233 & 0.195 & 0.285 & 0.219 & 0.314 & 0.168 & 0.272 & 0.210 & 0.302 & 0.193 & 0.308 \\

& 192  & \best{0.152} & \best{0.243} & \second{0.154} & 0.248 & \second{0.154} & \second{0.247} & 0.162 & 0.253 & 0.159 & 0.252 & 0.158 & 0.248 & 0.199 & 0.289 & 0.231 & 0.322 & 0.184 & 0.289 & 0.210 & 0.305 & 0.201 & 0.315 \\

& 336  & \best{0.162} & \best{0.257} & \second{0.164} & 0.262 & 0.170 & 0.265 & 0.178 & 0.269 & 0.173 & 0.268 & 0.178 & 0.269 & 0.215 & 0.305 & 0.246 & 0.337 & 0.198 & 0.300 & 0.223 & 0.319 & 0.214 & 0.329 \\

& 720  & \best{0.182} & \best{0.282} & \second{0.184} & \second{0.284} & 0.200 & 0.293 & 0.225 & 0.317 & 0.201 & 0.295 & 0.218 & 0.305 & 0.256 & 0.337 & 0.280 & 0.363 & 0.220 & 0.320 & 0.258 & 0.350 & 0.246 & 0.355 \\

\cmidrule(lr){2-24}

& \emph{Avg} & \best{0.157} & \best{0.252} & \second{0.159} & \second{0.256} & 0.164 & 0.258 & 0.178 & 0.270 & 0.169 & 0.263 & 0.174 & 0.264 & 0.216 & 0.304 & 0.244 & 0.334 & 0.193 & 0.295 & 0.225 & 0.319 & 0.214 & 0.327 \\

\midrule

\multirow{5}{*}{\rotatebox[origin=c]{90}{Traffic}} 

& 96   & \best{0.372} & \best{0.238} & \second{0.375} & \second{0.251} & 0.413 & 0.261 & 0.395 & 0.268 & 0.426 & 0.276 & 0.376 & \second{0.251} & 0.544 & 0.359 & 0.522 & 0.290 & 0.593 & 0.321 & 0.650 & 0.396 & 0.587 & 0.366 \\

& 192  & \best{0.395} & \best{0.248} & \second{0.396} & 0.262 & 0.432 & 0.271 & 0.417 & 0.276 & 0.458 & 0.289 & 0.398 & \second{0.261} & 0.540 & 0.354 & 0.530 & 0.293 & 0.617 & 0.336 & 0.598 & 0.370 & 0.604 & 0.373 \\

& 336  & \best{0.413} & \best{0.256} & 0.416 & 0.271 & 0.450 & 0.277 & 0.433 & 0.283 & 0.486 & 0.297 & \second{0.415} & \second{0.269} & 0.551 & 0.358 & 0.558 & 0.305 & 0.629 & 0.336 & 0.605 & 0.373 & 0.621 & 0.383 \\

& 720  & \best{0.446} & \best{0.275} & \best{0.446} & 0.290 & 0.486 & 0.295 & 0.467 & 0.302 & 0.498 & 0.313 & \second{0.447} & \second{0.287} & 0.586 & 0.375 & 0.589 & 0.328 & 0.640 & 0.350 & 0.645 & 0.394 & 0.626 & 0.382 \\

\cmidrule(lr){2-24}

& \emph{Avg} & \best{0.407} & \best{0.254} & \second{0.408} & 0.269 & 0.445 & 0.276 & 0.428 & 0.282 & 0.467 & 0.294 & 0.409 & \second{0.267} & 0.555 & 0.362 & 0.550 & 0.304 & 0.620 & 0.336 & 0.625 & 0.383 & 0.610 & 0.376 \\

\midrule

\multirow{5}{*}{\rotatebox[origin=c]{90}{Solar-Energy}} 

& 96   & \second{0.183} & \best{0.229} & 0.199 & 0.239 & \best{0.175} & 0.234 & 0.203 & 0.237 & 0.197 & 0.241 & 0.200 & \second{0.230} & 0.234 & 0.286 & 0.310 & 0.331 & 0.250 & 0.292 & 0.290 & 0.378 & 0.242 & 0.342 \\

& 192  & \second{0.215} & \best{0.250} & 0.227 & 0.261 & \best{0.198} & 0.255 & 0.233 & 0.261 & 0.231 & 0.263 & 0.229 & \second{0.253} & 0.267 & 0.310 & 0.734 & 0.725 & 0.296 & 0.318 & 0.320 & 0.398 & 0.285 & 0.380 \\

& 336  & \second{0.232} & \second{0.267} & 0.240 & 0.272 & \best{0.205} & \best{0.262} & 0.248 & 0.273 & 0.241 & 0.268 & 0.243 & 0.269 & 0.290 & 0.315 & 0.750 & 0.735 & 0.319 & 0.330 & 0.353 & 0.415 & 0.282 & 0.376 \\

& 720  & \second{0.241} & \best{0.270} & 0.247 & 0.277 & \best{0.210} & \second{0.270} & 0.249 & 0.275 & 0.250 & 0.281 & 0.245 & 0.272 & 0.289 & 0.317 & 0.769 & 0.765 & 0.338 & 0.337 & 0.357 & 0.413 & 0.357 & 0.427 \\

\cmidrule(lr){2-24}

& \emph{Avg} & \second{0.218} & \best{0.254} & 0.228 & 0.262 & \best{0.197} & \second{0.255} & 0.233 & 0.262 & 0.230 & 0.264 & 0.229 & 0.256 & 0.270 & 0.307 & 0.641 & 0.639 & 0.301 & 0.319 & 0.330 & 0.401 & 0.292 & 0.381 \\

\midrule

\multicolumn{2}{c|}{{{$1^{\text{st}}$ Count}}} & \best{34} & \best{39} & 1 & 2 & \second{6} & 1 & 0 & 0 & 0 & 0 & 0 & 0 & 0 & 0 & 0 & 0 & 0 & 0 & 0 & 0 & 0 & 0 \\

\bottomrule
\end{tabular}
}
\noindent\parbox{\linewidth}{\scriptsize \textit{Note}: We fix the input length as 96 following~\citep{itransformer}. \best{Bold} typeface highlights the top performance for each metric, while \second{underlined} text denotes the second-best results. \emph{Avg} indicates the results averaged over forecasting lengths: T=96, 192, 336 and 720.}
\end{threeparttable}
\end{table}

\begin{table*}[htbp]
\caption{Full results of short-term forecasting.}\label{tab::app_full_short_result}
\renewcommand{\arraystretch}{1.05}
\setlength{\tabcolsep}{1.5pt}
\scriptsize
\centering
\begin{threeparttable}
\resizebox{\linewidth}{!}{
\begin{tabular}{lc|cc|cc|cc|cc|cc|cc|cc|cc|cc|cc|cc}

\toprule
\multicolumn{2}{c}{\multirow{2}{*}{\scalebox{1.1}{Models}}} & \multicolumn{2}{c}{\time} & \multicolumn{2}{c}{TimeFilter} & \multicolumn{2}{c}{TQNet} & \multicolumn{2}{c}{iTransformer} & \multicolumn{2}{c}{Leddam} & \multicolumn{2}{c}{SOFTS} & \multicolumn{2}{c}{PatchTST} & \multicolumn{2}{c}{Crossformer} & \multicolumn{2}{c}{TimesNet} & \multicolumn{2}{c}{DLinear} & \multicolumn{2}{c}{FEDformer} \\ 

\multicolumn{2}{c}{} & \multicolumn{2}{c}{\scalebox{0.8}{\textbf{(Ours)}}} & \multicolumn{2}{c}{\scalebox{0.8}{(2025)}} & \multicolumn{2}{c}{\scalebox{0.8}{(2025)}} & \multicolumn{2}{c}{\scalebox{0.8}{(2024)}} & \multicolumn{2}{c}{\scalebox{0.8}{(2024)}} & \multicolumn{2}{c}{\scalebox{0.8}{(2023)}} & \multicolumn{2}{c}{\scalebox{0.8}{(2023)}} & \multicolumn{2}{c}{\scalebox{0.8}{(2023)}} & \multicolumn{2}{c}{\scalebox{0.8}{(2023)}} & \multicolumn{2}{c}{\scalebox{0.8}{(2023)}} & \multicolumn{2}{c}{\scalebox{0.8}{(2022)}} \\

\cmidrule(lr){3-4} \cmidrule(lr){5-6} \cmidrule(lr){7-8} \cmidrule(lr){9-10} \cmidrule(lr){11-12} \cmidrule(lr){13-14} \cmidrule(lr){15-16} \cmidrule(lr){17-18} \cmidrule(lr){19-20} \cmidrule(lr){21-22} \cmidrule(lr){23-24}

\multicolumn{2}{c}{Metric} & MSE & MAE & MSE & MAE & MSE & MAE & MSE & MAE & MSE & MAE & MSE & MAE & MSE & MAE & MSE & MAE & MSE & MAE & MSE & MAE & MSE & MAE \\

\toprule

\multirow{4}{*}{\rotatebox[origin=c]{90}{PEMS03}} 
& 12  & \best{0.060} & \best{0.160} & 0.063 & 0.165 & \second{0.060} & \second{0.161} & 0.071 & 0.174 & 0.068 & 0.174 & 0.064 & 0.165 & 0.099 & 0.216 & 0.090 & 0.203 & 0.085 & 0.192 & 0.122 & 0.243 & 0.126 & 0.251 \\
& 24  & \best{0.075} & \best{0.180} & 0.079 & 0.185 & \second{0.077} & \second{0.183} & 0.093 & 0.201 & 0.094 & 0.202 & 0.083 & 0.188 & 0.142 & 0.259 & 0.121 & 0.240 & 0.118 & 0.223 & 0.201 & 0.317 & 0.149 & 0.275 \\
& 48  & \best{0.101} & \best{0.210} & 0.110 & 0.222 & \second{0.103} & \second{0.213} & 0.125 & 0.236 & 0.140 & 0.254 & 0.114 & 0.223 & 0.211 & 0.319 & 0.202 & 0.317 & 0.155 & 0.260 & 0.333 & 0.425 & 0.227 & 0.348 \\
\cmidrule(lr){2-24}
& \emph{Avg} & \best{0.079} & \best{0.183} & 0.084 & 0.191 & \second{0.080} & \second{0.186} & 0.096 & 0.204 & 0.101 & 0.210 & 0.087 & 0.192 & 0.151 & 0.265 & 0.138 & 0.253 & 0.119 & 0.271 & 0.219 & 0.295 & 0.167 & 0.291 \\

\midrule

\multirow{4}{*}{\rotatebox[origin=c]{90}{PEMS04}} 
& 12  & \best{0.066} & \best{0.164} & 0.068 & 0.167 & \second{0.067} & \second{0.166} & 0.078 & 0.183 & 0.076 & 0.182 & 0.074 & 0.176 & 0.105 & 0.224 & 0.098 & 0.218 & 0.087 & 0.195 & 0.148 & 0.272 & 0.138 & 0.262 \\
& 24  & \best{0.075} & \best{0.178} & 0.080 & 0.183 & \second{0.077} & \second{0.181} & 0.095 & 0.205 & 0.097 & 0.209 & 0.088 & 0.194 & 0.153 & 0.257 & 0.131 & 0.256 & 0.103 & 0.215 & 0.224 & 0.340 & 0.177 & 0.293 \\
& 48  & \best{0.094} & \best{0.201} & 0.101 & 0.209 & \second{0.098} & \second{0.207} & 0.120 & 0.233 & 0.132 & 0.249 & 0.110 & 0.219 & 0.229 & 0.339 & 0.205 & 0.326 & 0.136 & 0.250 & 0.335 & 0.437 & 0.270 & 0.368 \\
\cmidrule(lr){2-24}
& \emph{Avg} & \best{0.078} & \best{0.181} & 0.083 & 0.186 & \second{0.081} & \second{0.185} & 0.098 & 0.207 & 0.102 & 0.213 & 0.091 & 0.196 & 0.162 & 0.273 & 0.145 & 0.267 & 0.109 & 0.220 & 0.236 & 0.350 & 0.195 & 0.308 \\

\midrule

\multirow{4}{*}{\rotatebox[origin=c]{90}{PEMS07}} 
& 12  & \best{0.051} & \best{0.140} & 0.055 & 0.150 & \second{0.052} & \second{0.144} & 0.067 & 0.165 & 0.066 & 0.164 & 0.057 & 0.152 & 0.095 & 0.207 & 0.094 & 0.200 & 0.082 & 0.181 & 0.115 & 0.242 & 0.109 & 0.225 \\
& 24  & \best{0.061} & \best{0.155} & 0.068 & 0.166 & \second{0.063} & \second{0.159} & 0.088 & 0.190 & 0.079 & 0.185 & 0.073 & 0.173 & 0.150 & 0.262 & 0.139 & 0.247 & 0.101 & 0.204 & 0.210 & 0.329 & 0.125 & 0.244 \\
& 48  & \best{0.077} & \best{0.174} & 0.089 & 0.193 & \second{0.080} & \second{0.180} & 0.110 & 0.215 & 0.115 & 0.228 & 0.096 & 0.195 & 0.253 & 0.340 & 0.311 & 0.369 & 0.134 & 0.238 & 0.398 & 0.458 & 0.165 & 0.288 \\
\cmidrule(lr){2-24}
& \emph{Avg} & \best{0.063} & \best{0.156} & 0.071 & 0.170 & \second{0.065} & \second{0.161} & 0.088 & 0.190 & 0.087 & 0.192 & 0.075 & 0.173 & 0.166 & 0.270 & 0.181 & 0.272 & 0.106 & 0.208 & 0.241 & 0.343 & 0.133 & 0.282 \\

\midrule

\multirow{4}{*}{\rotatebox[origin=c]{90}{PEMS08}} 
& 12  & \best{0.062} & \best{0.159} & \second{0.064} & \second{0.162} & 0.071 & 0.170 & 0.079 & 0.182 & 0.070 & 0.173 & 0.074 & 0.171 & 0.168 & 0.232 & 0.165 & 0.214 & 0.112 & 0.212 & 0.227 & 0.343 & 0.173 & 0.273 \\
& 24  & \best{0.076} & \best{0.178} & \second{0.079} & \second{0.182} & 0.095 & 0.195 & 0.115 & 0.219 & 0.091 & 0.200 & 0.104 & 0.201 & 0.224 & 0.281 & 0.215 & 0.260 & 0.141 & 0.238 & 0.318 & 0.409 & 0.210 & 0.310 \\
& 48  & \best{0.101} & \best{0.210} & \second{0.105} & \second{0.214} & 0.149 & 0.243 & 0.186 & 0.235 & 0.145 & 0.261 & 0.164 & 0.253 & 0.321 & 0.354 & 0.315 & 0.335 & 0.198 & 0.283 & 0.497 & 0.510 & 0.320 & 0.394 \\
\cmidrule(lr){2-24}
& \emph{Avg} & \best{0.080} & \best{0.182} & \second{0.083} & \second{0.186} & 0.105 & 0.203 & 0.127 & 0.212 & 0.102 & 0.211 & 0.114 & 0.208 & 0.238 & 0.289 & 0.232 & 0.270 & 0.150 & 0.244 & 0.347 & 0.421 & 0.234 & 0.326 \\

\bottomrule

\end{tabular}
}

\noindent\parbox{\linewidth}{\scriptsize \textit{Note}: The input sequence length is set to 96 for all baselines. All results are averaged across four different forecasting horizons: $T \in \{12, 24, 48\}$. The best and second-best results are highlighted in \best{bold} and \second{underlined}, respectively.}

\end{threeparttable}
\end{table*}

\begin{table}
    \centering
    \caption{Backbone-controlled comparison on four forecasting backbones: each pair contrasts the unmodified model with the identical model trained with \time. The \emph{Wins} row counts, for each backbone, the horizon--dataset--metric cells on which \time\ is better, out of $3\times4\times2=24$. These counts are produced by the evaluation script from the \emph{unrounded} outputs, not by comparing the three-decimal values printed here: seven cells are equal at the displayed precision but are improvements at the fourth decimal, so the totals of $24/24$ per backbone and $96/96$ overall are exact. We keep three decimals for consistency with the other tables rather than widening the display.
    }
    \resizebox{\textwidth}{!}
    {
       \begin{tabular}{cc|cccc|cccc|cccc|cccc}
        \toprule
        \multicolumn{2}{c|}{\multirow{2}{*}{Method}} & \multicolumn{2}{c}{TQNet} & \multicolumn{2}{c|}{\textbf{+ Ours}} & \multicolumn{2}{c}{TimeFilter} & \multicolumn{2}{c|}{\textbf{+ Ours}} & \multicolumn{2}{c}{CFPT} & \multicolumn{2}{c}{\textbf{+ Ours}} & \multicolumn{2}{c}{iTransformer} & \multicolumn{2}{c}{\textbf{+ Ours}}   \\
        \multicolumn{2}{c|}{} & MSE & MAE & MSE & MAE & MSE & MAE & MSE & MAE & MSE & MAE & MSE & MAE & MSE & MAE & MSE & MAE \\ \midrule
         
         \multirow{5}{*}{\rotatebox{90}{$ETTh2$}}  
         & 96  &0.294  &0.344  & \textbf{0.281}  & \textbf{0.332}  &0.283  &0.337  & \textbf{0.282}  & \textbf{0.334}  &0.287  &0.337  & \textbf{0.283}  & \textbf{0.333}  &0.297  &0.349  & \textbf{0.296}  & \textbf{0.342}  \\
         & 192 &0.368  &0.393  & \textbf{0.358}  & \textbf{0.380}  &0.362  &0.392  & \textbf{0.356}  & \textbf{0.383}  &0.368  &0.390  & \textbf{0.361}  & \textbf{0.383}  &0.380  &0.400  & \textbf{0.374}  & \textbf{0.391} \\
         & 336 &0.417  &0.428  & \textbf{0.404}  & \textbf{0.419}  &0.404  &0.424  & \textbf{0.401}  & \textbf{0.419}  &0.416  &0.428  & \textbf{0.409}  & \textbf{0.420}  &0.428  &0.432  & \textbf{0.415}  & \textbf{0.427}        \\
         & 720 &0.434  &0.446  & \textbf{0.413}  & \textbf{0.433}  &0.407  &0.433  & \textbf{0.397}  & \textbf{0.426}  &0.415  &0.436  & \textbf{0.392}  & \textbf{0.418}  &0.427  &0.445  & \textbf{0.421}  & \textbf{0.438}        \\
         & \cellcolor{tabhighlight} Avg & \cellcolor{tabhighlight} 0.378 & \cellcolor{tabhighlight} 0.403 & \cellcolor{tabhighlight} \textbf{0.364} & \cellcolor{tabhighlight} \textbf{0.391} & \cellcolor{tabhighlight} 0.364 & \cellcolor{tabhighlight} 0.397 & \cellcolor{tabhighlight} \textbf{0.359} & \cellcolor{tabhighlight} \textbf{0.391} & \cellcolor{tabhighlight} 0.372 & \cellcolor{tabhighlight} 0.398 & \cellcolor{tabhighlight} \textbf{0.361} & \cellcolor{tabhighlight} \textbf{0.389} & \cellcolor{tabhighlight} 0.383 & \cellcolor{tabhighlight} 0.407 & \cellcolor{tabhighlight} \textbf{0.377} & \cellcolor{tabhighlight} \textbf{0.399} \\ \midrule

         \multirow{5}{*}{\rotatebox{90}{$Weather$}}  
         & 96  &0.156  &0.200  & \textbf{0.154}  & \textbf{0.192}  &0.153  &0.199  & \textbf{0.149}  & \textbf{0.188}  &0.156  &0.201  & \textbf{0.152}  & \textbf{0.192}  &0.174  &0.214  & \textbf{0.168}  & \textbf{0.207}  \\
         & 192 &0.205  &0.245  & \textbf{0.203}  & \textbf{0.238}  &0.202  &0.246  & \textbf{0.198}  & \textbf{0.236}  &0.205  &0.245  & \textbf{0.203}  & \textbf{0.240}  &0.221  &0.254  & \textbf{0.221}  & \textbf{0.254} \\
         & 336 &0.262  &0.287  & \textbf{0.262}  & \textbf{0.285}  &0.260  &0.289  & \textbf{0.257}  & \textbf{0.283}  &0.261  &0.286  & \textbf{0.260}  & \textbf{0.281}  &0.278  &0.296  & \textbf{0.278}  & \textbf{0.296}        \\
         & 720 &0.343  &0.342  & \textbf{0.343}  & \textbf{0.339}  &0.342  &0.341  & \textbf{0.340}  & \textbf{0.334}  &0.343  &0.339  & \textbf{0.340}  & \textbf{0.336}  &0.358  &0.349  & \textbf{0.357}  & \textbf{0.347}        \\
         & \cellcolor{tabhighlight} Avg & \cellcolor{tabhighlight} 0.242 & \cellcolor{tabhighlight} 0.268 & \cellcolor{tabhighlight} \textbf{0.241} & \cellcolor{tabhighlight} \textbf{0.264} & \cellcolor{tabhighlight} 0.239 & \cellcolor{tabhighlight} 0.269 & \cellcolor{tabhighlight} \textbf{0.236} & \cellcolor{tabhighlight} \textbf{0.260} & \cellcolor{tabhighlight} 0.241 & \cellcolor{tabhighlight} 0.267 & \cellcolor{tabhighlight} \textbf{0.239} & \cellcolor{tabhighlight} \textbf{0.262} & \cellcolor{tabhighlight} 0.258 & \cellcolor{tabhighlight} 0.279 & \cellcolor{tabhighlight} \textbf{0.256} & \cellcolor{tabhighlight} \textbf{0.276} \\ \midrule

         \multirow{5}{*}{\rotatebox{90}{$ECL$}}  
         & 96  &0.134  &0.229  & \textbf{0.133}  & \textbf{0.225}  &0.133  &0.230  & \textbf{0.131}  & \textbf{0.225}  &0.137  &0.232  & \textbf{0.136}  & \textbf{0.227}  &0.148  &0.240  & \textbf{0.146}  & \textbf{0.235}  \\
         & 192 &0.154  &0.247  & \textbf{0.151}  & \textbf{0.241}  &0.154  &0.248  & \textbf{0.152}  & \textbf{0.243}  &0.153  &0.247  & \textbf{0.153}  & \textbf{0.242}  &0.162  &0.253  & \textbf{0.161}  & \textbf{0.249} \\
         & 336 &0.170  &0.265  & \textbf{0.166}  & \textbf{0.258}  &0.164  &0.261  & \textbf{0.162}  & \textbf{0.257}  &0.168  &0.266  & \textbf{0.166}  & \textbf{0.259}  &0.178  &0.269  & \textbf{0.174}  & \textbf{0.264}        \\
         & 720 &0.200  &0.293  & \textbf{0.196}  & \textbf{0.287}  &0.184  &0.284  & \textbf{0.182}  & \textbf{0.282}  &0.199  &0.294  & \textbf{0.196}  & \textbf{0.286}  &0.225  &0.317  & \textbf{0.205}  & \textbf{0.292}        \\
         & \cellcolor{tabhighlight} Avg & \cellcolor{tabhighlight} 0.164 & \cellcolor{tabhighlight} 0.258 & \cellcolor{tabhighlight} \textbf{0.162} & \cellcolor{tabhighlight} \textbf{0.253} & \cellcolor{tabhighlight} 0.158 & \cellcolor{tabhighlight} 0.256 & \cellcolor{tabhighlight} \textbf{0.157} & \cellcolor{tabhighlight} \textbf{0.252} & \cellcolor{tabhighlight} 0.165 & \cellcolor{tabhighlight} 0.260 & \cellcolor{tabhighlight} \textbf{0.163} & \cellcolor{tabhighlight} \textbf{0.254} & \cellcolor{tabhighlight} 0.178 & \cellcolor{tabhighlight} 0.270 & \cellcolor{tabhighlight} \textbf{0.172} & \cellcolor{tabhighlight} \textbf{0.260} \\ \midrule
        
        \rowc\rowcolor{blue!15}
        \multicolumn{2}{c|}{Wins} & \multicolumn{2}{c}{0} & \multicolumn{2}{c|}{\textbf{24}} & \multicolumn{2}{c}{0} & \multicolumn{2}{c|}{\textbf{24}} & \multicolumn{2}{c}{0} & \multicolumn{2}{c|}{\textbf{24}} & \multicolumn{2}{c}{0} & \multicolumn{2}{c}{\textbf{24}} \\
        \bottomrule
        \end{tabular}
    }
    \label{tab2}
\end{table}

\section{More Experimental Results}\label{sec:results_app}

\subsection{Overall performance}\label{sec:overall_app}
Additional overall results are reported in Tables~\ref{tab:longterm_app} and~\ref{tab::app_full_short_result}, where we provide performance under different forecast horizons.

\subsection{Patch length sensitivity}
Additional results for the patch length hyperparameter are reported in Figure~\ref{fig:sensi}, covering the ETTh2 and ETTm2 datasets under the forecast horizons $T=192$ and $T=336$.

\begin{figure}
\begin{center}
\subfigure[ETTh2 with MSE]{\includegraphics[width=0.24\linewidth]{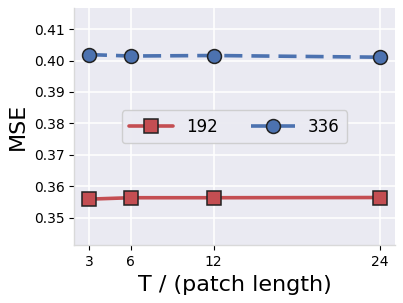}}
\subfigure[ETTh2 with MAE]{\includegraphics[width=0.24\linewidth]{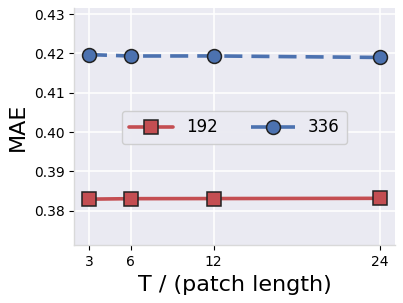}}
\subfigure[ETTm2 with MSE]{\includegraphics[width=0.24\linewidth]{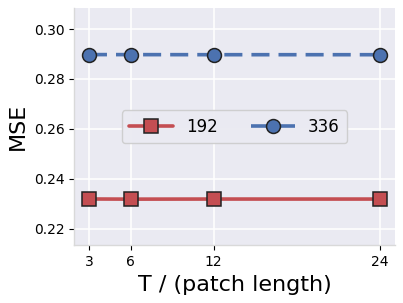}}
\subfigure[ETTm2 with MAE]{\includegraphics[width=0.24\linewidth]{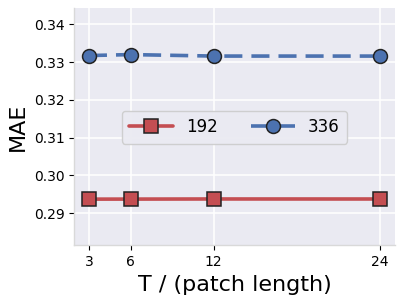}}
\caption{Sensitivity analysis of the patch length hyperparameter. The plots illustrate the forecasting performance in terms of MSE and MAE on the ETTh2 (a, b) and ETTm2 (c, d) datasets. The x-axis represents the ratio of the forecast horizon to the patch length (T / patch length), while the lines denote different forecast horizons ($T=192$ and $T=336$). The nearly horizontal curves demonstrate that the predictive accuracy remains consistent regardless of the specific patch length chosen, indicating that CvLoss is highly robust to this hyperparameter.}
\label{fig:sensi}
\end{center}
\end{figure}

\begin{table}[t]
\begin{minipage}[t]{0.495\textwidth}
\makeatletter\def\@captype{table}
\renewcommand{\arraystretch}{0.9} 
\setlength{\tabcolsep}{4.7pt} 
\scriptsize
\centering
\renewcommand{\multirowsetup}{\centering}
\begin{threeparttable}
\caption{Varying $\alpha$ results of iTransformer}
\label{tab:sensi-iTransformer}
\begin{tabular}{l|cc|cc|cc}
\hline
\multirow{2}{*}{$\alpha$} 
& \multicolumn{2}{c|}{ETTh2} 
& \multicolumn{2}{c|}{ECL} 
& \multicolumn{2}{c}{Weather} \\
\cline{2-7}
& MSE & MAE & MSE & MAE & MSE & MAE \\
\hline
0      & 0.384 & 0.407 & 0.176 & 0.267 & 0.259 & 0.280  \\
0.001  & 0.384 & 0.407 & 0.176 & 0.267 & 0.260 & 0.281  \\
0.002  & 0.384 & 0.407 & 0.176 & 0.267 & 0.260 & 0.281  \\
0.005  & 0.384 & 0.407 & 0.176 & 0.267 & 0.259 & 0.280  \\
0.01   & 0.384 & 0.407 & 0.176 & 0.267 & 0.262 & 0.282  \\
0.02   & 0.384 & 0.407 & 0.176 & 0.267 & 0.261 & 0.281  \\
0.05   & 0.384 & 0.407 & 0.176 & 0.267 & 0.262 & 0.282  \\
0.1    & 0.383 & 0.406 & 0.175 & 0.266 & \subbst{0.259} & 0.280  \\
0.2    & 0.383 & 0.406 & \subbst{0.174} & \subbst{0.264} & 0.259 & \subbst{0.279}  \\
0.5    & \subbst{0.379} & \subbst{0.403} & \bst{0.172} & \bst{0.260} & \bst{0.257} & 0.276  \\
1      & \bst{0.378} & \bst{0.399} & 0.179 & 0.265 & 0.258 & \bst{0.276}  \\
\hline
\end{tabular}
\begin{tablenotes}
\item \tiny \textit{Note}: \bst{Bold} and \subbst{underlined} denote the best and second-best results.
\end{tablenotes}
\end{threeparttable}
\end{minipage}
\hfill
\begin{minipage}[t]{0.495\textwidth}
\makeatletter\def\@captype{table}
\renewcommand{\arraystretch}{0.9} 
\setlength{\tabcolsep}{4.7pt} 
\scriptsize
\centering
\renewcommand{\multirowsetup}{\centering}
\begin{threeparttable}
\caption{Varying $\alpha$ results of TimeBridge.}
\label{tab:sensi-TimeBridge}
\begin{tabular}{l|cc|cc|cc}
\hline
\multirow{2}{*}{$\alpha$} 
& \multicolumn{2}{c|}{ETTh2} 
& \multicolumn{2}{c|}{ECL} 
& \multicolumn{2}{c}{Weather} \\
\cline{2-7}
& MSE & MAE & MSE & MAE & MSE & MAE \\
\hline
0      & 0.366 & 0.408 & 0.159 & 0.258 & 0.221 & 0.261  \\
0.001  & 0.366 & 0.408 & 0.161 & 0.259 & 0.221 & 0.261  \\
0.002  & 0.366 & 0.408 & 0.156 & 0.255 & 0.221 & 0.261  \\
0.005  & 0.366 & 0.408 & 0.159 & 0.258 & 0.221 & 0.261  \\
0.01   & 0.366 & 0.408 & 0.157 & 0.257 & 0.221 & 0.261  \\
0.02   & 0.366 & 0.408 & 0.157 & 0.257 & 0.221 & 0.261  \\
0.05   & 0.365 & 0.407 & 0.157 & 0.255 & 0.221 & 0.261  \\
0.1    & 0.363 & 0.406 & 0.156 & 0.254 & 0.220 & 0.260  \\
0.2    & 0.360 & 0.403 & 0.158 & 0.255 & 0.220 & 0.258  \\
0.5    & \subbst{0.351} & \subbst{0.396} & \bst{0.153} & \subbst{0.250} & \bst{0.220} & \subbst{0.255}  \\
1      & \bst{0.342} & \bst{0.386} & \subbst{0.155} & \bst{0.248} & \subbst{0.220} & \bst{0.251}  \\
\hline
\end{tabular}
\begin{tablenotes}
\item \tiny \textit{Note}: \bst{Bold} and \subbst{underlined} denote the best and second-best results.
\end{tablenotes}
\end{threeparttable}
\end{minipage}
% \vspace{-3mm}
\end{table}

\subsection{Comparison with different learning objectives}
\label{other_learning_objectives}

% \paragraph{Baselines.} Since this paper focuses on the devise of learning objectives, we select competitive learning objectives tailored for training forecast models as baselines. For the comprehensive evaluation in Table~\ref{tab:multistep_app_full}, we compare against 10 objectives, which can be categorized as follows: \ding{182} \textbf{shape-alignment objectives:} GDTW~\citep{GDTW}, Dilate~\citep{Dilate}, and Soft-DTW~\citep{soft-dtw}; \ding{183} \textbf{likelihood maximization objectives:} QDF~\citep{wang2026iclrqdf}, Time-o1~\citep{wang2025nipstimeo1}, Koopman~\citep{koopman}, FreDF~\citep{wang2025fredf}, and MSE (also denoted as DF); \ding{184} \textbf{distribution balancing objectives:} KMB-DF~\citep{kmbdf}, and DistDF~\citep{wang2026iclrdistdf}. For the cross-model evaluation in Table~\ref{tab:loss-compare}, we select a representative subset of 6 objectives: QDF, Time-o1, FreDF, Koopman, Soft-DTW, and DF. The implementation of baselines follows the official codebase from~\citep{kmbdf}. 

\paragraph{Baselines.} Since this paper focuses on the design of learning objectives, we select competitive learning objectives tailored for training forecast models as baselines. For the comprehensive evaluation in Table~\ref{tab:multistep_app_full}, we compare against 11 objectives, which can be categorized as follows: \ding{182} \textbf{shape-alignment objectives:} GDTW~\citep{GDTW}, Dilate~\citep{Dilate}, and Soft-DTW~\citep{soft-dtw}; \ding{183} \textbf{likelihood maximization objectives:} QDF~\citep{wang2026iclrqdf}, Time-o1~\citep{wang2025nipstimeo1}, Koopman~\citep{koopman}, FreDF~\citep{wang2025fredf}, and MSE (also denoted as DF); \ding{184} \textbf{distribution balancing objectives:} KMB-DF~\citep{kmbdf}, and DistDF~\citep{wang2026iclrdistdf}; and \ding{185} \textbf{decomposition-based objectives:} DBLoss~\citep{qiu2025DBLoss}. For the cross-model evaluation in Table~\ref{tab:loss-compare}, we select a representative subset of 7 objectives: QDF, Time-o1, FreDF, Koopman, Soft-DTW, DF, and DBLoss. The implementation of baselines follows the official codebase from~\citep{kmbdf}.

\paragraph{Implementation.} To thoroughly assess the effectiveness of our proposed objective, we evaluate it under two distinct experimental settings. 
In the first setting (Table~\ref{tab:multistep_app_full}), we employ CFPT~\citep{CFPT} as the default forecast model to compare all 10 learning objectives across six standard benchmarks (ETTh1, ETTh2, ETTm1, ETTm2, Weather, and ECL). 
In the second setting (Table~\ref{tab:loss-compare}), we adopt TQNet and PDF as the backbone models to verify the generalizability of our method across different architectures. This evaluation is conducted on four datasets (ETTh1, ETTm1, ECL, and Weather) against the aforementioned subset of 6 learning objectives. 
To ensure fair comparison, the drop-last trick is disabled for all baselines, as recommended in~\citep{qiutfb}. All objectives are trained with the Adam optimizer~\citep{Adam}. When integrating \time\ to train a forecast model, we retain all hyperparameters from the public benchmarks~\citep{CFPT}, only tuning $\alpha$ and the segment patch length.

\paragraph{On the weakest competing objectives.} Two of the eleven objectives, Koopman~\citep{koopman} and Soft-DTW~\citep{soft-dtw}, diverge sharply on some datasets. Koopman reaches $0.499$ MSE on ETTm1 with TQNet, and Soft-DTW reaches $4.502$ in Table~\ref{tab:multistep_app_full}, so a reader may reasonably suspect a reproduction failure that would inflate our win counts. It is not one. Our figures reproduce the published numbers of the objective-comparison protocol we adopt: on the PDF backbone, Table 2 of QDF~\citep{wang2026iclrqdf} reports Koopman at $0.587/0.485$ on ETTm1 and Soft-DTW at $0.695/0.548$ on ECL and $1.296/0.452$ on Weather, matching ours. These older shape-alignment objectives simply degrade on some multivariate benchmarks, as their own authors' evaluations show. None of our conclusions rests on them: neither ranks first in any setting, so removing both leaves the first-place count of \time\ in Table~\ref{tab:multistep_app_full} unchanged. The objectives that actually compete with \time\ are QDF, KMB-DF, DBLoss, Time-o1 and FreDF, and they differ from it by a few thousandths.

\begin{table*}
  \caption{Comparable results with different learning objectives.}\label{tab:loss-compare}
  \renewcommand{\arraystretch}{1} \setlength{\tabcolsep}{5pt} \scriptsize
  \centering
  \renewcommand{\multirowsetup}{\centering}
  \begin{threeparttable}
  \begin{tabular}{c|c|cc|cc|cc|cc|cc|cc|cc}
    \toprule
    \multicolumn{2}{l}{Loss} & 
    \multicolumn{2}{c}{\textbf{\time}} &
    \multicolumn{2}{c}{\textbf{QDF}} &
    \multicolumn{2}{c}{Time-o1} &
    \multicolumn{2}{c}{FreDF} &
    \multicolumn{2}{c}{Koopman} &
    \multicolumn{2}{c}{Soft-DTW} &
    \multicolumn{2}{c}{DF} \\
    \cmidrule(lr){3-4} \cmidrule(lr){5-6} \cmidrule(lr){7-8} \cmidrule(lr){9-10}\cmidrule(lr){11-12}\cmidrule(lr){13-14}\cmidrule(lr){15-16}
    \multicolumn{2}{l}{Metrics}  & MSE & MAE  & MSE & MAE & MSE & MAE & MSE & MAE  & MSE & MAE  & MSE & MAE  & MSE & MAE  \\
       \hline
    \rowcolor{blue!8}
    \multicolumn{16}{l}{\textbf{Forecast model:TQNet}}\\\hline

\multirow{5}{*}{{\rotatebox{90}{\scalebox{0.95}{ETTm1}}}}
& 96 & \textbf{0.305} & \textbf{0.343} & 0.307 & 0.349 & 0.309 & 0.351 & 0.314 & 0.355 & 0.806 & 0.578 & 0.315 & 0.353 & 0.311 & 0.353 \\
& 192 & \textbf{0.352} & \textbf{0.373} & \textbf{0.352} & 0.376 & 0.353 & 0.375 & 0.359 & 0.378 & 0.619 & 0.515 & 0.360 & 0.377 & 0.357 & 0.378 \\
& 336 & 0.388 & 0.400 & 0.383 & 0.398 & 0.383 & 0.398 & \textbf{0.382} & \textbf{0.396} & 0.507 & 0.468 & 0.398 & 0.402 & 0.390 & 0.401 \\
& 720 & 0.444 & \textbf{0.430} & \textbf{0.441} & 0.434 & 0.444 & 0.436 & 0.444 & 0.432 & 0.450 & 0.437 & 0.476 & 0.446 & 0.449 & 0.439 \\
\cmidrule(lr){2-16}
& Avg & 0.373 & \textbf{0.386} & \textbf{0.371} & 0.389 & 0.372 & 0.390 & 0.375 & 0.390 & 0.595 & 0.499 & 0.387 & 0.394 & 0.377 & 0.393 \\
\midrule
\multirow{5}{*}{{\rotatebox{90}{\scalebox{0.95}{ETTh1}}}}
& 96 & 0.368 & \textbf{0.385} & \textbf{0.365} & 0.389 & 0.381 & 0.395 & 0.369 & 0.391 & 0.415 & 0.425 & 0.379 & 0.390 & 0.371 & 0.393 \\
& 192 & 0.426 & 0.422 & 0.427 & \textbf{0.421} & 0.427 & 0.424 & \textbf{0.425} & 0.422 & 0.430 & 0.422 & 0.437 & 0.424 & 0.428 & 0.426 \\
& 336 & 0.471 & \textbf{0.443} & \textbf{0.466} & 0.449 & 0.471 & 0.444 & 0.467 & 0.445 & 0.474 & 0.445 & 0.488 & 0.453 & 0.475 & 0.446 \\
& 720 & 0.486 & 0.469 & \textbf{0.466} & 0.467 & 0.469 & \textbf{0.466} & 0.468 & 0.469 & 0.483 & 0.474 & 0.510 & 0.487 & 0.488 & 0.470 \\
\cmidrule(lr){2-16}
& Avg & 0.438 & \textbf{0.430} & \textbf{0.431} & 0.431 & 0.437 & 0.432 & 0.432 & 0.432 & 0.451 & 0.442 & 0.453 & 0.438 & 0.441 & 0.434 \\
\midrule
\multirow{5}{*}{{\rotatebox{90}{\scalebox{0.95}{ECL}}}}
& 96 & \textbf{0.133} & \textbf{0.225} & 0.135 & 0.229 & 0.136 & 0.228 & 0.136 & 0.228 & 0.137 & 0.231 & 0.162 & 0.258 & 0.134 & 0.229 \\
& 192 & \textbf{0.151} & \textbf{0.241} & 0.153 & 0.245 & 0.154 & 0.245 & 0.155 & 0.245 & 0.154 & 0.247 & 0.446 & 0.449 & 0.154 & 0.247 \\
& 336 & \textbf{0.166} & \textbf{0.258} & 0.169 & 0.262 & 0.171 & 0.262 & 0.172 & 0.263 & 0.171 & 0.264 & 0.912 & 0.675 & 0.170 & 0.265 \\
& 720 & \textbf{0.196} & \textbf{0.287} & 0.202 & 0.290 & 0.208 & 0.293 & 0.209 & 0.293 & 0.204 & 0.292 & 0.971 & 0.715 & 0.200 & 0.293 \\
\cmidrule(lr){2-16}
& Avg & \textbf{0.162} & \textbf{0.253} & 0.165 & 0.257 & 0.167 & 0.257 & 0.168 & 0.257 & 0.166 & 0.258 & 0.623 & 0.524 & 0.165 & 0.259 \\
\midrule
\multirow{5}{*}{{\rotatebox{90}{\scalebox{0.95}{Weather}}}}
& 96 & \textbf{0.154} & \textbf{0.192} & 0.158 & 0.201 & 0.159 & 0.201 & 0.158 & 0.199 & 0.223 & 0.268 & 0.161 & 0.202 & 0.156 & 0.200 \\
& 192 & \textbf{0.203} & \textbf{0.238} & 0.207 & 0.245 & 0.209 & 0.246 & 0.209 & 0.246 & 0.269 & 0.304 & 0.212 & 0.247 & 0.205 & 0.245 \\
& 336 & \textbf{0.262} & \textbf{0.285} & 0.263 & 0.286 & 0.268 & 0.290 & 0.266 & 0.288 & 0.291 & 0.309 & 0.270 & 0.289 & \textbf{0.262} & 0.287 \\
& 720 & \textbf{0.342} & \textbf{0.339} & \textbf{0.342} & \textbf{0.339} & 0.344 & 0.341 & 0.344 & 0.341 & 0.346 & 0.343 & 0.378 & 0.365 & 0.343 & 0.342 \\
\cmidrule(lr){2-16}
& Avg & \textbf{0.241} & \textbf{0.264} & 0.242 & 0.268 & 0.245 & 0.269 & 0.244 & 0.268 & 0.282 & 0.306 & 0.255 & 0.276 & 0.242 & 0.268 \\

\hline
    \rowcolor{blue!8}
    \multicolumn{16}{l}{\textbf{Forecast model:PDF}}\\\hline
    
\multirow{5}{*}{{\rotatebox{90}{\scalebox{0.95}{ETTm1}}}}
& 96 & \textbf{0.310} & \textbf{0.342} & 0.320 & 0.358 & 0.326 & 0.361 & 0.325 & 0.362 & 1.051 & 0.663 & 0.323 & 0.362 & 0.326 & 0.363 \\
& 192 & \textbf{0.360} & \textbf{0.369} & 0.361 & 0.380 & 0.371 & 0.386 & 0.372 & 0.388 & 0.420 & 0.414 & 0.371 & 0.388 & 0.365 & 0.381 \\
& 336 & \textbf{0.390} & \textbf{0.392} & \textbf{0.390} & 0.401 & 0.401 & 0.409 & 0.399 & 0.409 & 0.421 & 0.415 & 0.408 & 0.413 & 0.397 & 0.402 \\
& 720 & \textbf{0.451} & \textbf{0.430} & \textbf{0.451} & 0.437 & 0.448 & 0.439 & 0.453 & 0.443 & 0.456 & 0.448 & 0.480 & 0.454 & 0.458 & 0.437 \\
\cmidrule(lr){2-16}
& Avg & \textbf{0.378} & \textbf{0.383} & 0.381 & 0.394 & 0.386 & 0.399 & 0.387 & 0.400 & 0.587 & 0.485 & 0.396 & 0.404 & 0.387 & 0.396 \\
\midrule
\multirow{5}{*}{{\rotatebox{90}{\scalebox{0.95}{ETTh1}}}}
& 96 & \textbf{0.370} & \textbf{0.390} & 0.375 & 0.391 & 0.380 & 0.403 & 0.373 & 0.393 & 0.632 & 0.533 & 0.383 & 0.405 & 0.388 & 0.400 \\
& 192 & \textbf{0.419} & \textbf{0.419} & 0.423 & \textbf{0.419} & 0.422 & 0.425 & 0.423 & 0.426 & 0.424 & 0.429 & 0.430 & 0.432 & 0.440 & 0.428 \\
& 336 & \textbf{0.460} & \textbf{0.439} & 0.461 & \textbf{0.439} & 0.463 & 0.441 & 0.477 & 0.446 & 0.456 & 0.450 & 0.462 & 0.453 & 0.483 & 0.449 \\
& 720 & 0.479 & 0.474 & 0.484 & \textbf{0.468} & 0.485 & 0.483 & \textbf{0.475} & 0.476 & 0.476 & 0.478 & 0.511 & 0.496 & 0.495 & 0.482 \\
\cmidrule(lr){2-16}
& Avg & \textbf{0.432} & \textbf{0.429} & 0.436 & \textbf{0.429} & 0.438 & 0.438 & 0.437 & 0.435 & 0.497 & 0.472 & 0.447 & 0.447 & 0.452 & 0.440 \\
\midrule
\multirow{5}{*}{{\rotatebox{90}{\scalebox{0.95}{ECL}}}}
& 96 & 0.165 & \textbf{0.246} & 0.171 & 0.257 & 0.173 & 0.253 & \textbf{0.163} & \textbf{0.246} & 0.194 & 0.278 & 0.164 & 0.250 & 0.175 & 0.259 \\
& 192 & 0.174 & \textbf{0.255} & 0.177 & 0.261 & 0.181 & 0.262 & 0.179 & 0.261 & \textbf{0.173} & 0.260 & 0.387 & 0.410 & 0.182 & 0.266 \\
& 336 & \textbf{0.187} & \textbf{0.269} & 0.192 & 0.277 & 0.196 & 0.282 & 0.196 & 0.278 & 0.189 & 0.276 & 0.966 & 0.698 & 0.197 & 0.282 \\
& 720 & 0.230 & \textbf{0.307} & 0.234 & 0.312 & 0.229 & \textbf{0.307} & 0.237 & 0.312 & \textbf{0.228} & 0.310 & 1.263 & 0.834 & 0.237 & 0.315 \\
\cmidrule(lr){2-16}
& Avg & \textbf{0.189} & \textbf{0.269} & 0.194 & 0.277 & 0.195 & 0.276 & 0.194 & 0.274 & 0.196 & 0.281 & 0.695 & 0.548 & 0.198 & 0.281 \\
\midrule
\multirow{5}{*}{{\rotatebox{90}{\scalebox{0.95}{Weather}}}}
& 96 & 0.174 & \textbf{0.211} & 0.176 & 0.218 & 0.178 & 0.219 & \textbf{0.173} & 0.216 & 0.202 & 0.242 & 0.178 & 0.219 & 0.181 & 0.221 \\
& 192 & \textbf{0.224} & \textbf{0.254} & 0.225 & 0.260 & 0.236 & 0.267 & 0.235 & 0.268 & 0.225 & 0.258 & 0.232 & 0.262 & 0.234 & 0.263 \\
& 336 & \textbf{0.279} & \textbf{0.294} & 0.280 & 0.299 & 0.284 & 0.304 & 0.274 & 0.295 & 0.280 & 0.302 & 0.281 & 0.296 & 0.285 & 0.300 \\
& 720 & 0.358 & \textbf{0.345} & 0.357 & 0.347 & 0.357 & 0.348 & 0.356 & 0.350 & \textbf{0.353} & 0.347 & 4.502 & 1.036 & 0.360 & 0.348 \\
\cmidrule(lr){2-16}
& Avg & \textbf{0.259} & \textbf{0.276} & \textbf{0.259} & 0.281 & 0.264 & 0.284 & 0.268 & 0.287 & 0.268 & 0.290 & 1.296 & 0.452 & 0.265 & 0.283 \\

    \bottomrule
  \end{tabular}
  \end{threeparttable}
\end{table*}

\begin{table}[ht]
\caption{Full results on the multi-step forecasting task. The length of history window is set to 96 for all baselines. \texttt{Avg} indicates the results averaged over forecasting lengths: T=96, 192, 336 and 720 for ETT, ECL, and Weather. Follow the settings of QDF~\citep{kmbdf}}\label{tab:multistep_app_full}
% \vskip 0.1in
\renewcommand{\arraystretch}{0.8}
\setlength{\tabcolsep}{2pt}
\scriptsize
\centering
\renewcommand{\multirowsetup}{\centering}
\resizebox{\textwidth}{!}{%
\begin{tabular}{c|c|cc|cc|cc|cc|cc|cc|cc|cc|cc|cc|cc|cc}
    \toprule
    \multicolumn{2}{l}{\multirow{2}{*}{\rotatebox{0}{\scaleb{Loss}}}} & 
    \multicolumn{2}{c}{\rotatebox{0}{\scaleb{\textbf{\time}}}} &
    \multicolumn{2}{c}{\rotatebox{0}{\scaleb{KMB-DF}}} &
    \multicolumn{2}{c}{\rotatebox{0}{\scaleb{DBLoss}}} &
    \multicolumn{2}{c}{\rotatebox{0}{\scaleb{QDF}}} &
    \multicolumn{2}{c}{\rotatebox{0}{\scaleb{DistDF}}} &
    \multicolumn{2}{c}{\rotatebox{0}{\scaleb{Time-o1}}} &
    \multicolumn{2}{c}{\rotatebox{0}{\scaleb{FreDF}}} &
    \multicolumn{2}{c}{\rotatebox{0}{\scaleb{Koopman}}} &
    \multicolumn{2}{c}{\rotatebox{0}{\scaleb{GDTW}}} &
    \multicolumn{2}{c}{\rotatebox{0}{\scaleb{Dilate}}} &
    \multicolumn{2}{c}{\rotatebox{0}{\scaleb{Soft-DTW}}} &
    \multicolumn{2}{c}{\rotatebox{0}{\scaleb{MSE}}} \\
    \multicolumn{2}{c}{} &
    \multicolumn{2}{c}{\scaleb{\textbf{(Ours)}}} & 
    \multicolumn{2}{c}{\scaleb{(2026)}} & 
    \multicolumn{2}{c}{\scaleb{(2025)}} & 
    \multicolumn{2}{c}{\scaleb{(2025)}} & 
    \multicolumn{2}{c}{\scaleb{(2025)}} & 
    \multicolumn{2}{c}{\scaleb{(2025)}} & 
    \multicolumn{2}{c}{\scaleb{(2025)}} & 
    \multicolumn{2}{c}{\scaleb{(2021)}} & 
    \multicolumn{2}{c}{\scaleb{(2021)}} &
    \multicolumn{2}{c}{\scaleb{(2019)}} &
    \multicolumn{2}{c}{\scaleb{(2017)}} &
    \multicolumn{2}{c}{\scaleb{(2002)}} \\
    \cmidrule(lr){3-4} \cmidrule(lr){5-6} \cmidrule(lr){7-8}\cmidrule(lr){9-10} \cmidrule(lr){11-12}\cmidrule(lr){13-14} \cmidrule(lr){15-16} \cmidrule(lr){17-18} \cmidrule(lr){19-20} \cmidrule(lr){21-22} \cmidrule(lr){23-24} \cmidrule(lr){25-26}
    \multicolumn{2}{l}{\rotatebox{0}{\scaleb{Metrics}}}  & \scalea{MSE} & \scalea{MAE}  & \scalea{MSE} & \scalea{MAE}  & \scalea{MSE} & \scalea{MAE}  & \scalea{MSE} & \scalea{MAE}  & \scalea{MSE} & \scalea{MAE}  & \scalea{MSE} & \scalea{MAE}  & \scalea{MSE} & \scalea{MAE}  & \scalea{MSE} & \scalea{MAE} & \scalea{MSE} & \scalea{MAE} & \scalea{MSE} & \scalea{MAE} & \scalea{MSE} & \scalea{MAE} & \scalea{MSE} & \scalea{MAE} \\
    \toprule
    
\multirow{5}{*}{{\rotatebox{90}{\scalebox{0.95}{ETTm1}}}}
& 96 & \scalea{\bst{0.309}} & \scalea{\bst{0.342}} & \scalea{0.315} & \scalea{0.352} & \scalea{\subbst{0.310}} & \scalea{\subbst{0.345}} & \scalea{0.318} & \scalea{0.359} & \scalea{0.318} & \scalea{0.355} & \scalea{0.321} & \scalea{0.353} & \scalea{0.320} & \scalea{0.354} & \scalea{0.326} & \scalea{0.363} & \scalea{0.324} & \scalea{0.362} & \scalea{0.323} & \scalea{0.361} & \scalea{0.322} & \scalea{0.359} & \scalea{0.321} & \scalea{0.358}  \\
& 192 & \scalea{\bst{0.350}} & \scalea{\subbst{0.369}} & \scalea{\subbst{0.351}} & \scalea{0.379} & \scalea{0.352} & \scalea{\bst{0.368}} & \scalea{0.355} & \scalea{0.379} & \scalea{0.356} & \scalea{0.380} & \scalea{0.358} & \scalea{0.380} & \scalea{0.356} & \scalea{0.379} & \scalea{0.359} & \scalea{0.383} & \scalea{0.368} & \scalea{0.392} & \scalea{0.363} & \scalea{0.385} & \scalea{0.366} & \scalea{0.385} & \scalea{0.357} & \scalea{0.382}  \\
& 336 & \scalea{0.385} & \scalea{\bst{0.392}} & \scalea{\bst{0.381}} & \scalea{\subbst{0.400}} & \scalea{0.391} & \scalea{\bst{0.392}} & \scalea{0.384} & \scalea{0.401} & \scalea{\subbst{0.383}} & \scalea{0.401} & \scalea{0.386} & \scalea{0.403} & \scalea{0.386} & \scalea{0.401} & \scalea{\subbst{0.383}} & \scalea{0.401} & \scalea{0.405} & \scalea{0.421} & \scalea{0.384} & \scalea{0.401} & \scalea{0.399} & \scalea{0.409} & \scalea{0.387} & \scalea{0.401}  \\
& 720 & \scalea{\bst{0.442}} & \scalea{\subbst{0.428}} & \scalea{\bst{0.442}} & \scalea{0.433} & \scalea{0.447} & \scalea{\bst{0.427}} & \scalea{\subbst{0.443}} & \scalea{0.435} & \scalea{0.444} & \scalea{0.434} & \scalea{0.447} & \scalea{0.436} & \scalea{0.444} & \scalea{0.434} & \scalea{0.446} & \scalea{0.436} & \scalea{0.496} & \scalea{0.473} & \scalea{0.446} & \scalea{0.437} & \scalea{0.490} & \scalea{0.461} & \scalea{0.446} & \scalea{0.435}  \\
\cmidrule(lr){2-26}
& Avg & \scalea{\bst{0.372}} & \scalea{\bst{0.383}} & \scalea{\bst{0.372}} & \scalea{\subbst{0.391}} & \scalea{\subbst{0.375}} & \scalea{\bst{0.383}} & \scalea{\subbst{0.375}} & \scalea{0.393} & \scalea{\subbst{0.375}} & \scalea{0.393} & \scalea{0.378} & \scalea{0.393} & \scalea{0.376} & \scalea{0.392} & \scalea{0.378} & \scalea{0.395} & \scalea{0.398} & \scalea{0.412} & \scalea{0.379} & \scalea{0.396} & \scalea{0.394} & \scalea{0.403} & \scalea{0.378} & \scalea{0.394}  \\
\midrule

\multirow{5}{*}{{\rotatebox{90}{\scalebox{0.95}{ETTm2}}}}
& 96 & \scalea{\bst{0.165}} & \scalea{\subbst{0.245}} & \scalea{\subbst{0.166}} & \scalea{0.249} & \scalea{\subbst{0.166}} & \scalea{\bst{0.244}} & \scalea{0.168} & \scalea{0.250} & \scalea{0.168} & \scalea{0.251} & \scalea{0.173} & \scalea{0.251} & \scalea{0.174} & \scalea{0.251} & \scalea{0.182} & \scalea{0.262} & \scalea{0.176} & \scalea{0.257} & \scalea{0.172} & \scalea{0.253} & \scalea{0.176} & \scalea{0.256} & \scalea{0.175} & \scalea{0.256}  \\
& 192 & \scalea{\subbst{0.230}} & \scalea{\bst{0.288}} & \scalea{\bst{0.229}} & \scalea{0.291} & \scalea{0.231} & \scalea{\bst{0.288}} & \scalea{0.232} & \scalea{0.292} & \scalea{0.232} & \scalea{0.293} & \scalea{0.233} & \scalea{0.291} & \scalea{0.233} & \scalea{\subbst{0.290}} & \scalea{0.235} & \scalea{0.294} & \scalea{0.243} & \scalea{0.305} & \scalea{0.236} & \scalea{0.296} & \scalea{0.243} & \scalea{0.299} & \scalea{0.234} & \scalea{0.295}  \\
& 336 & \scalea{\bst{0.287}} & \scalea{\bst{0.326}} & \scalea{0.289} & \scalea{0.330} & \scalea{0.291} & \scalea{\subbst{0.327}} & \scalea{\subbst{0.288}} & \scalea{0.329} & \scalea{0.289} & \scalea{0.330} & \scalea{0.290} & \scalea{0.329} & \scalea{0.290} & \scalea{\subbst{0.327}} & \scalea{0.291} & \scalea{0.331} & \scalea{0.303} & \scalea{0.344} & \scalea{0.290} & \scalea{0.331} & \scalea{0.306} & \scalea{0.342} & \scalea{0.289} & \scalea{0.330}  \\
& 720 & \scalea{\bst{0.383}} & \scalea{\subbst{0.385}} & \scalea{\subbst{0.384}} & \scalea{0.388} & \scalea{0.386} & \scalea{\bst{0.384}} & \scalea{0.386} & \scalea{0.387} & \scalea{0.385} & \scalea{0.387} & \scalea{0.386} & \scalea{\subbst{0.385}} & \scalea{0.386} & \scalea{0.386} & \scalea{0.388} & \scalea{0.390} & \scalea{0.434} & \scalea{0.419} & \scalea{0.393} & \scalea{0.390} & \scalea{0.438} & \scalea{0.422} & \scalea{0.385} & \scalea{0.389}  \\
\cmidrule(lr){2-26}
& Avg & \scalea{\bst{0.266}} & \scalea{\subbst{0.311}} & \scalea{\subbst{0.267}} & \scalea{0.315} & \scalea{0.268} & \scalea{\bst{0.310}} & \scalea{0.268} & \scalea{0.315} & \scalea{0.269} & \scalea{0.315} & \scalea{0.271} & \scalea{0.314} & \scalea{0.271} & \scalea{0.313} & \scalea{0.274} & \scalea{0.319} & \scalea{0.289} & \scalea{0.332} & \scalea{0.273} & \scalea{0.317} & \scalea{0.291} & \scalea{0.330} & \scalea{0.271} & \scalea{0.317}  \\
\midrule

\multirow{5}{*}{{\rotatebox{90}{\scalebox{0.95}{ETTh1}}}}
& 96 & \scalea{0.371} & \scalea{\bst{0.387}} & \scalea{0.372} & \scalea{0.389} & \scalea{\subbst{0.370}} & \scalea{\subbst{0.388}} & \scalea{0.373} & \scalea{0.392} & \scalea{0.373} & \scalea{0.391} & \scalea{\bst{0.369}} & \scalea{0.392} & \scalea{0.373} & \scalea{0.389} & \scalea{0.374} & \scalea{0.393} & \scalea{0.377} & \scalea{0.395} & \scalea{0.376} & \scalea{0.394} & \scalea{0.377} & \scalea{0.394} & \scalea{0.373} & \scalea{0.391}  \\
& 192 & \scalea{\subbst{0.424}} & \scalea{\bst{0.418}} & \scalea{\bst{0.423}} & \scalea{\subbst{0.420}} & \scalea{\subbst{0.424}} & \scalea{\bst{0.418}} & \scalea{0.426} & \scalea{0.421} & \scalea{0.426} & \scalea{\subbst{0.420}} & \scalea{\subbst{0.424}} & \scalea{0.424} & \scalea{\subbst{0.424}} & \scalea{\bst{0.418}} & \scalea{0.426} & \scalea{0.424} & \scalea{0.434} & \scalea{0.427} & \scalea{0.430} & \scalea{0.423} & \scalea{0.436} & \scalea{0.427} & \scalea{0.427} & \scalea{0.421}  \\
& 336 & \scalea{0.467} & \scalea{0.441} & \scalea{\bst{0.460}} & \scalea{\bst{0.438}} & \scalea{0.466} & \scalea{\subbst{0.439}} & \scalea{\subbst{0.464}} & \scalea{0.441} & \scalea{0.468} & \scalea{0.441} & \scalea{\subbst{0.464}} & \scalea{\subbst{0.439}} & \scalea{0.465} & \scalea{\subbst{0.439}} & \scalea{0.466} & \scalea{0.440} & \scalea{0.486} & \scalea{0.457} & \scalea{0.470} & \scalea{0.442} & \scalea{0.479} & \scalea{0.451} & \scalea{0.466} & \scalea{0.441}  \\
& 720 & \scalea{0.466} & \scalea{0.461} & \scalea{\bst{0.447}} & \scalea{\bst{0.455}} & \scalea{0.468} & \scalea{\subbst{0.458}} & \scalea{0.473} & \scalea{0.464} & \scalea{0.467} & \scalea{0.462} & \scalea{\subbst{0.462}} & \scalea{0.460} & \scalea{0.472} & \scalea{0.468} & \scalea{0.483} & \scalea{0.469} & \scalea{0.512} & \scalea{0.497} & \scalea{0.489} & \scalea{0.478} & \scalea{0.551} & \scalea{0.510} & \scalea{0.477} & \scalea{0.468}  \\
\cmidrule(lr){2-26}
& Avg & \scalea{0.432} & \scalea{\subbst{0.427}} & \scalea{\bst{0.426}} & \scalea{\bst{0.426}} & \scalea{0.432} & \scalea{\bst{0.426}} & \scalea{0.434} & \scalea{0.429} & \scalea{0.434} & \scalea{0.428} & \scalea{\subbst{0.430}} & \scalea{0.429} & \scalea{0.434} & \scalea{0.428} & \scalea{0.437} & \scalea{0.431} & \scalea{0.452} & \scalea{0.444} & \scalea{0.441} & \scalea{0.434} & \scalea{0.461} & \scalea{0.445} & \scalea{0.436} & \scalea{0.430}  \\
\midrule

\multirow{5}{*}{{\rotatebox{90}{\scalebox{0.95}{ETTh2}}}}
& 96 & \scalea{\bst{0.283}} & \scalea{\bst{0.333}} & \scalea{\subbst{0.285}} & \scalea{0.337} & \scalea{0.288} & \scalea{\bst{0.333}} & \scalea{0.288} & \scalea{0.337} & \scalea{0.287} & \scalea{0.337} & \scalea{0.286} & \scalea{\subbst{0.336}} & \scalea{0.289} & \scalea{0.337} & \scalea{0.287} & \scalea{0.338} & \scalea{0.289} & \scalea{0.340} & \scalea{0.287} & \scalea{0.338} & \scalea{0.292} & \scalea{0.342} & \scalea{0.287} & \scalea{0.337}  \\
& 192 & \scalea{\bst{0.361}} & \scalea{\bst{0.381}} & \scalea{\subbst{0.363}} & \scalea{0.387} & \scalea{0.365} & \scalea{\bst{0.381}} & \scalea{0.367} & \scalea{0.390} & \scalea{0.364} & \scalea{0.391} & \scalea{0.367} & \scalea{0.384} & \scalea{0.365} & \scalea{\subbst{0.382}} & \scalea{0.365} & \scalea{0.390} & \scalea{0.377} & \scalea{0.399} & \scalea{0.368} & \scalea{0.389} & \scalea{0.380} & \scalea{0.398} & \scalea{0.368} & \scalea{0.390}  \\
& 336 & \scalea{\bst{0.409}} & \scalea{\bst{0.420}} & \scalea{\subbst{0.411}} & \scalea{\subbst{0.425}} & \scalea{0.419} & \scalea{0.426} & \scalea{0.412} & \scalea{0.427} & \scalea{0.412} & \scalea{0.426} & \scalea{0.414} & \scalea{0.427} & \scalea{0.414} & \scalea{0.427} & \scalea{0.413} & \scalea{0.429} & \scalea{0.422} & \scalea{0.432} & \scalea{0.413} & \scalea{0.427} & \scalea{0.455} & \scalea{0.443} & \scalea{0.416} & \scalea{0.428}  \\
& 720 & \scalea{\bst{0.392}} & \scalea{\bst{0.418}} & \scalea{\subbst{0.396}} & \scalea{0.426} & \scalea{\bst{0.392}} & \scalea{\subbst{0.419}} & \scalea{0.401} & \scalea{0.428} & \scalea{0.398} & \scalea{0.427} & \scalea{0.402} & \scalea{0.428} & \scalea{0.406} & \scalea{0.430} & \scalea{0.405} & \scalea{0.430} & \scalea{0.457} & \scalea{0.463} & \scalea{0.415} & \scalea{0.437} & \scalea{0.445} & \scalea{0.453} & \scalea{0.415} & \scalea{0.436}  \\
\cmidrule(lr){2-26}
& Avg & \scalea{\bst{0.361}} & \scalea{\bst{0.388}} & \scalea{\subbst{0.364}} & \scalea{0.394} & \scalea{0.366} & \scalea{\subbst{0.390}} & \scalea{0.367} & \scalea{0.396} & \scalea{0.365} & \scalea{0.395} & \scalea{0.367} & \scalea{0.394} & \scalea{0.368} & \scalea{0.394} & \scalea{0.368} & \scalea{0.397} & \scalea{0.386} & \scalea{0.409} & \scalea{0.371} & \scalea{0.398} & \scalea{0.393} & \scalea{0.409} & \scalea{0.372} & \scalea{0.398}  \\
\midrule

\multirow{5}{*}{{\rotatebox{90}{\scalebox{0.95}{ECL}}}}
& 96 & \scalea{\bst{0.135}} & \scalea{\bst{0.227}} & \scalea{\subbst{0.136}} & \scalea{0.231} & \scalea{\bst{0.135}} & \scalea{\subbst{0.228}} & \scalea{0.137} & \scalea{0.232} & \scalea{0.137} & \scalea{0.232} & \scalea{0.141} & \scalea{0.235} & \scalea{0.139} & \scalea{0.232} & \scalea{0.138} & \scalea{0.232} & \scalea{0.882} & \scalea{0.777} & \scalea{0.137} & \scalea{0.232} & \scalea{\subbst{0.136}} & \scalea{0.232} & \scalea{0.137} & \scalea{0.232}  \\
& 192 & \scalea{\bst{0.152}} & \scalea{\bst{0.242}} & \scalea{\bst{0.152}} & \scalea{0.245} & \scalea{\bst{0.152}} & \scalea{\subbst{0.243}} & \scalea{\subbst{0.153}} & \scalea{0.246} & \scalea{\bst{0.152}} & \scalea{0.246} & \scalea{0.154} & \scalea{0.246} & \scalea{0.156} & \scalea{0.246} & \scalea{\subbst{0.153}} & \scalea{0.247} & \scalea{0.853} & \scalea{0.762} & \scalea{0.154} & \scalea{0.247} & \scalea{0.154} & \scalea{0.247} & \scalea{\subbst{0.153}} & \scalea{0.247}  \\
& 336 & \scalea{\subbst{0.166}} & \scalea{\bst{0.259}} & \scalea{0.168} & \scalea{0.265} & \scalea{\bst{0.165}} & \scalea{\subbst{0.260}} & \scalea{0.168} & \scalea{0.264} & \scalea{0.168} & \scalea{0.265} & \scalea{0.168} & \scalea{0.263} & \scalea{0.169} & \scalea{0.263} & \scalea{0.169} & \scalea{0.267} & \scalea{0.809} & \scalea{0.711} & \scalea{0.169} & \scalea{0.267} & \scalea{0.169} & \scalea{0.267} & \scalea{0.168} & \scalea{0.266}  \\
& 720 & \scalea{\subbst{0.195}} & \scalea{\bst{0.285}} & \scalea{0.198} & \scalea{0.293} & \scalea{\bst{0.193}} & \scalea{\bst{0.285}} & \scalea{0.199} & \scalea{0.293} & \scalea{0.201} & \scalea{0.294} & \scalea{0.198} & \scalea{0.291} & \scalea{0.199} & \scalea{\subbst{0.290}} & \scalea{0.201} & \scalea{0.296} & \scalea{0.900} & \scalea{0.769} & \scalea{0.202} & \scalea{0.297} & \scalea{0.203} & \scalea{0.298} & \scalea{0.199} & \scalea{0.294}  \\
\cmidrule(lr){2-26}
& Avg & \scalea{\bst{0.162}} & \scalea{\bst{0.253}} & \scalea{\subbst{0.163}} & \scalea{0.258} & \scalea{\bst{0.162}} & \scalea{\subbst{0.254}} & \scalea{0.164} & \scalea{0.259} & \scalea{0.164} & \scalea{0.259} & \scalea{0.165} & \scalea{0.259} & \scalea{0.166} & \scalea{0.258} & \scalea{0.165} & \scalea{0.260} & \scalea{0.861} & \scalea{0.755} & \scalea{0.165} & \scalea{0.260} & \scalea{0.165} & \scalea{0.261} & \scalea{0.165} & \scalea{0.260}  \\
\midrule

\multirow{5}{*}{{\rotatebox{90}{\scalebox{0.95}{Weather}}}}
& 96 & \scalea{\bst{0.152}} & \scalea{\bst{0.192}} & \scalea{\bst{0.152}} & \scalea{0.196} & \scalea{0.154} & \scalea{\subbst{0.193}} & \scalea{0.155} & \scalea{0.200} & \scalea{0.155} & \scalea{0.200} & \scalea{\subbst{0.153}} & \scalea{0.197} & \scalea{0.154} & \scalea{0.197} & \scalea{0.173} & \scalea{0.215} & \scalea{0.158} & \scalea{0.206} & \scalea{0.173} & \scalea{0.215} & \scalea{0.177} & \scalea{0.220} & \scalea{0.156} & \scalea{0.201}  \\
& 192 & \scalea{\bst{0.203}} & \scalea{\bst{0.238}} & \scalea{\subbst{0.204}} & \scalea{0.242} & \scalea{0.205} & \scalea{\subbst{0.239}} & \scalea{0.205} & \scalea{0.244} & \scalea{0.205} & \scalea{0.245} & \scalea{0.205} & \scalea{0.243} & \scalea{\subbst{0.204}} & \scalea{0.242} & \scalea{0.205} & \scalea{0.245} & \scalea{0.209} & \scalea{0.250} & \scalea{0.205} & \scalea{0.246} & \scalea{0.209} & \scalea{0.245} & \scalea{0.205} & \scalea{0.245}  \\
& 336 & \scalea{\bst{0.260}} & \scalea{\bst{0.280}} & \scalea{\bst{0.260}} & \scalea{\subbst{0.284}} & \scalea{\subbst{0.261}} & \scalea{\bst{0.280}} & \scalea{\subbst{0.261}} & \scalea{0.285} & \scalea{\subbst{0.261}} & \scalea{0.286} & \scalea{\subbst{0.261}} & \scalea{\subbst{0.284}} & \scalea{\subbst{0.261}} & \scalea{\subbst{0.284}} & \scalea{0.262} & \scalea{0.286} & \scalea{0.270} & \scalea{0.296} & \scalea{0.262} & \scalea{0.287} & \scalea{0.278} & \scalea{0.296} & \scalea{\subbst{0.261}} & \scalea{0.286}  \\
& 720 & \scalea{\subbst{0.340}} & \scalea{\subbst{0.336}} & \scalea{\bst{0.339}} & \scalea{0.337} & \scalea{\bst{0.339}} & \scalea{\bst{0.333}} & \scalea{0.343} & \scalea{0.338} & \scalea{0.343} & \scalea{0.341} & \scalea{0.343} & \scalea{0.340} & \scalea{0.341} & \scalea{0.338} & \scalea{0.345} & \scalea{0.342} & \scalea{0.354} & \scalea{0.352} & \scalea{0.345} & \scalea{0.342} & \scalea{0.385} & \scalea{0.363} & \scalea{0.343} & \scalea{0.339}  \\
\cmidrule(lr){2-26}
& Avg & \scalea{\bst{0.239}} & \scalea{\subbst{0.262}} & \scalea{\bst{0.239}} & \scalea{0.265} & \scalea{\subbst{0.240}} & \scalea{\bst{0.261}} & \scalea{0.241} & \scalea{0.267} & \scalea{0.241} & \scalea{0.268} & \scalea{\subbst{0.240}} & \scalea{0.266} & \scalea{\subbst{0.240}} & \scalea{0.265} & \scalea{0.246} & \scalea{0.272} & \scalea{0.248} & \scalea{0.276} & \scalea{0.246} & \scalea{0.273} & \scalea{0.262} & \scalea{0.281} & \scalea{0.241} & \scalea{0.267}  \\
\midrule

\multicolumn{2}{c|}{\scalea{{$1^{\text{st}}$ Count}}} & \scalea{\bst{20}} & \scalea{\bst{20}} & \scalea{\subbst{13}} & \scalea{3} & \scalea{7} & \scalea{\subbst{16}} & \scalea{0} & \scalea{0} & \scalea{1} & \scalea{0} & \scalea{1} & \scalea{0} & \scalea{0} & \scalea{1} & \scalea{0} & \scalea{0} & \scalea{0} & \scalea{0} & \scalea{0} & \scalea{0} & \scalea{0} & \scalea{0} & \scalea{0} & \scalea{0} \\
    \bottomrule
\end{tabular}%
}
\end{table}

\subsection{Choice of the edge-wise norm: \texorpdfstring{$\ell_1$}{L1} versus \texorpdfstring{$\ell_2$}{L2}}
\label{app:norm_choice}

Proposition~\ref{prop:precision} shows that MSE plus a \emph{squared} graph total-variation penalty is the negative log-likelihood of a Gaussian with a structured precision matrix, whereas the objective we deploy in Eq.~\eqref{eq:cvl} uses the $\ell_1$ norm on the same edge differences. The theory therefore backs the $\ell_2$ form, and the $\ell_1$ form is an empirical preference. That asymmetry could hide a substantive weakness. If $\ell_2$ were as good, the theory-backed variant would be available at no cost; if $\ell_2$ were far worse, the mechanism would not be what the theory describes. We therefore test the two norms directly under an identical protocol, varying nothing but the norm.

\begin{table}[h]
  \caption{Effect of the edge-wise norm. All settings are identical apart from the penalty: \emph{Base} is the unmodified backbone, $+\ell_2$ replaces $\|\cdot\|_1$ in Eq.~\eqref{eq:cvl} by the squared $\ell_2$ norm of Proposition~\ref{prop:precision}, and $+\ell_1$ is \time\ as deployed. Values are averaged over $T\in\{96,192,336,720\}$; the best entry in each row and metric is in \best{bold}.}
  \label{tab:norm_choice}
  \centering
  \setlength{\tabcolsep}{6pt}
  \small
  \begin{threeparttable}
  \begin{tabular}{llccc|ccc}
    \toprule
    \multirow{2}{*}{Dataset} & \multirow{2}{*}{Backbone} & \multicolumn{3}{c|}{MSE} & \multicolumn{3}{c}{MAE} \\
    \cmidrule(lr){3-5} \cmidrule(lr){6-8}
    & & Base & $+\ell_2$ & $+\ell_1$ & Base & $+\ell_2$ & $+\ell_1$ \\
    \midrule
    ECL     & TimeFilter & 0.158 & 0.157 & \best{0.156} & 0.256 & 0.254 & \best{0.252} \\
    ECL     & TQNet      & 0.165 & 0.164 & \best{0.162} & 0.259 & 0.257 & \best{0.254} \\
    Weather & TimeFilter & 0.241 & 0.241 & \best{0.238} & 0.271 & 0.270 & \best{0.262} \\
    Weather & TQNet      & 0.242 & 0.244 & \best{0.241} & 0.269 & 0.269 & \best{0.264} \\
    \bottomrule
  \end{tabular}
  \end{threeparttable}
\end{table}

Table~\ref{tab:norm_choice} reports the outcome. Relative to the unregularised base, the $\ell_2$ variant improves five of the eight averaged cells, matches the base in two, and slightly degrades Weather/TQNet MSE. Relation-aware coupling is therefore beneficial in its own right, which is the prediction Proposition~\ref{prop:precision} actually makes, and the benefit is not an artefact of the particular norm. The $\ell_1$ variant is nonetheless better than $\ell_2$ in all eight cells. We read this as consistent with the motivation given in Section~\ref{cvl}: a squared edge penalty is dominated by the few pairs with the largest structural mismatch, which in these series are typically localized anomalies rather than the systematic co-evolution the term is meant to enforce. The practical conclusion is that the theory identifies the right family and the norm is chosen within it on evidence, not that $\ell_1$ is derived.

\subsection{Generalization studies}\label{sec:generalize_app}
Additional experimental results of varying forecast models are available in Table~\ref{tab2}, Table~\ref{tab:loss-compare} and Fig ~\ref{fig:backbone_app}.

\subsection{Hyperparameter Sensitivity}
We evaluate the stability of \texttt{CvLoss} regarding the loss weight $\alpha$ and patch length $L$. Loss Weight $\alpha$: Tables \ref{tab:sensi-iTransformer} and \ref{tab:sensi-TimeBridge} show accuracy consistently improving as $\alpha$ increases. Optimal performance typically occurs at $\alpha \approx 0.5$ or $1$, where structural regularization complements point-wise accuracy. Patch Length $L$: Figure \ref{fig:sensi} demonstrates that forecasting performance remains stable across various $T/L$ ratios. The horizontal curves confirm that \texttt{CvLoss} is highly robust to patch size, requiring minimal tuning to achieve significant predictive gains.

\begin{figure}[]
\begin{center}
\includegraphics[width=0.245\linewidth]{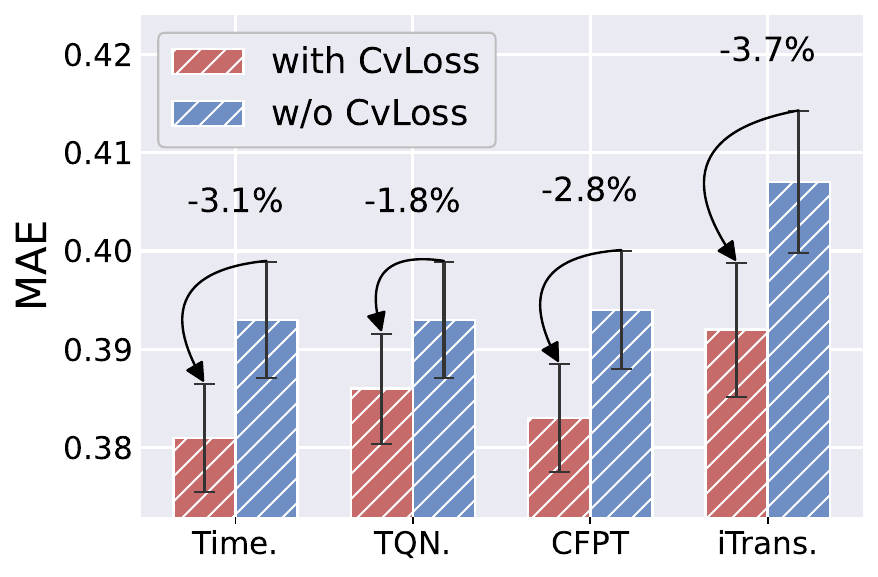}
\includegraphics[width=0.245\linewidth]{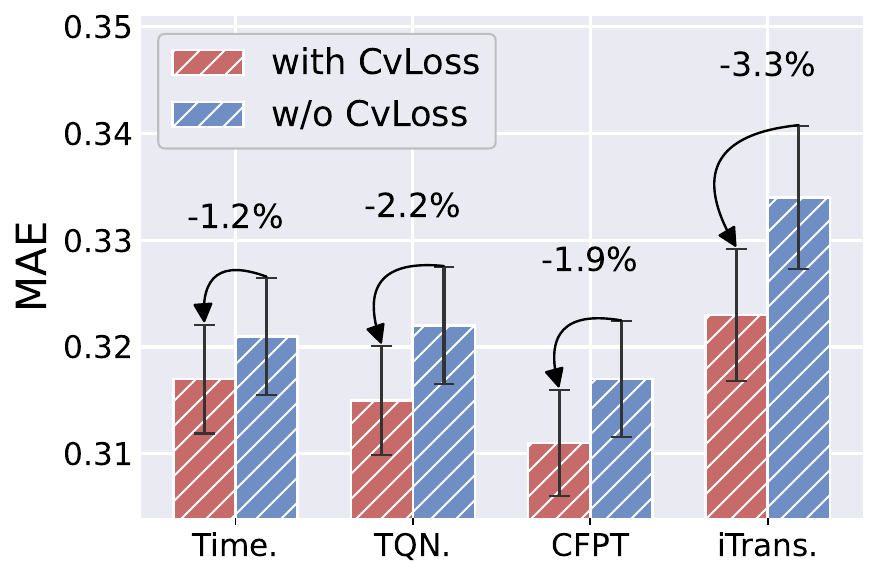}
\includegraphics[width=0.245\linewidth]{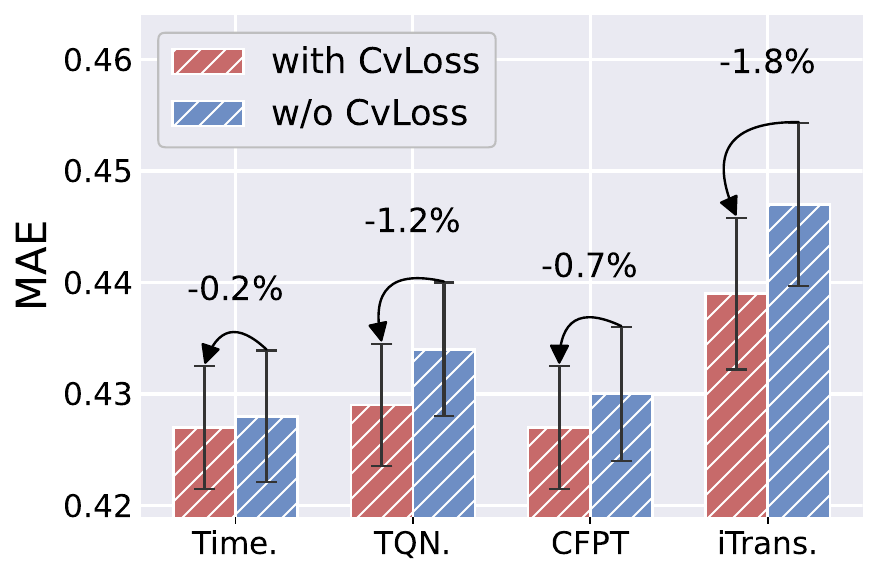}
\includegraphics[width=0.245\linewidth]{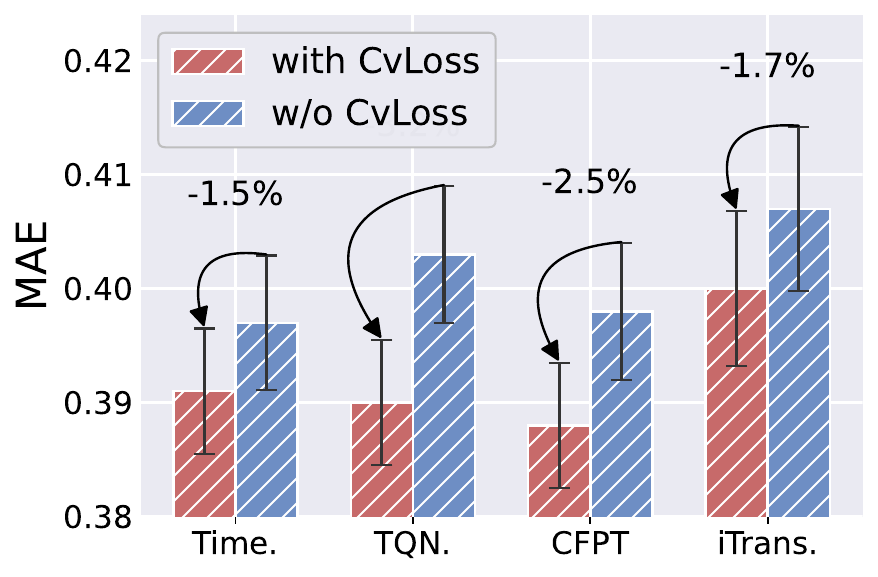}
\subfigure[ETTm1]{\includegraphics[width=0.245\linewidth]{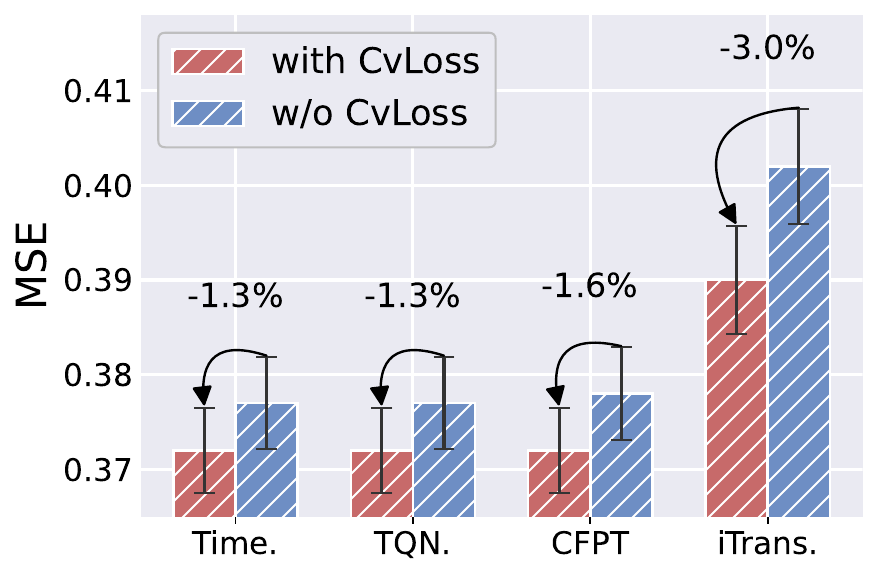}}
\subfigure[ETTm2]{\includegraphics[width=0.245\linewidth]{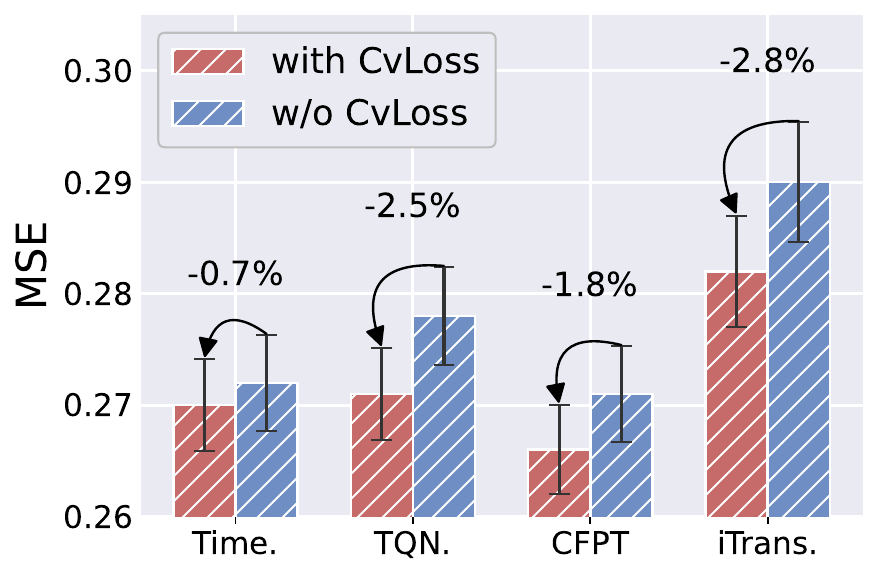}}
\subfigure[ETTh1]{\includegraphics[width=0.245\linewidth]{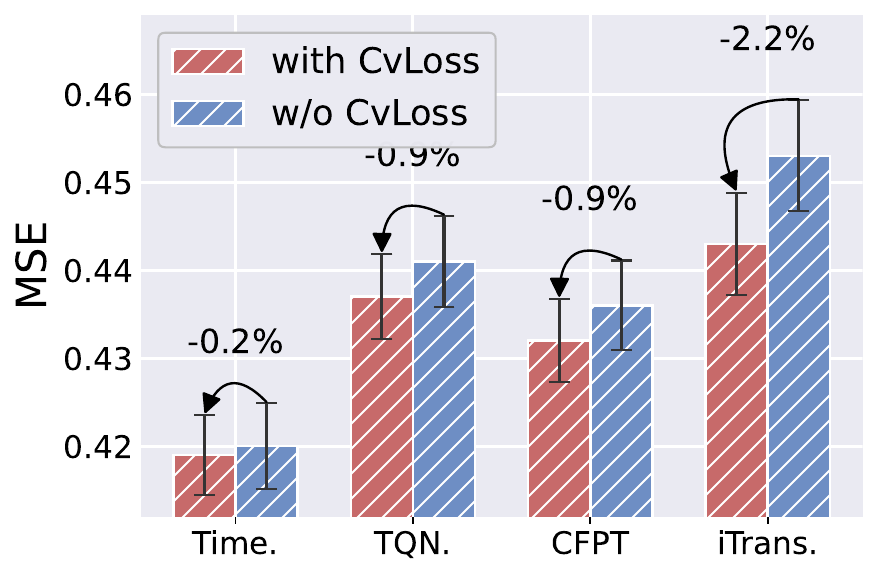}}
\subfigure[ETTh2]{\includegraphics[width=0.245\linewidth]{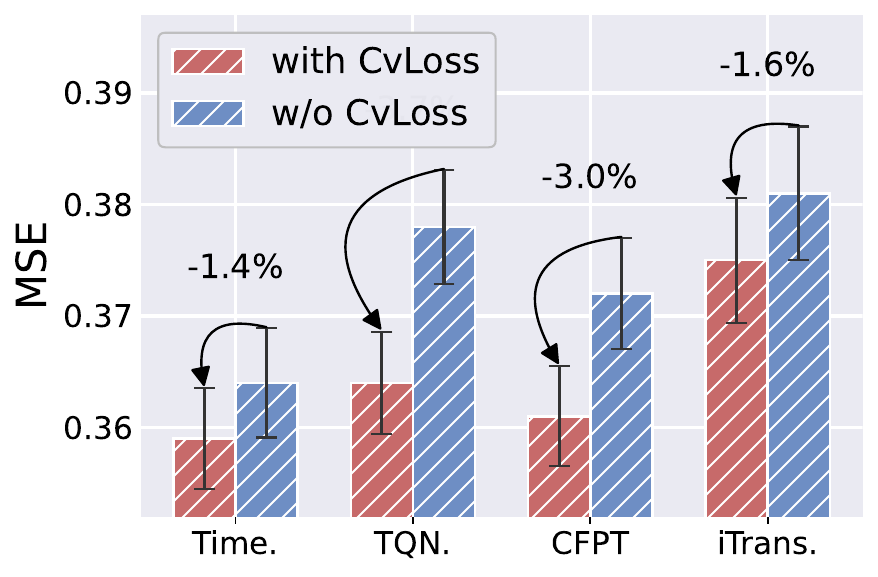}}
\caption{Performance of different forecast models with and without \time\. The forecast errors are averaged over forecast lengths and the error bars represent 50\% confidence intervals.}
\label{fig:backbone_app}
\end{center}
\end{figure}

\subsection{Case study with TQNet and TimeFilter of varying historical lengths}
\begin{table}
\centering
\caption{Varying input sequence length results on the Weather dataset.}\label{tab:vary_seq_len}
\renewcommand{\arraystretch}{1} \setlength{\tabcolsep}{10pt} \scriptsize
\centering
\renewcommand{\multirowsetup}{\centering}
\begin{tabular}{c|c|c|cc|cc|cc|cc}
    \toprule
    \multicolumn{3}{c|}{\rotatebox{0}{Models}} & \multicolumn{2}{c}{CvLoss} & \multicolumn{2}{c|}{TimeFilter} & \multicolumn{2}{c}{CvLoss} & \multicolumn{2}{c}{TQNet} \\
    \cmidrule(lr){4-5} \cmidrule(lr){6-7} \cmidrule(lr){8-9} \cmidrule(lr){10-11}
    \multicolumn{3}{c|}{\rotatebox{0}{Metrics}} & MSE & MAE & MSE & MAE & MSE & MAE & MSE & MAE \\
    \midrule
    \multirow{20}{*}{\rotatebox{90}{Historical sequence length}} 

& \multirow{5}{*}{96}
& 96 & 0.149 & 0.188 & 0.153 & 0.199 & 0.154 & 0.191 & 0.156 & 0.200 \\
&& 192 & 0.198 & 0.236 & 0.202 & 0.246 & 0.205 & 0.240 & 0.205 & 0.245 \\
&& 336 & 0.257 & 0.283 & 0.259 & 0.289 & 0.263 & 0.286 & 0.262 & 0.287 \\
&& 720 & 0.340 & 0.334 & 0.345 & 0.344 & 0.343 & 0.339 & 0.343 & 0.342 \\
\cmidrule(lr){3-11}
&& Avg & 0.236 & 0.260 & 0.240 & 0.269 & 0.241 & 0.264 & 0.242 & 0.268 \\
\cmidrule(lr){2-11}

& \multirow{5}{*}{192}
& 96 & 0.142 & 0.180 & 0.146 & 0.195 & 0.154 & 0.194 & 0.153 & 0.200 \\
&& 192 & 0.191 & 0.230 & 0.194 & 0.243 & 0.198 & 0.238 & 0.200 & 0.244 \\
&& 336 & 0.250 & 0.278 & 0.256 & 0.289 & 0.254 & 0.283 & 0.257 & 0.287 \\
&& 720 & 0.333 & 0.331 & 0.334 & 0.341 & 0.327 & 0.333 & 0.326 & 0.335 \\
\cmidrule(lr){3-11}
&& Avg & 0.229 & 0.255 & 0.232 & 0.267 & 0.233 & 0.262 & 0.234 & 0.266 \\
\cmidrule(lr){2-11}

& \multirow{5}{*}{336}
& 96 & 0.153 & 0.192 & 0.153 & 0.202 & 0.148 & 0.191 & 0.154 & 0.205 \\
&& 192 & 0.209 & 0.241 & 0.215 & 0.255 & 0.199 & 0.242 & 0.199 & 0.246 \\
&& 336 & 0.269 & 0.287 & 0.288 & 0.302 & 0.247 & 0.281 & 0.249 & 0.285 \\
&& 720 & 0.363 & 0.347 & 0.372 & 0.360 & 0.319 & 0.330 & 0.325 & 0.337 \\
\cmidrule(lr){3-11}
&& Avg & 0.248 & 0.267 & 0.257 & 0.280 & 0.228 & 0.261 & 0.232 & 0.268 \\
\cmidrule(lr){2-11}

& \multirow{5}{*}{720}
& 96 & 0.151 & 0.195 & 0.159 & 0.213 & 0.151 & 0.198 & 0.160 & 0.214 \\
&& 192 & 0.216 & 0.254 & 0.219 & 0.266 & 0.203 & 0.245 & 0.210 & 0.258 \\
&& 336 & 0.278 & 0.301 & 0.290 & 0.312 & 0.262 & 0.294 & 0.260 & 0.299 \\
&& 720 & 0.357 & 0.349 & 0.392 & 0.377 & 0.318 & 0.332 & 0.328 & 0.343 \\
\cmidrule(lr){3-11}
&& Avg & 0.250 & 0.275 & 0.265 & 0.292 & 0.233 & 0.267 & 0.240 & 0.278 \\
    \bottomrule
\end{tabular}
\end{table}

Additional experimental results of varying historical lengths are available in \autoref{tab:vary_seq_len}, complementing the fixed length of 96 used in the main text. The forecast models selected include TimeFilter~\citep{hu2025timefilter}, which is the recent state-of-the-art forecast model, and TQNet~\citep{lin2025TQNet}. The results demonstrate that \time\ consistently improves both forecast models across different historical sequence lengths.

\subsection{Random Seed Sensitivity}

Additional experimental results of random seed sensitivity are available in Table \ref{tab:cvloss_ablation_singlecol}, where we report the mean and standard deviation of results obtained from experiments conducted with seven different random seeds (2020--2026). The results indicate minimal sensitivity of the proposed method to random initialization, as all standard deviations remain below 0.004.

\begin{table}[t]
\centering
\caption{CvLoss ablation on ECL and Weather. Results are reported as mean$_{\pm\mathrm{std}}$ over seeds 2020--2026. Lower is better.}
\label{tab:cvloss_ablation_singlecol}
\scriptsize
\setlength{\tabcolsep}{3pt}
\renewcommand{\arraystretch}{1.05}
\resizebox{\columnwidth}{!}{%
\begin{tabular}{cc|cc|cc|cc|cc}
\toprule
\multirow{2}{*}{\textbf{Hor.}} & \multirow{2}{*}{\textbf{Met.}}
& \multicolumn{4}{c|}{\textbf{ECL}}
& \multicolumn{4}{c}{\textbf{Weather}} \\
\cmidrule(lr){3-6} \cmidrule(lr){7-10}
& & \textbf{TimeFilter} & \textbf{+CvLoss} & \textbf{TQNet} & \textbf{+CvLoss}
  & \textbf{TimeFilter} & \textbf{+CvLoss} & \textbf{TQNet} & \textbf{+CvLoss} \\
\midrule

\multirow{2}{*}{96}
& MSE
& $0.133_{\pm0.001}$ & $\mathbf{0.131}_{\pm0.000}$ & $0.134_{\pm0.000}$ & $\mathbf{0.133}_{\pm0.000}$
& $0.155_{\pm0.002}$ & $\mathbf{0.150}_{\pm0.001}$ & $0.157_{\pm0.001}$ & $\mathbf{0.156}_{\pm0.002}$ \\
& MAE
& $0.230_{\pm0.001}$ & $\mathbf{0.225}_{\pm0.000}$ & $0.230_{\pm0.000}$ & $\mathbf{0.225}_{\pm0.000}$
& $0.202_{\pm0.002}$ & $\mathbf{0.189}_{\pm0.001}$ & $0.200_{\pm0.001}$ & $\mathbf{0.194}_{\pm0.003}$ \\
\midrule

\multirow{2}{*}{192}
& MSE
& $0.154_{\pm0.002}$ & $\mathbf{0.152}_{\pm0.000}$ & $0.153_{\pm0.000}$ & $\mathbf{0.151}_{\pm0.000}$
& $0.204_{\pm0.002}$ & $\mathbf{0.201}_{\pm0.002}$ & $0.205_{\pm0.000}$ & $\mathbf{0.204}_{\pm0.001}$ \\
& MAE
& $0.248_{\pm0.001}$ & $\mathbf{0.243}_{\pm0.000}$ & $0.246_{\pm0.000}$ & $\mathbf{0.241}_{\pm0.000}$
& $0.248_{\pm0.002}$ & $\mathbf{0.239}_{\pm0.002}$ & $0.245_{\pm0.001}$ & $\mathbf{0.239}_{\pm0.000}$ \\
\midrule

\multirow{2}{*}{336}
& MSE
& $0.164_{\pm0.002}$ & $\mathbf{0.161}_{\pm0.001}$ & $0.169_{\pm0.000}$ & $\mathbf{0.166}_{\pm0.000}$
& $0.261_{\pm0.002}$ & $\mathbf{0.259}_{\pm0.002}$ & $0.263_{\pm0.001}$ & $\mathbf{0.262}_{\pm0.001}$ \\
& MAE
& $0.261_{\pm0.002}$ & $\mathbf{0.257}_{\pm0.000}$ & $0.264_{\pm0.000}$ & $\mathbf{0.257}_{\pm0.000}$
& $0.290_{\pm0.001}$ & $\mathbf{0.285}_{\pm0.002}$ & $0.288_{\pm0.001}$ & $\mathbf{0.285}_{\pm0.001}$ \\
\midrule

\multirow{2}{*}{720}
& MSE
& $0.184_{\pm0.003}$ & $\mathbf{0.181}_{\pm0.001}$ & $0.204_{\pm0.004}$ & $\mathbf{0.200}_{\pm0.003}$
& $0.345_{\pm0.001}$ & $\mathbf{0.341}_{\pm0.002}$ & $0.343_{\pm0.001}$ & $0.343_{\pm0.000}$ \\
& MAE
& $0.284_{\pm0.002}$ & $\mathbf{0.282}_{\pm0.002}$ & $0.297_{\pm0.003}$ & $\mathbf{0.292}_{\pm0.003}$
& $0.344_{\pm0.001}$ & $\mathbf{0.336}_{\pm0.001}$ & $0.342_{\pm0.000}$ & $\mathbf{0.339}_{\pm0.000}$ \\
\midrule

\multirow{2}{*}{Avg}
& MSE
& $0.158_{\pm0.002}$ & $\mathbf{0.156}_{\pm0.001}$ & $0.165_{\pm0.001}$ & $\mathbf{0.162}_{\pm0.001}$
& $0.241_{\pm0.001}$ & $\mathbf{0.238}_{\pm0.001}$ & $0.242_{\pm0.000}$ & $\mathbf{0.241}_{\pm0.001}$ \\
& MAE
& $0.256_{\pm0.001}$ & $\mathbf{0.252}_{\pm0.001}$ & $0.259_{\pm0.001}$ & $\mathbf{0.254}_{\pm0.001}$
& $0.271_{\pm0.001}$ & $\mathbf{0.262}_{\pm0.001}$ & $0.269_{\pm0.000}$ & $\mathbf{0.264}_{\pm0.001}$ \\
\bottomrule
\end{tabular}%
}
\end{table}

\subsection{Paired effect sizes}
\label{app:paired}

The standard deviations in Table~\ref{tab:cvloss_ablation_singlecol} are \emph{marginal}: each summarises the spread of one arm across seeds. Comparing two methods by asking whether their marginal intervals overlap is the wrong instrument here, and it is a conservative one, because the two arms are not independent samples. Both are trained with the same seed indices, the same data ordering and the same initialisation scheme, so a large part of the seed-to-seed variance is shared and cancels in the difference. This subsection therefore reports the paired quantity directly.

For each cell and each seed $s\in\{2020,\dots,2026\}$ we form the paired difference
\begin{equation}
d_s = \mathcal{M}_{\mathrm{base},s} - \mathcal{M}_{\time,s},
\end{equation}
where $\mathcal{M}$ is the reported error metric, so that $d_s>0$ means \time\ reduces the error on that seed. Because both arms use the same seed indices, the mean paired difference coincides with the difference of the two reported seven-seed means, $\bar d=\frac{1}{7}\sum_{s}d_s=\bar{\mathcal{M}}_{\mathrm{base}}-\bar{\mathcal{M}}_{\time}$. We summarise each cell by $\bar d$, by the relative reduction $\bar d/\bar{\mathcal{M}}_{\mathrm{base}}$, and by the paired $95\%$ interval $\bar d \pm t_{0.975,6}\,\mathrm{SD}(d_s)/\sqrt{7}$ with $t_{0.975,6}=2.447$, computed from the unrounded per-seed outputs.

Of the $40$ cells covered by Table~\ref{tab:cvloss_ablation_singlecol}, $27$ are already separated by their marginal intervals. Table~\ref{tab:paired_effects} reports the paired analysis for the remaining $13$, which are exactly the cases the marginal criterion leaves unresolved.

\begin{table}[h]
  \caption{Paired seven-seed effects for the $13$ cells whose marginal intervals overlap in Table~\ref{tab:cvloss_ablation_singlecol}. $\bar d$ is the mean paired reduction in the stated metric, \emph{Rel.} the corresponding relative reduction, and the interval is the paired $95\%$ confidence interval for $\bar d$. Positive values favour \time. No additional training was performed: these are the same seven completed runs re-analysed in paired form.}
  \label{tab:paired_effects}
  \centering
  \setlength{\tabcolsep}{6pt}
  \small
  \begin{threeparttable}
  \begin{tabular}{llccccc}
    \toprule
    Dataset & Backbone & $T$ & Metric & $\bar d$ & Rel. & $95\%$ CI \\
    \midrule
    Weather & TQNet      & 96   & MSE & 0.0011 & 0.70\% & [\,0.0002,\ 0.0020\,] \\
    ECL     & TimeFilter & 192  & MSE & 0.0018 & 1.17\% & [\,0.0003,\ 0.0033\,] \\
    Weather & TimeFilter & 192  & MSE & 0.0024 & 1.18\% & [\,0.0004,\ 0.0044\,] \\
    ECL     & TimeFilter & 336  & MSE & 0.0026 & 1.59\% & [\,0.0004,\ 0.0048\,] \\
    Weather & TimeFilter & 336  & MSE & 0.0019 & 0.73\% & [\,0.0001,\ 0.0037\,] \\
    Weather & TQNet      & 336  & MSE & 0.0008 & 0.30\% & [\,$-$0.0001,\ 0.0017\,] \\
    ECL     & TimeFilter & 720  & MSE & 0.0027 & 1.47\% & [\,0.0003,\ 0.0051\,] \\
    ECL     & TQNet      & 720  & MSE & 0.0041 & 2.01\% & [\,0.0005,\ 0.0077\,] \\
    Weather & TQNet      & 720  & MSE & 0.0004 & 0.12\% & [\,$-$0.0001,\ 0.0009\,] \\
    ECL     & TimeFilter & 720  & MAE & 0.0021 & 0.74\% & [\,0.0003,\ 0.0039\,] \\
    ECL     & TQNet      & 720  & MAE & 0.0048 & 1.62\% & [\,0.0004,\ 0.0092\,] \\
    ECL     & TimeFilter & Avg  & MSE & 0.0020 & 1.27\% & [\,0.0008,\ 0.0032\,] \\
    Weather & TQNet      & Avg  & MSE & 0.0009 & 0.37\% & [\,0.0001,\ 0.0017\,] \\
    \bottomrule
  \end{tabular}
  \end{threeparttable}
\end{table}

All $13$ paired means favour \time, and $11$ of the $13$ paired intervals exclude zero. The two that do not, Weather/TQNet at $T=336$ and $T=720$, both MSE, are positive but small, and we report them as inconclusive rather than as wins; they are the two settings in which the effect of \time\ is genuinely at the resolution of the experiment. Across all $40$ cells the median relative reduction is $1.66\%$ with an interquartile range of $1.13$ percentage points, which we regard as the honest summary of the size of the effect on these two datasets.

Two remarks on how these numbers should not be used. First, we do not report a pooled significance test over the full grid of metric--horizon--dataset cells. Such a test would treat the cells as independent observations, which they are not: MSE and MAE are computed from the same runs, and horizons within a dataset share the data, the split and the configuration. A rank-based test over the same cells would inherit the same dependence, so substituting one statistic for another would not repair the assumption. Second, none of the paper's claims requires it. The evidence is the direction and consistency of the controlled comparisons themselves, summarised in Table~\ref{tab:controlled}, together with the paired intervals above, and these are descriptive statements about the evaluated settings rather than inferences to a population of tasks.

\subsection{Complexity}\label{sec:comp_app}

We evaluate the computational characteristics of CvLoss from two perspectives: 
(i) the training and inference overhead introduced by the original formulation, and 
(ii) the scalability improvement brought by random edge sampling.

We first measure the runtime overhead of CvLoss on ETTh2 and ECL with the batch size fixed to 32. 
As shown in Fig.~\ref{fig:training_latency}, CvLoss introduces additional computation only during training, since it regularizes cross-variable dependence through an extra loss term. 
On the small-scale ETTh2 benchmark, this overhead is marginal in both the forward and backward passes. 
On the larger ECL benchmark, the overhead becomes more visible, especially in backpropagation, which is expected because modeling cross-variable interactions is more expensive when the number of variables is large. 
Importantly, although the prediction length $T$ varies from 32 to 1024, the runtime remains in a similar range rather than increasing sharply, indicating that the added cost is well controlled across different horizons.

At inference time, CvLoss is removed from the computation graph and therefore does not participate in prediction. 
As shown in Fig.~\ref{fig:inference_latency}, the inference latency of the model trained with CvLoss remains highly comparable to that of the baseline iTransformer on both datasets. 
The ratio plots further show that the latency fluctuates around parity, confirming that CvLoss introduces no practical deployment-time overhead.

While the above results demonstrate that CvLoss is inexpensive at inference and manageable during training, the original formulation still builds a fully connected cross-variable graph.
When the number of patches is $P$ and the variable dimension is $D$, the graph contains $P \times D$ nodes, and the number of pairwise interactions scales as $O(P^2D^2)$.
This quadratic growth becomes a scalability bottleneck on high-dimensional datasets such as ECL.
To address this issue, we further introduce \emph{random edge sampling}, which limits the number of evaluated interactions to at most 1000 per batch.
In practice, this truncates the cost of CvLoss from rapidly increasing quadratic complexity to an effectively constant level.
We emphasise that random edge sampling is a scalability device studied in this subsection alone: it produced none of the accuracy numbers reported elsewhere in the paper, all of which use the complete cross-variable graph without sampling or selection.

Fig.~\ref{fig:cvl_runtime} compares the training runtime before and after sampling. 
On ETTh2, where the variable dimension is small, the full CvLoss remains tractable, but the sampled version still makes the runtime flatter as the number of patches increases. 
On ECL, the benefit is much more pronounced: the runtime of the full CvLoss grows rapidly with the number of patches, whereas the sampled version remains almost unchanged. 
This confirms that random edge sampling effectively removes the main scalability bottleneck of the original formulation.

Fig.~\ref{fig:cvl_gain} further shows that random edge sampling substantially improves training efficiency without sacrificing predictive performance. As the number of patches increases, the speedup grows rapidly, while the relative MSE change remains close to zero.

Fig.~\ref{fig:cvl_runtime} summarizes this observation from an efficiency--accuracy perspective. The results show that large computational gains can be achieved with negligible forecasting degradation, suggesting that a small number of randomly sampled interactions is sufficient to preserve the regularization effect of CvLoss in practice.

\begin{figure}[t]
\begin{center}
\subfigure[ETTh2: forward (left) and backward (right).]{
\includegraphics[width=0.23\linewidth]{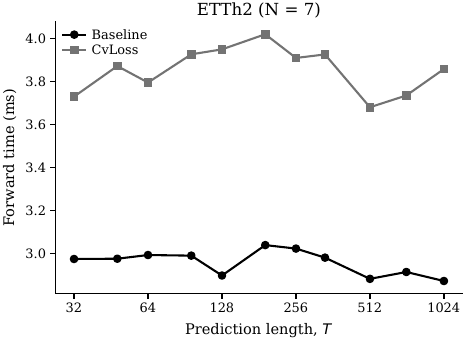}
\includegraphics[width=0.23\linewidth]{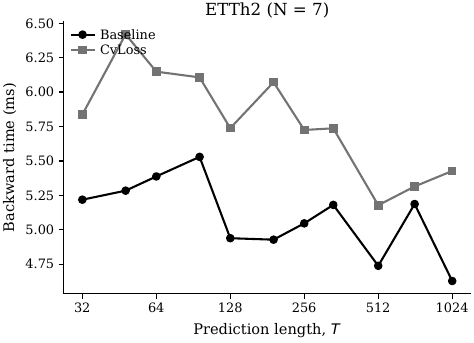}
}
\subfigure[ECL: forward (left) and backward (right).]{
\includegraphics[width=0.23\linewidth]{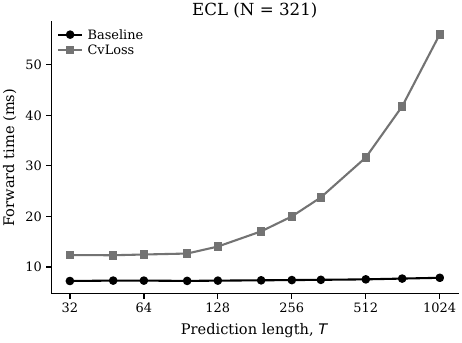}
\includegraphics[width=0.23\linewidth]{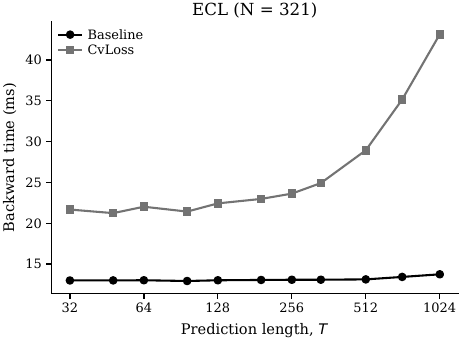}
}
\caption{Training latency on ETTh2 and ECL. CvLoss adds little overhead on ETTh2 and a larger, mainly backward-pass overhead on ECL.}
\label{fig:training_latency}
\end{center}
\end{figure}

\begin{figure}[t]
\begin{center}
\subfigure[ETTh2: inference latency (left) and latency ratio (right).]{
\includegraphics[width=0.23\linewidth]{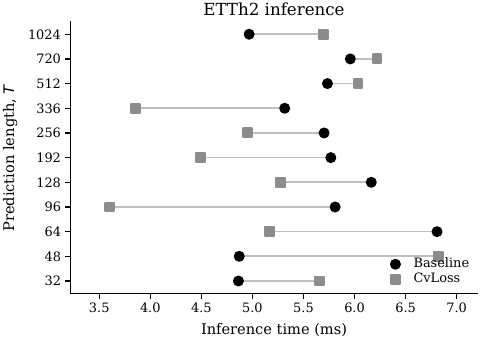}
\includegraphics[width=0.23\linewidth]{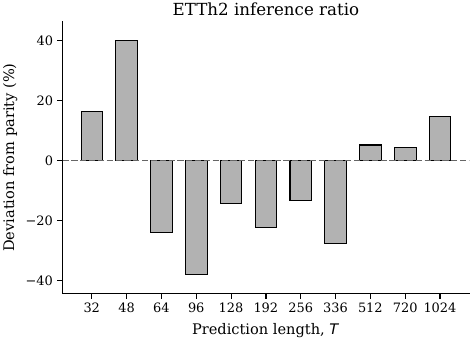}
}
\subfigure[ECL: inference latency (left) and latency ratio (right).]{
\includegraphics[width=0.23\linewidth]{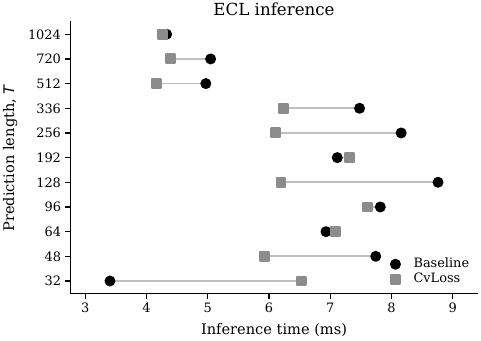}
\includegraphics[width=0.23\linewidth]{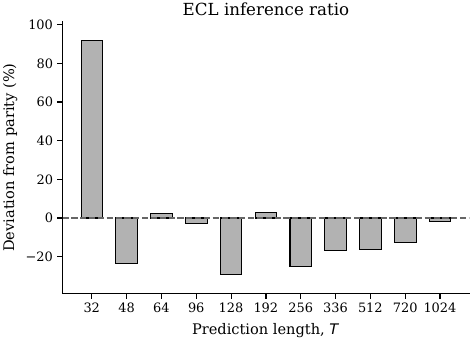}
}
\caption{Inference latency on ETTh2 and ECL. CvLoss remains comparable to the baseline, with ratios close to parity.}
\label{fig:inference_latency}
\end{center}
\end{figure}

\begin{figure}[t]
\begin{center}
\subfigure[ETTh2 runtime.]{
\includegraphics[width=0.4\linewidth]{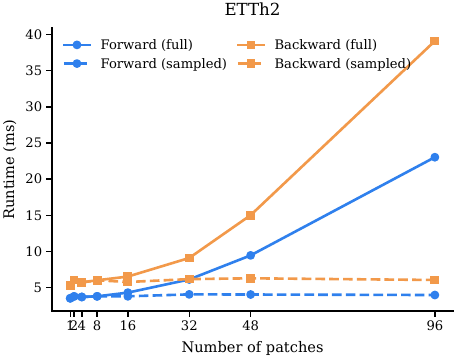}
}
\subfigure[ECL runtime.]{
\includegraphics[width=0.4\linewidth]{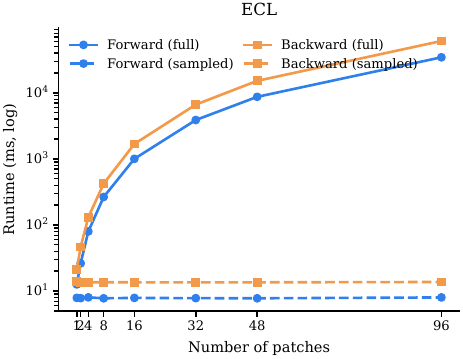}
}
\caption{Training runtime versus the number of patches. Dashed lines denote random edge sampling with at most 1000 edges.}
\label{fig:cvl_runtime}
\end{center}
\end{figure}

\begin{figure}[t]
\begin{center}
\subfigure[ECL speedup.]{
\includegraphics[width=0.31\linewidth]{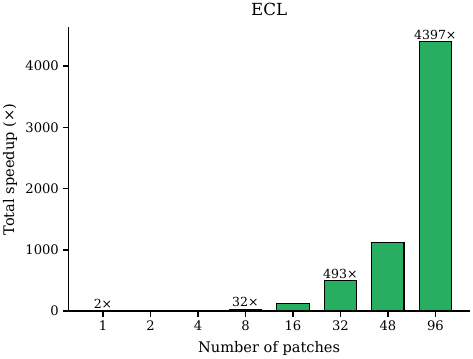}
}
\subfigure[MSE change.]{
\includegraphics[width=0.31\linewidth]{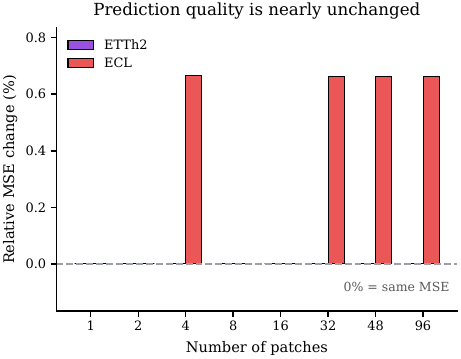}
}
\subfigure[Trade-off.]{
\includegraphics[width=0.31\linewidth]{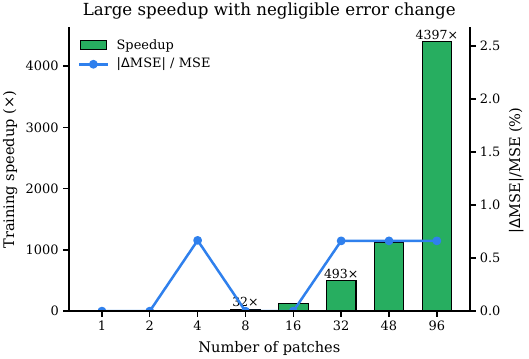}
}
\caption{Optimization benefit of random edge sampling. The speedup grows rapidly with the number of patches, while the relative MSE change remains close to zero. The trade-off plot summarizes that large speedups are achieved with negligible performance degradation.}
\label{fig:cvl_gain}
\end{center}
\end{figure}

\begin{table}[htbp]
\caption{Comprehensive long-term forecasting results grouped by backbone models in Benchmark TS-Library~\citep{Timesnet}.} \label{tab:comprehensive_all_variants}
\renewcommand{\arraystretch}{1.1}
\setlength{\tabcolsep}{2.2pt}
\scriptsize
\centering
\resizebox{\linewidth}{!}{
\begin{tabular}{lc|cc|cc|cc|cc|cc|cc|cc|cc|cc}
\toprule
\multicolumn{2}{c|}{\multirow{2}{*}{Models}} & \multicolumn{6}{c|}{iTransformer} & \multicolumn{6}{c|}{DLinear} & \multicolumn{6}{c}{PatchTST} \\
\cmidrule(lr){3-8} \cmidrule(lr){9-14} \cmidrule(lr){15-20}
\multicolumn{2}{c|}{} & \multicolumn{2}{c|}{Original} & \multicolumn{2}{c|}{DBLoss} & \multicolumn{2}{c|}{\cellcolor{tabhighlight}\time} & \multicolumn{2}{c|}{Original} & \multicolumn{2}{c|}{DBLoss} & \multicolumn{2}{c|}{\cellcolor{tabhighlight}\time} & \multicolumn{2}{c|}{Original} & \multicolumn{2}{c|}{DBLoss} & \multicolumn{2}{c}{\cellcolor{tabhighlight}\time} \\ 
\multicolumn{2}{c|}{} & \multicolumn{2}{c|}{(2024)} & \multicolumn{2}{c|}{(2025)} & \multicolumn{2}{c|}{\cellcolor{tabhighlight}(Ours)} & \multicolumn{2}{c|}{(2023)} & \multicolumn{2}{c|}{(2025)} & \multicolumn{2}{c|}{\cellcolor{tabhighlight}(Ours)} & \multicolumn{2}{c|}{(2023)} & \multicolumn{2}{c|}{(2025)} & \multicolumn{2}{c}{\cellcolor{tabhighlight}(Ours)} \\
\cmidrule(lr){3-4} \cmidrule(lr){5-6} \cmidrule(lr){7-8} \cmidrule(lr){9-10} \cmidrule(lr){11-12} \cmidrule(lr){13-14} \cmidrule(lr){15-16} \cmidrule(lr){17-18} \cmidrule(lr){19-20}
\multicolumn{2}{c|}{Metric} & MSE & MAE & MSE & MAE & \cellcolor{tabhighlight}MSE & \cellcolor{tabhighlight}MAE & MSE & MAE & MSE & MAE & \cellcolor{tabhighlight}MSE & \cellcolor{tabhighlight}MAE & MSE & MAE & MSE & MAE & \cellcolor{tabhighlight}MSE & \cellcolor{tabhighlight}MAE \\
\midrule
\multirow{5}{*}{\rotatebox[origin=c]{90}{ETTm1}} 
& 96  & 0.347 & 0.378 & \second{0.333} & \second{0.356} & \cellcolor{tabhighlight}\best{0.317} & \cellcolor{tabhighlight}\best{0.349} & 0.346 & 0.374 & \second{0.337} & \second{0.359} & \cellcolor{tabhighlight}\best{0.330} & \cellcolor{tabhighlight}\best{0.355} & \second{0.324} & \second{0.365} & 0.347 & 0.368 & \cellcolor{tabhighlight}\best{0.311} & \cellcolor{tabhighlight}\best{0.347} \\
& 192 & \second{0.375} & 0.389 & 0.380 & \second{0.381} & \cellcolor{tabhighlight}\best{0.369} & \cellcolor{tabhighlight}\best{0.379} & 0.382 & 0.391 & \second{0.378} & \second{0.380} & \cellcolor{tabhighlight}\best{0.374} & \cellcolor{tabhighlight}\best{0.378} & \second{0.372} & \second{0.392} & 0.397 & 0.394 & \cellcolor{tabhighlight}\best{0.364} & \cellcolor{tabhighlight}\best{0.377} \\
& 336 & \second{0.409} & 0.412 & 0.415 & \second{0.404} & \cellcolor{tabhighlight}\best{0.398} & \cellcolor{tabhighlight}\best{0.397} & 0.415 & 0.415 & \second{0.408} & \best{0.401} & \cellcolor{tabhighlight}\best{0.407} & \cellcolor{tabhighlight}\second{0.402} & \second{0.398} & \second{0.409} & 0.434 & 0.415 & \cellcolor{tabhighlight}\best{0.392} & \cellcolor{tabhighlight}\best{0.397} \\
& 720 & \second{0.477} & 0.450 & 0.481 & \best{0.441} & \cellcolor{tabhighlight}\best{0.474} & \cellcolor{tabhighlight}\second{0.442} & 0.473 & 0.451 & \best{0.467} & \best{0.437} & \cellcolor{tabhighlight}\second{0.468} & \cellcolor{tabhighlight}\second{0.440} & \second{0.457} & \second{0.444} & 0.484 & 0.446 & \cellcolor{tabhighlight}\best{0.455} & \cellcolor{tabhighlight}\best{0.438} \\
\cmidrule(lr){2-20}
\rowcolor{blue!15} \cellcolor{white} & \emph{Avg} & \second{0.402} & 0.407 & \second{0.402} & \second{0.395} & \best{0.390} & \best{0.392} & 0.404 & 0.408 & \second{0.397} & \second{0.394} & \best{0.395} & \best{0.394} & \second{0.388} & \second{0.402} & 0.415 & 0.406 & \cellcolor{tabhighlight}\best{0.381} & \cellcolor{tabhighlight}\best{0.390} \\
\midrule
\multirow{5}{*}{\rotatebox[origin=c]{90}{ETTm2}} 
& 96  & 0.185 & 0.271 & \second{0.181} & \second{0.259} & \cellcolor{tabhighlight}\best{0.176} & \cellcolor{tabhighlight}\best{0.256} & 0.193 & 0.293 & \second{0.183} & \best{0.262} & \cellcolor{tabhighlight}\best{0.180} & \cellcolor{tabhighlight}\second{0.262} & 0.186 & 0.268 & \second{0.180} & \second{0.257} & \cellcolor{tabhighlight}\best{0.178} & \cellcolor{tabhighlight}\best{0.255} \\
& 192 & 0.252 & 0.312 & \second{0.246} & \second{0.302} & \cellcolor{tabhighlight}\best{0.241} & \cellcolor{tabhighlight}\best{0.298} & 0.285 & 0.361 & \second{0.257} & \second{0.317} & \cellcolor{tabhighlight}\best{0.246} & \cellcolor{tabhighlight}\best{0.313} & \second{0.246} & 0.305 & 0.248 & \second{0.302} & \cellcolor{tabhighlight}\best{0.240} & \cellcolor{tabhighlight}\best{0.296} \\
& 336 & 0.317 & 0.353 & \second{0.307} & \second{0.340} & \cellcolor{tabhighlight}\best{0.302} & \cellcolor{tabhighlight}\best{0.337} & 0.385 & 0.429 & \second{0.308} & \second{0.352} & \cellcolor{tabhighlight}\best{0.305} & \cellcolor{tabhighlight}\best{0.346} & 0.311 & 0.348 & \second{0.308} & \second{0.343} & \cellcolor{tabhighlight}\best{0.300} & \cellcolor{tabhighlight}\best{0.337} \\
& 720 & \best{0.406} & 0.402 & 0.409 & \best{0.398} & \cellcolor{tabhighlight}\second{0.407} & \cellcolor{tabhighlight}\second{0.400} & 0.556 & 0.523 & \best{0.413} & \best{0.422} & \cellcolor{tabhighlight}\second{0.414} & \cellcolor{tabhighlight}\second{0.425} & 0.418 & 0.414 & \second{0.407} & \second{0.397} & \cellcolor{tabhighlight}\best{0.402} & \cellcolor{tabhighlight}\best{0.397} \\
\cmidrule(lr){2-20}
\rowcolor{blue!15} \cellcolor{white} & \emph{Avg} & 0.290 & 0.334 & \second{0.286} & \second{0.325} & \best{0.282} & \best{0.323} & 0.355 & 0.402 & \second{0.290} & \second{0.338} & \best{0.286} & \best{0.337} & 0.290 & 0.334 & \second{0.286} & \second{0.325} & \cellcolor{tabhighlight}\best{0.280} & \cellcolor{tabhighlight}\best{0.321} \\
\midrule
\multirow{5}{*}{\rotatebox[origin=c]{90}{ETTh1}} 
& 96  & \second{0.388} & 0.403 & 0.389 & \second{0.401} & \cellcolor{tabhighlight}\best{0.372} & \cellcolor{tabhighlight}\best{0.394} & 0.396 & 0.411 & \second{0.391} & \second{0.404} & \cellcolor{tabhighlight}\best{0.380} & \cellcolor{tabhighlight}\best{0.389} & \second{0.377} & \second{0.397} & 0.383 & 0.396 & \cellcolor{tabhighlight}\best{0.376} & \cellcolor{tabhighlight}\best{0.394} \\
& 192 & 0.440 & 0.434 & \second{0.439} & \second{0.431} & \cellcolor{tabhighlight}\best{0.433} & \cellcolor{tabhighlight}\best{0.427} & 0.445 & 0.440 & \second{0.440} & \second{0.433} & \cellcolor{tabhighlight}\best{0.433} & \cellcolor{tabhighlight}\best{0.425} & \second{0.426} & 0.432 & 0.429 & \best{0.426} & \cellcolor{tabhighlight}\best{0.424} & \cellcolor{tabhighlight}\second{0.427} \\
& 336 & 0.483 & 0.458 & \second{0.479} & \second{0.453} & \cellcolor{tabhighlight}\best{0.474} & \cellcolor{tabhighlight}\best{0.451} & 0.487 & 0.465 & \best{0.478} & \best{0.454} & \cellcolor{tabhighlight}\second{0.485} & \cellcolor{tabhighlight}\second{0.458} & 0.469 & \second{0.457} & \best{0.462} & \best{0.443} & \cellcolor{tabhighlight}\second{0.468} & \cellcolor{tabhighlight}\second{0.453} \\
& 720 & 0.500 & 0.491 & \best{0.473} & \best{0.471} & \cellcolor{tabhighlight}\second{0.491} & \cellcolor{tabhighlight}\second{0.484} & 0.513 & 0.510 & \best{0.489} & \second{0.490} & \cellcolor{tabhighlight}\second{0.512} & \cellcolor{tabhighlight}\best{0.509} & 0.519 & 0.504 & \best{0.452} & \best{0.459} & \cellcolor{tabhighlight}\second{0.517} & \cellcolor{tabhighlight}\second{0.499} \\
\cmidrule(lr){2-20}
\rowcolor{blue!15} \cellcolor{white} & \emph{Avg} & 0.453 & 0.447 & \second{0.445} & \best{0.439} & \best{0.443} & \best{0.439} & 0.460 & 0.457 & \best{0.449} & \best{0.445} & \second{0.453} & \best{0.445} & 0.448 & 0.448 & \best{0.431} & \best{0.431} & \cellcolor{tabhighlight}\second{0.446} & \cellcolor{tabhighlight}\second{0.443} \\
\midrule
\multirow{5}{*}{\rotatebox[origin=c]{90}{ETTh2}} 
& 96  & 0.297 & 0.348 & \second{0.292} & \best{0.341} & \cellcolor{tabhighlight}\best{0.289} & \cellcolor{tabhighlight}\best{0.341} & 0.341 & 0.395 & \second{0.300} & \second{0.351} & \cellcolor{tabhighlight}\best{0.293} & \cellcolor{tabhighlight}\best{0.346} & 0.309 & 0.359 & \best{0.282} & \best{0.333} & \cellcolor{tabhighlight}\second{0.287} & \cellcolor{tabhighlight}\second{0.339} \\
& 192 & 0.379 & 0.400 & \second{0.378} & \second{0.394} & \cellcolor{tabhighlight}\best{0.375} & \cellcolor{tabhighlight}\best{0.392} & 0.482 & 0.479 & \second{0.389} & \second{0.408} & \cellcolor{tabhighlight}\best{0.377} & \cellcolor{tabhighlight}\best{0.399} & 0.381 & 0.407 & \best{0.364} & \best{0.386} & \cellcolor{tabhighlight}\second{0.367} & \cellcolor{tabhighlight}\second{0.389} \\
& 336 & \second{0.422} & 0.433 & 0.424 & \second{0.430} & \cellcolor{tabhighlight}\best{0.415} & \cellcolor{tabhighlight}\best{0.428} & 0.593 & 0.542 & \second{0.447} & \second{0.453} & \cellcolor{tabhighlight}\best{0.434} & \cellcolor{tabhighlight}\best{0.444} & 0.412 & 0.429 & \best{0.404} & \second{0.419} & \cellcolor{tabhighlight}\second{0.404} & \cellcolor{tabhighlight}\best{0.418} \\
& 720 & \second{0.428} & 0.449 & 0.429 & \second{0.444} & \cellcolor{tabhighlight}\best{0.419} & \cellcolor{tabhighlight}\best{0.440} & 0.840 & 0.661 & \best{0.590} & \best{0.539} & \cellcolor{tabhighlight}\second{0.600} & \cellcolor{tabhighlight}\second{0.549} & 0.436 & 0.456 & \best{0.411} & \best{0.433} & \cellcolor{tabhighlight}\second{0.421} & \cellcolor{tabhighlight}\second{0.441} \\
\cmidrule(lr){2-20}
\rowcolor{blue!15} \cellcolor{white} & \emph{Avg} & 0.381 & 0.407 & \second{0.381} & \second{0.402} & \best{0.375} & \best{0.400} & 0.564 & 0.519 & \second{0.431} & \second{0.438} & \best{0.426} & \best{0.434} & 0.384 & 0.413 & \best{0.365} & \best{0.393} & \cellcolor{tabhighlight}\second{0.370} & \cellcolor{tabhighlight}\second{0.396} \\
\midrule
\multirow{5}{*}{\rotatebox[origin=c]{90}{Weather}} 
& 96  & \second{0.184} & \second{0.224} & 0.191 & \second{0.224} & \cellcolor{tabhighlight}\best{0.176} & \cellcolor{tabhighlight}\best{0.211} & \second{0.196} & 0.256 & 0.201 & \best{0.245} & \cellcolor{tabhighlight}\best{0.195} & \cellcolor{tabhighlight}\second{0.246} & \second{0.176} & 0.218 & 0.179 & \second{0.213} & \cellcolor{tabhighlight}\best{0.173} & \cellcolor{tabhighlight}\best{0.211} \\
& 192 & \second{0.231} & \second{0.262} & 0.242 & 0.268 & \cellcolor{tabhighlight}\best{0.223} & \cellcolor{tabhighlight}\best{0.253} & 0.239 & 0.299 & \second{0.236} & \best{0.278} & \cellcolor{tabhighlight}\best{0.235} & \cellcolor{tabhighlight}\second{0.288} & \second{0.221} & 0.256 & 0.224 & \second{0.253} & \cellcolor{tabhighlight}\best{0.219} & \cellcolor{tabhighlight}\best{0.251} \\
& 336 & \second{0.286} & 0.302 & 0.288 & \second{0.298} & \cellcolor{tabhighlight}\best{0.279} & \cellcolor{tabhighlight}\best{0.293} & \second{0.281} & 0.331 & \best{0.280} & \best{0.312} & \cellcolor{tabhighlight}0.285 & \cellcolor{tabhighlight}\second{0.329} & 0.280 & 0.298 & \second{0.278} & \best{0.291} & \cellcolor{tabhighlight}\best{0.277} & \cellcolor{tabhighlight}\second{0.293} \\
& 720 & \second{0.363} & 0.352 & 0.364 & \second{0.349} & \cellcolor{tabhighlight}\best{0.360} & \cellcolor{tabhighlight}\best{0.347} & 0.345 & 0.382 & \best{0.339} & \best{0.359} & \cellcolor{tabhighlight}\second{0.345} & \cellcolor{tabhighlight}\second{0.379} & 0.356 & 0.349 & \second{0.355} & \best{0.342} & \cellcolor{tabhighlight}\best{0.355} & \cellcolor{tabhighlight}\second{0.343} \\
\cmidrule(lr){2-20}
\rowcolor{blue!15} \cellcolor{white} & \emph{Avg} & \second{0.266} & \second{0.285} & 0.271 & \second{0.285} & \best{0.260} & \best{0.276} & 0.265 & 0.317 & \best{0.264} & \best{0.299} & \cellcolor{tabhighlight}\second{0.265} & \cellcolor{tabhighlight}\second{0.310} & \second{0.258} & 0.280 & 0.259 & \second{0.275} & \cellcolor{tabhighlight}\best{0.256} & \cellcolor{tabhighlight}\best{0.274} \\
\midrule
\multirow{5}{*}{\rotatebox[origin=c]{90}{Electricity}} 
& 96  & \second{0.192} & \second{0.277} & 0.200 & 0.280 & \cellcolor{tabhighlight}\best{0.164} & \cellcolor{tabhighlight}\best{0.248} & \second{0.210} & 0.302 & 0.211 & \second{0.296} & \cellcolor{tabhighlight}\best{0.195} & \cellcolor{tabhighlight}\best{0.275} & \second{0.180} & \second{0.273} & 0.185 & 0.274 & \cellcolor{tabhighlight}\best{0.180} & \cellcolor{tabhighlight}\best{0.270} \\
& 192 & \second{0.201} & \second{0.289} & 0.206 & 0.290 & \cellcolor{tabhighlight}\best{0.183} & \cellcolor{tabhighlight}\best{0.264} & \second{0.210} & 0.305 & \second{0.210} & \second{0.299} & \cellcolor{tabhighlight}\best{0.194} & \cellcolor{tabhighlight}\best{0.278} & \second{0.187} & \second{0.280} & 0.192 & 0.281 & \cellcolor{tabhighlight}\best{0.187} & \cellcolor{tabhighlight}\best{0.276} \\
& 336 & \second{0.221} & \second{0.308} & 0.225 & 0.309 & \cellcolor{tabhighlight}\best{0.192} & \cellcolor{tabhighlight}\best{0.276} & \second{0.223} & 0.319 & \second{0.223} & \second{0.313} & \cellcolor{tabhighlight}\best{0.206} & \cellcolor{tabhighlight}\best{0.292} & \second{0.204} & \second{0.296} & 0.209 & 0.297 & \cellcolor{tabhighlight}\best{0.204} & \cellcolor{tabhighlight}\best{0.293} \\
& 720 & \second{0.267} & \second{0.344} & 0.271 & 0.345 & \cellcolor{tabhighlight}\best{0.263} & \cellcolor{tabhighlight}\best{0.337} & 0.258 & 0.350 & \second{0.257} & \second{0.342} & \cellcolor{tabhighlight}\best{0.244} & \cellcolor{tabhighlight}\best{0.329} & \second{0.246} & \second{0.328} & 0.252 & 0.331 & \cellcolor{tabhighlight}\best{0.243} & \cellcolor{tabhighlight}\best{0.325} \\
\cmidrule(lr){2-20}
\rowcolor{blue!15} \cellcolor{white} & \emph{Avg} & \second{0.220} & \second{0.305} & 0.226 & 0.306 & \best{0.201} & \best{0.281} & 0.225 & 0.319 & \second{0.222} & \second{0.310} & \best{0.210} & \best{0.293} & \second{0.204} & \second{0.294} & 0.209 & 0.296 & \cellcolor{tabhighlight}\best{0.204} & \cellcolor{tabhighlight}\best{0.291} \\
\midrule
\multirow{5}{*}{\rotatebox[origin=c]{90}{Traffic}} 
& 96  & \second{0.545} & \second{0.371} & 0.597 & 0.386 & \cellcolor{tabhighlight}\best{0.434} & \cellcolor{tabhighlight}\best{0.285} & 0.697 & 0.429 & \second{0.690} & \second{0.421} & \cellcolor{tabhighlight}\best{0.642} & \cellcolor{tabhighlight}\best{0.379} & \second{0.459} & 0.298 & 0.461 & \second{0.292} & \cellcolor{tabhighlight}\best{0.455} & \cellcolor{tabhighlight}\best{0.290} \\
& 192 & \second{0.560} & \second{0.380} & 0.595 & 0.386 & \cellcolor{tabhighlight}\best{0.469} & \cellcolor{tabhighlight}\best{0.309} & 0.647 & 0.407 & \second{0.639} & \second{0.399} & \cellcolor{tabhighlight}\best{0.595} & \cellcolor{tabhighlight}\best{0.361} & \second{0.469} & 0.301 & 0.470 & \second{0.296} & \cellcolor{tabhighlight}\best{0.465} & \cellcolor{tabhighlight}\best{0.295} \\
& 336 & \second{0.591} & \second{0.395} & 0.620 & 0.400 & \cellcolor{tabhighlight}\best{0.465} & \cellcolor{tabhighlight}\best{0.297} & 0.653 & 0.410 & \second{0.645} & \second{0.400} & \cellcolor{tabhighlight}\best{0.601} & \cellcolor{tabhighlight}\best{0.358} & \second{0.483} & 0.307 & 0.483 & \second{0.301} & \cellcolor{tabhighlight}\best{0.478} & \cellcolor{tabhighlight}\best{0.298} \\
& 720 & \second{0.657} & \second{0.427} & 0.686 & 0.433 & \cellcolor{tabhighlight}\best{0.568} & \cellcolor{tabhighlight}\best{0.370} & 0.694 & 0.429 & \second{0.682} & \second{0.417} & \cellcolor{tabhighlight}\best{0.647} & \cellcolor{tabhighlight}\best{0.397} & \second{0.517} & 0.326 & 0.518 & \second{0.319} & \cellcolor{tabhighlight}\best{0.512} & \cellcolor{tabhighlight}\best{0.315} \\
\cmidrule(lr){2-20}
\rowcolor{blue!15} \cellcolor{white} & \emph{Avg} & \second{0.588} & \second{0.393} & 0.624 & 0.401 & \best{0.484} & \best{0.315} & 0.673 & 0.419 & \second{0.664} & \second{0.409} & \best{0.621} & \best{0.374} & \second{0.482} & 0.308 & 0.483 & \second{0.302} & \cellcolor{tabhighlight}\best{0.478} & \cellcolor{tabhighlight}\best{0.300} \\
\bottomrule
\end{tabular}
}

\vspace{3pt}
\noindent\parbox{\linewidth}{\scriptsize \textit{Note}: Average results (\emph{Avg}) are calculated over $T \in \{96, 192, 336, 720\}$. Evaluation for the \best{best} and \second{second-best} results are independently conducted within the \textbf{iTransformer}, the \textbf{DLinear}, and the \textbf{PatchTST}. "Ours" refers to different enhancements applied to the respective base models.}
\end{table}

\section{Complexity Analysis and Edge Selection via Structural Discrepancy}
\label{app:complexity_and_selection}

In the main text, we introduced CvLoss as a lightweight plug-in loss function that regularizes the structural consistency among variables. While a fully connected cross-variable graph effectively captures comprehensive spatiotemporal interactions, calculating the loss over all possible variable pairs yields a quadratic computational complexity of $\mathcal{O}(P^2 D^2)$. This poses a significant scalability challenge for high-dimensional multivariate systems. To address this bottleneck, we introduce a targeted edge selection mechanism based on structural discrepancy to significantly reduce the computational complexity.

\paragraph{Status of this mechanism.} Three edge constructions appear in this paper, and it matters which produced which result. The complete cross-variable graph of Section~\ref{cvl}, with no sampling and no selection, produced \emph{every} accuracy number in the main text and in the appendix. The random edge sampling of Appendix~\ref{sec:comp_app} and the top-$K$ selection described below are scalability devices for the $O(P^2D^2)$ edge count, and are evaluated only in the efficiency studies of Figures~\ref{fig:cvl_runtime} and~\ref{fig:cvl_gain}. The synchronous and asynchronous supports compared in Section~\ref{sec:ablation} are ablations of the topology, not the default. The distinction is important for one further reason: the top-$K$ rule below is the only construction in the paper that reads the ground-truth correlation matrix $\mathbf{C}(Y)$, and it does so during training only, on the training split. No configuration uses any statistic of the future window at test time.

\subsection{Targeting Maximum Structural Discrepancy}

As established in our theoretical analysis, the standard MSE fundamentally ignores latent cross-variable relationships in its design. Consequently, models trained purely with MSE often produce predictions with significant structural discrepancies. This phenomenon is visually evident in the cross-variable correlation heatmaps presented in our motivating examples (e.g., Figure~\ref{fig:idea}).

The heatmaps intuitively represent the structural relationships between variables, where each point corresponds to the correlation between a specific variable pair. By comparing the predicted correlation heatmap $\mathbf{C}(\hat{Y})$ with the ground-truth correlation heatmap $\mathbf{C}(Y)$, we observe clear differences indicating where the model fails to capture the true co-evolutionary dynamics.

To optimize computational efficiency, we do not need to compute the first-order difference loss over all points in the heatmap. Instead, we can selectively compute the loss exclusively for those variable pairs that exhibit the largest discrepancy between the prediction and the ground truth. Specifically, we measure the structural error for each variable pair $(i, j)$ as $\Delta_{ij} = |\mathbf{C}(\hat{Y})_{ij} - \mathbf{C}(Y)_{ij}|$. We then select the top-$K$ pairs with the maximum $\Delta_{ij}$ to construct our edge set $\mathcal{E}$. 

By calculating the CvLoss strictly on these most severely misaligned pairs, we force the model to correct its most critical structural failures. This targeted approach dramatically reduces the computational complexity from $\mathcal{O}(D^2)$ to a manageable $\mathcal{O}(K)$, demonstrating that CvLoss can be efficiently scaled to high-dimensional multivariate time series forecasting without suffering from quadratic computational overhead.

\subsection{Diagnostic study: does the objective track cross-variable structure?}
\label{app:diagnostic_weights}

\textbf{This subsection is a diagnostic and does not describe the deployed method.} Everywhere else in this paper, the two loss terms are combined by the fixed convex weight $\alpha\in(0,1)$ of Eq.~\eqref{eq:cvl_alpha}, selected on validation as described in Appendix~\ref{app:tuning}. We therefore ask a separate question: if the relative importance of the structural term were free to adapt, would it track how much cross-variable structure a dataset actually contains? To answer it, we run an auxiliary experiment in which the two terms carry \emph{unnormalised} coefficients learned during training. The quantities in Table~\ref{tab:learned_weights_corr} are those auxiliary coefficients. They are not values of $\alpha$, they are not constrained to $(0,1)$, and they produced no result reported anywhere else in the paper; we report them only for the correspondence described below.

With that caveat, Table~\ref{tab:learned_weights_corr} places the converged coefficients alongside the statistical properties of each dataset, specifically the average Concordance Correlation Coefficient (CCC) and Pearson Correlation Coefficient.

We observe a clear positive correspondence between the channel-wise correlation (CCC/Pearson), the dimensionality ($D$), and the converged coefficient on the structural term. For datasets with fewer variables and lower cross-channel correlation, such as the ETT series ($D=7$, CCC $\approx 0.14$--$0.18$), the optimization converges to relatively small and comparable coefficients for both MSE and the structural penalty, indicating a balanced reliance.

Conversely, for high-dimensional spatial-temporal graphs exhibiting strong internal consistency, such as the PEMS traffic datasets ($D \ge 170$, CCC $> 0.77$) and Solar ($D=137$, CCC $> 0.91$), the converged coefficient on the structural term grows substantially. This is the behaviour Theorem~\ref{thm:spatiotemporal_nll} predicts: the objective gap it identifies is larger precisely when the residual precision matrix is further from spherical, so a dataset with a rich, highly synchronized cross-variable geometry has more for the structural term to correct. The same pattern is visible in the deployed method, where the largest accuracy gains occur on the most correlated datasets (PEMS07, CCC $0.78$, $-11.3\%$ MSE; Solar, CCC $0.91$, $-4.4\%$) and the smallest on the least correlated (the ETT series, CCC $0.14$--$0.18$). The practical implication is that channel correlation can be measured before training to anticipate whether the term will help.

\begin{table}[ht]
\centering
\caption{\textbf{Diagnostic study only --- these are not values of $\alpha$.} Converged coefficients of an auxiliary experiment in which the two loss terms carry \emph{unnormalised} learned weights, reported alongside the channel-wise correlation of each dataset. They are unconstrained in magnitude, are not the convex weight $\alpha\in(0,1)$ of Eq.~\eqref{eq:cvl_alpha}, and produced none of the accuracy results in this paper; see Appendix~\ref{app:diagnostic_weights}. The datasets are ordered by their number of variables ($D$).}
\label{tab:learned_weights_corr}
\renewcommand{\arraystretch}{1.15}
\setlength{\tabcolsep}{6pt}
\small
\resizebox{\linewidth}{!}{
\begin{tabular}{l r c c c c}
\toprule
\textbf{Dataset} & \textbf{Channels ($D$)} & \textbf{Avg CCC} & \textbf{Avg Pearson} & \textbf{Weight for MSE} & \textbf{Weight for Add Loss} \\
\midrule
ETTh1   & 7   & 0.1765 & 0.2516 & 1.2706  & 1.2657  \\
ETTh2   & 7   & 0.1449 & 0.4001 & 1.2080  & 1.2002  \\
ETTm1   & 7   & 0.1804 & 0.2601 & 2.3572  & 1.9831  \\
ETTm2   & 7   & 0.1436 & 0.3952 & 2.1759  & 2.0209  \\
Weather & 21  & 0.2079 & 0.3520 & 2.4887  & 2.2864  \\
Solar   & 137 & 0.9115 & 0.9157 & 6.9919  & 3.4617  \\
PEMS08  & 170 & 0.7799 & 0.8161 & 17.1380 & 4.2828  \\
PEMS04  & 307 & 0.7739 & 0.7858 & 18.0127 & 4.3956  \\
ECL     & 321 & 0.4619 & 0.4965 & 8.1102  & 3.0385  \\
PEMS03  & 358 & 0.8206 & 0.8431 & 21.7118 & 4.8293  \\
Traffic & 862 & 0.5289 & 0.5643 & 4.6804  & 3.0653  \\
PEMS07  & 883 & 0.7810 & 0.8066 & 22.8313 & 4.9570  \\
\bottomrule
\end{tabular}
}
\end{table}

\section{Broader Impacts}

CvLoss is a general training objective for multivariate time series forecasting. Its potential positive impact is to improve forecasting reliability in applications such as energy management, traffic monitoring, weather analysis, and other operational systems where variables evolve jointly. Better forecasts may support more efficient resource allocation and planning. We do not identify direct negative societal impacts specific to CvLoss, since the method does not introduce new sensitive data, human-subject data, generative capabilities, or deployment-specific decision rules. Any societal risk would mainly come from downstream misuse or misinterpretation of forecasting models in application domains, rather than from the proposed loss itself.

\section{Statement on the Use of Large Language Models (LLMs)}
In accordance with the conference guidelines, we disclose our use of Large Language Models (LLMs) in the preparation of this paper as follows:

We used LLMs (specifically, OpenAI GPT-5.2, GPT-5.4 and Google Gemini 3) \emph{solely for checking grammar errors and improving the readability of the manuscript}. The LLMs \emph{were not involved in research ideation, the development of research contributions, experiment design, data analysis, or interpretation of results}. All substantive content and scientific claims were created entirely by the authors. The authors have reviewed all LLM-assisted text to ensure accuracy and originality, and take full responsibility for the contents of the paper. The LLMs are not listed as an author.

\end{document}